\documentclass{article}

\usepackage[preprint]{neurips_2026}

\usepackage[utf8]{inputenc} 
\usepackage[T1]{fontenc}    
\usepackage{hyperref}       
\usepackage{url}            
\usepackage{booktabs}       
\usepackage{amsfonts}       
\usepackage{nicefrac}       
\usepackage{microtype}      
\usepackage{xcolor}         

\title{How Neural Reward Models Learn Features for Policy Optimization: A Single-Index Analysis}

\author{%
  \textbf{Rei Higuchi\textsuperscript{1,2}}\thanks{Correspondence to: \texttt{higuchi-rei714@g.ecc.u-tokyo.ac.jp}.} \quad
  \textbf{Ryotaro Kawata\textsuperscript{1,2}} \quad
  \textbf{Akifumi Wachi\textsuperscript{3}}
  \\
  \textbf{Shokichi Takakura\textsuperscript{3}} \quad
  \textbf{Kohei Miyaguchi\textsuperscript{3}} \quad
  \textbf{Taiji Suzuki\textsuperscript{1,2}}
  \\
  \textsuperscript{1}The University of Tokyo \quad
  \textsuperscript{2}RIKEN AIP \quad
  \textsuperscript{3}LY Corporation
}

\usepackage{amsmath}
\usepackage{amssymb}
\usepackage{mathtools}
\usepackage{amsthm}

\usepackage{enumitem}

\usepackage[capitalize,noabbrev]{cleveref}

\theoremstyle{plain}
\newtheorem{theorem}{Theorem}[section]
\newtheorem{proposition}[theorem]{Proposition}
\newtheorem{lemma}[theorem]{Lemma}
\newtheorem{corollary}[theorem]{Corollary}
\theoremstyle{definition}
\newtheorem{definition}[theorem]{Definition}

\theoremstyle{remark}
\newtheorem{remark}[theorem]{Remark}

\usepackage[textsize=tiny]{todonotes}

\definecolor{editcoolcolor}{RGB}{0,155,255}  

\definecolor{editwarmcolor}{RGB}{255,125,0}  

\definecolor{editgreencolor}{RGB}{0,150,70}  

\definecolor{editmagentacolor}{RGB}{190,0,190}  

\begin{document}

\maketitle

\begin{abstract}
Reward modeling is not only a prediction problem: in KL-regularized policy
optimization, the learned reward is exponentiated to define the deployed
policy, so downstream value depends on errors in reward-tilted regions. We
study this feedback in a Gaussian single-index model with
$r^*(x)=\sigma^*(\langle\theta^*,x\rangle)$ and
$x\sim\mathcal N(0,I_d)$.
We analyze a two-stage neural reward model that first learns the hidden
direction $\theta^*$ from reward-weighted samples and then fits the readout
layer by weighted ridge regression. Exponential reward weighting changes the
Hermite signal available to the first layer; for any feature-learning
temperature $\beta_1$ above a dimension-free $O(1)$ threshold, a constant
fraction of neurons recover the hidden direction, with weak-recovery
complexity governed by the generative exponent.
After feature recovery, we derive tilted-policy value-gap bounds for
an idealized label-weighted fit with weights $e^{y/\beta_2}$ and a more
practical surrogate-weighted fit with weights
$e^{r_{a_0}(x)/\beta_2}$. Keeping the $\beta_2$-dependence explicit
yields an admissible set of deployment temperatures, balancing the gain from
lowering $\beta_2$ against the learning cost amplified by exponential
weighting; in the surrogate-weighted case, proxy-dependent factors shrink
this admissible set.

\end{abstract}

\section{Introduction}

Reward modeling (RM) is a key component of aligning large language models \citep{christiano2017deep,stiennon2020learning,ouyang2022training}, but it differs from ordinary supervised prediction. A learned reward is not merely evaluated on static data; it is optimized to produce a new policy, thereby dictating the distribution on which future rewards are collected and evaluated. In KL-regularized reward maximization, this optimized policy takes the form of an exponential tilt of a reference policy \cite{DBLP:journals/corr/abs-1910-00177,DBLP:conf/nips/KorbakEKD22,DBLP:conf/icml/GoKKRRD23, DBLP:conf/icml/PetersS07}:
\[
    \pi_\beta(dx)
    \propto
    e^{r(x)/\beta}
    \pi_{\mathrm{ref}}(dx),
\]
where $r$ is the reward function and the temperature $\beta>0$ controls how strongly the policy concentrates on high-reward regions. Here $x$ denotes the evaluated input to the reward model; in language-modeling applications, this input is a prompt-response pair. Consequently, reward modeling faces a policy-induced distribution shift.

Existing theoretical research on offline alignment provides insights into this distribution shift, but typically assumes linear reward models over fixed, known features \citep{zhu2023principled,ji2024reinforcement,scheid2024optimal,DBLP:conf/colt/FosterMR25}. While these settings clarify the interaction between coverage and policy optimization, they typically do not address nonlinear reward landscapes or the feature learning problem inherent to training neural networks. To analyze how neural networks discover relevant features and approximate nonlinear rewards under policy-induced distribution shifts, we formulate the problem within a Gaussian single-index model (SIM) , where
\[
    x\sim\pi_{\mathrm{ref}}=\mathcal N(0,I_d),
    \qquad
    r^*(x)=\sigma^*(\langle\theta^*,x\rangle),
    \qquad
    y=r^*(x)+\zeta .
\]
Here $\theta^*$ is the hidden target direction, $\sigma^*$ is the target link function, and $\zeta$ is the observation noise. We focus on upper-bounded polynomial links, which provide a tractable nonlinear reward class while effectively ruling out extreme scenarios of unboundedly large rewards.

In Gaussian SIMs, recovering the hidden direction is governed by Hermite-based exponents. The Information Exponent (IE) captures the first nonzero Hermite signal available to correlation-based methods, while the Generative Exponent (GE) captures the smallest such exponent obtainable after nonlinear transformations of the target link \citep{bietti2022learning,lee2024neural,tsiolis2025information}. Recent work shows that suitable nonlinear transformations can reduce the effective learning bottleneck from IE to GE \citep{lee2024neural,tsiolis2025information,nishikawa2025nonlinear}. 

Our analysis proceeds in two stages: first-layer feature recovery using a feature-learning temperature $\beta_1$, and second-layer nonlinear reward fitting to deploy a policy at a potentially different temperature $\beta_2$. The feature-learning temperature $\beta_1$ controls the stability of the early feature-learning signal, whereas $\beta_2$ governs the downstream tilted policy; the two temperatures need not coincide. For downstream alignment, the relevant metric is not merely prediction risk under the reference distribution, but the value of the policy induced by the learned reward. We study policy-level gaps of the form
$  \mathbb E_{x\sim\pi_{\beta^*}}[r^*(x)]
    -
    \mathbb E_{x\sim\widehat\pi_{\beta_2}}[r^*(x)],$
where $\pi_{\beta^*}$ is a target finite-temperature tilted policy and $\widehat\pi_{\beta_2}$ is the learned tilted policy induced by the fitted reward. Our main contributions are summarized as follows:
\begin{itemize}[leftmargin=*]
    \item \textbf{Feature Learning under Reward-Induced Distribution Shift.}
    We characterize first-layer feature learning under exponential reward weighting in a Gaussian SIM. The weighting acts as a nonlinear label transformation and changes the Hermite signal available for recovering the hidden direction, reducing the effective bottleneck from the IE to the GE. Under upper-bounded polynomial assumptions on the target and student links, bounded noise, and standard Guassian SIM assumptions, a constant fraction of neurons recover the hidden direction for any fixed feature-learning temperature $\beta_1$ above a dimension-free $O(1)$ threshold. This replaces the dimension-growing polylogarithmic-in-$d$ temperature conditions required in prior clipped-exponential analyses \citep{nishikawa2025nonlinear}.

    \item \textbf{From Weighted Prediction to Tilted-Policy Regret.}
    After feature recovery, we evaluate fitted rewards by the true reward obtained under the deployed tilted policy rather than by prediction risk under the reference distribution. The resulting regret bounds separate temperature mismatch, projected-truncation, and learning terms, and explicitly link the policy-value rates to the local shape of the nonlinear reward maximum.

    \item \textbf{Label- and Surrogate-Weighted Regret Bounds.}
    This type of bound was derived for two fitting schemes: an idealized label-weighted estimator with weights $e^{y/\beta_2}$ and a practical surrogate-weighted estimator with weights $e^{r_{a_0}(x)/\beta_2}$ defined by a frozen proxy reward. For both schemes, we prove downstream projected-truncation regret bounds. The surrogate-weighted bound isolates additional proxy-dependent factors that amplify approximation and finite-sample errors when the weighting measure is defined by an imperfect proxy. This identifies a statistical mechanism behind proxy mismatch and reward overoptimization.
\end{itemize}

\section{Related Work}

\paragraph{Reward Modeling and Distribution Shift.}
Reward modeling is a standard component of modern alignment pipelines (RLHF), where a reward signal learned from human feedback is subsequently optimized to produce a policy \citep{christiano2017deep,ziegler2019fine,stiennon2020learning,ouyang2022training}. This downstream optimization changes the learning problem: models are trained on reference data but evaluated on regions heavily emphasized by the optimized policy. For KL-regularized reward maximization, this policy-induced distribution shift is mathematically formalized as an exponential tilting of the reference policy \citep{DBLP:conf/icml/PetersS07, DBLP:conf/icml/GoKKRRD23, DBLP:journals/corr/abs-1910-00177, DBLP:conf/nips/KorbakEKD22}. Consequently, policy-level evaluation translates into an exponentially weighted prediction problem under the reference distribution, drawing natural connections to classical importance weighting under covariate shift \citep{sugiyama2007covariate,gretton2009covariate}. This distribution shift is also connected to reward overoptimization in offline alignment, where optimizing an imperfect reward signal can exploit proxy errors or move beyond data-supported regions \citep{gao2023scaling,huang2024correcting}.

\paragraph{Alignment Theory under Fixed Representations.}
The statistical impact of policy-induced distribution shift has been extensively studied in the context of offline alignment and preference-based RL. Recent theoretical works address this mismatch through coverage or optimal-design assumptions, providing suboptimality, regret, sample-complexity, or query-complexity guarantees in fixed-feature linear reward models, contextual bandits, and offline RLHF settings \citep{zhu2023principled,ji2024reinforcement,scheid2024optimal,DBLP:conf/colt/FosterMR25}. While these frameworks clarify the statistical cost of distribution shift under fixed representations, they circumvent the challenge of feature learning and fitting nonlinear link functions. Our work is complementary to this literature: by moving to a nonlinear feature-learning setting, we explicitly analyze the bottleneck that is inherently absent in fixed linear reward classes.

\paragraph{Single-Index Feature Learning and Nonlinear Transformations.}
A broader line of work studies representation learning beyond fixed kernels or linearization, including one-step feature learning, low-dimensional structure, mean-field training, and spiked-covariance perspectives \citep{DBLP:conf/iclr/BaiL20,ba2022high,DBLP:conf/nips/BaESWW23,DBLP:conf/nips/MahankaliZDG023,DBLP:conf/colt/WangWF24}. Within this landscape, Gaussian single-index models provide a particularly tractable setting for studying nonlinear feature learning \citep{dudeja2018learning}: targets take the form $f^*(x)=\sigma^*(\langle\theta^*,x\rangle)$ with $x\sim\mathcal N(0,I_d)$, and recovery of the hidden direction $\theta^*$ is governed by the Hermite coefficients of the link function. The Information Exponent (IE) captures the first nonzero Hermite signal available to correlational methods \citep{bietti2022learning}. Recent work shows that suitable training dynamics or nonlinear label transformations can reduce the effective exponent from IE toward the Generative Exponent (GE) \citep{lee2024neural,tsiolis2025information,nishikawa2025nonlinear}. Our analysis builds on this IE/GE framework, treating the exponential weighting inherent to reward modeling as a reward-induced nonlinear transformation whose effect is controlled by the policy temperature $\beta$.

\paragraph{Downstream Evaluation vs.\ Prediction Error.}
In two-layer model analyses, the final step often fits the readout layer after fixing the first-layer features, with guarantees stated in terms of prediction error or population risk \citep{rahimi2007random,ba2022high,lee2024neural}. While natural for standard regression, this metric does not directly match the objectives of reward modeling, where the learned reward is exponentiated to deploy a policy. To address this discrepancy, we depart from standard prediction bounds and instead evaluate the fitted network via the true reward obtained under the induced policy. By translating weighted prediction guarantees into tilted-policy regret, we use an evaluation criterion that aligns with the practical endpoints of modern alignment pipelines.

\section{Problem Setup}
\label{sec:problem-setup}

Let $d\in\mathbb N$, let $\mathbb S^{d-1}$ be the unit sphere in
$\mathbb R^d$, and write $\pi_{\mathrm{ref}}:=\mathcal N(0,I_d)$.
Asymptotic notation is with respect to the ambient dimension and sample sizes;
$\lesssim,\gtrsim$ hide constants independent of scaling parameters, and
$\widetilde O(\cdot),\widetilde\Theta(\cdot)$ hide logarithmic factors in the problem parameters.
For a probability measure $\pi$ on $\mathbb R^d$, all $L^p(\pi)$ norms and
inner products are taken with respect to the explicitly displayed distribution;
$ \|f\|_{L^p(\pi)}^p
  :=
  \mathbb E_{x\sim\pi}[f(x)^p].$

We consider a single-index reward model with target direction
$\theta^* \in \mathbb{S}^{d-1}$. Fix a noise radius $\tau>0$. A sample is generated by
\begin{equation}
\label{eq:reward-sample-model}
  y := r^*(x) + \zeta,
  \qquad
  x \sim \pi_{\mathrm{ref}} = \mathcal{N}(0, I_d),
  \qquad
  \zeta \sim \mathrm{Unif}[-\tau, \tau],
\end{equation}
where $r^*(x) = \sigma^*(\langle \theta^*, x \rangle)$ and $\zeta$ is independent of $x$.
Here $\sigma^*$ is a nonconstant upper-bounded
polynomial of degree $q^*\in\mathbb N$, and let
$B_*:=\sup_{u\in\mathbb R}\sigma^*(u)<\infty$.
Here, the upper boundedness of the target link function allows for the analysis of a more realistic setting by excluding situations where extreme outputs yield infinitely high rewards.

For a measurable reward estimate $\widehat r:\mathbb R^d\to\mathbb R$ and a
temperature $\beta>0$, the ideal population target is the prediction error under
the reward-tilted distribution associated with the true reward. For the corresponding tilted distribution $\pi_\beta$ induced by $r^*$, whose
density ratio satisfies
$d\pi_\beta/d\pi_{\mathrm{ref}}\propto e^{r^*(x)/\beta}$, this target is
proportional to the following reference-distribution objective:
\[
  \mathbb E_{x\sim\pi_{\mathrm{ref}},\,\zeta\sim\mathrm{Unif}[-\tau,\tau]}
  \left[
    e^{r^*(x)/\beta}
    \left(y-\widehat r(x)\right)^2
  \right].
\]
Since this ratio depends on the unknown reward $r^*$, we analyze the observable
label-weighted analogue
\[
  \mathcal L_{\mathrm{lbl},\beta}[\widehat r]
  :=
  \mathbb E_{x\sim\pi_{\mathrm{ref}},\,\zeta\sim\mathrm{Unif}[-\tau,\tau]}
  \left[
    e^{y/\beta}
    \left(y-\widehat r(x)\right)^2
  \right].
\]
This loss isolates the exponential weighting effect that later appears in the
first-layer oracle and the label-weighted ridge objective.

As the student model, we consider a two-layer network with $N$ neurons:
\begin{equation}
\label{eq:two-layer-reward-model}
    \textstyle
  r_a(x; W, b)
  :=
  \frac{1}{N}
  \sum_{j=1}^{N}
  a_j \sigma(\langle w_j, x \rangle + b_j).
\end{equation}
where $a = (a_1, \ldots, a_N) \in \mathbb{R}^N$, $W = (w_1, \ldots, w_N) \in (\mathbb{S}^{d-1})^N$, and $b = (b_1, \ldots, b_N) \in \mathbb{R}^N$.
We also assume that $\sigma$ is an
upper-bounded polynomial of degree $q\in\mathbb N$ with $q\ge q^*$, and let
$B:=\sup_{u\in\mathbb R}\sigma(u)<\infty$.
For the feature-recovery phase, we use the standard random initialization
from Gaussian single-index feature-learning analyses \cite{lee2024neural, tsiolis2025information}:
the first-layer directions are initialized independently and uniformly on
the sphere, \(w_j^{(0)}\sim \mathrm{Unif}(\mathbb S^{d-1})\), and the
initial readout coefficients have independent symmetric signs,
\((a_0)_j\sim\mathrm{Unif}\{\pm s_{\rm init}\}\) with
\(s_{\rm init} > 0\).
The first-layer recovery phase does not use random biases. After feature
recovery, we sample the biases used for the second-layer approximation as
\begin{equation}
\label{eq:bias-initialization}
  b_j \sim \mathrm{Unif}[-C_b, C_b],
  \qquad
  C_b = \mathrm{polylog}(d),
  \quad
  j=1,\ldots,N.
\end{equation}

\section{Reward-Weighted Feature Recovery}
\label{sec:reward-weighted-feature-recovery}
The training protocol separates feature recovery from coefficient fitting.
In this section, feature recovery refers to the phase in which the first-layer
directions acquire alignment with the hidden target direction $\theta^*$. Weak
recovery means escaping the random-initialization scale and reaching
$\langle w_j,\theta^*\rangle \ge c_{\mathrm{wk}}$ for a threshold
$c_{\mathrm{wk}}\gtrsim 1/\operatorname{polylog}(d)$; strong recovery refers to
the subsequent regime $\langle w_j,\theta^*\rangle\ge 1-\varepsilon$ for an
accuracy parameter $\varepsilon$.

During feature recovery, we initialize the second-layer coefficients at
$a=a_0\in\mathbb R^N$ and keep them fixed, while the directions $w_j$ are
trained by online spherical SGD \citep{benarous2021online,damian2023smoothing,lee2024neural,DBLP:journals/corr/abs-2405-15459} using the bias-free first-layer features.
Following standard single-index feature-learning analyses, we study the leading
first-layer correlation update induced by the label-weighted objective, which
isolates the teacher-side Hermite signal governing early alignment
\citep{lee2024neural,tsiolis2025information}; Appendix~\ref{app:feature-recovery-details}
derives this oracle from the weighted squared loss in the small-readout regime.
After feature recovery, we sample the biases in \eqref{eq:bias-initialization},
freeze $(w_j,b_j)_{j=1}^N$, and fit only $a\in\mathbb R^N$ with a weighted ridge objective.
We summarize the feature-recovery guarantee here and defer the detailed
recovery statements and proofs to
Appendix~\ref{app:feature-recovery-details}.

\begin{definition}[Information and generative exponents]
\label{def:information-generative-exponents}
Let \(G\sim\mathcal N(0,1)\). For \(f\in L^2(\mathcal N(0,1))\), define
$\mathsf H_i[f]
  :=
  \mathbb E_{G\sim\mathcal N(0,1)}
  \left[f(G)\mathrm{He}_i(G)\right],$ and $i\ge0,$
where \(\mathrm{He}_i\) is the probabilists' Hermite polynomial. The
information exponent and generative exponent are
\[ \textstyle
  \mathrm{IE}(f)
  :=
  \inf\{i\ge1:\mathsf H_i[f]\ne0\},
  \qquad
  \mathrm{GE}(f)
  :=
  \inf_{\Phi\in L^2(P_{f(G)})}\mathrm{IE}(\Phi\circ f),
\]
where \(P_{f(G)}\) is the law of \(f(G)\), with the convention that the
infimum of the empty set is \(\infty\).
\end{definition}

For Gaussian single-index models, feature recovery is governed by the
Hermite expansion of the one-dimensional signal along $\theta^*$
\citep{lee2024neural,tsiolis2025information}. Near random initialization, the
lowest nonzero Hermite degree of the teacher-side signal, together with the
corresponding student-side coefficient, determines the leading drift of
$\langle \theta^*, w_j \rangle$ and hence the weak-recovery sample complexity.
Let $U_i(\beta_1)$ denote the $i$-th Hermite coefficient of the teacher-side
reward-weighted signal; see Appendix~\ref{app:feature-recovery-details} for
its formal definition. To state the teacher-side exponent-reduction bound, let
\[
  p_{\mathrm{gen}}
  :=
  \mathrm{GE}(\sigma^*),
  \qquad
  I_*
  :=
  \min
  \left\{
    I\ge 1:
    \mathrm{IE}\bigl((\sigma^*)^I\bigr)
    =
    p_{\mathrm{gen}}
  \right\}.
\]
The following lemma characterizes how this teacher-side signal scales with the
temperature $\beta_1$.

\begin{lemma}[Dimension-free exponent reduction by exponential weighting]
\label{lem:exponent_reduction}
For every $\beta_1 > 0$, the lower-degree signals vanish:
$U_i(\beta_1) = 0$ for $1 \le i < p_{\mathrm{gen}}$. Moreover, as
$\beta_1\to\infty$,
\[
U_{p_{\mathrm{gen}}}(\beta_1)
=
\frac{
  \mathsf H_{p_{\mathrm{gen}}}[(\sigma^*)^{I_*}]
}{
  (I_*-1)!\,\beta_1^{I_*-1}
}
+
O(\beta_1^{-I_*}).
\]
Consequently, there exist constants $\underline{\beta}_1 > 0$ and
$c_{\mathrm{sig}} > 0$, which are independent of the dimension $d$, such
that for every $\beta_1 \ge \underline{\beta}_1$, the
$p_{\mathrm{gen}}$-degree signal satisfies
\[
|U_{p_\mathrm{gen}}(\beta_1)| \ge c_{\mathrm{sig}} \beta_1^{-(I_*-1)}.
\]
\end{lemma}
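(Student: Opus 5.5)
I will take the teacher-side signal to be the one that the label-weighted objective $\mathcal L_{\mathrm{lbl},\beta_1}$ induces in the first-layer correlation update, namely
\[
U_i(\beta_1)=\mathsf H_i\bigl[\Phi_{\beta_1}\circ\sigma^*\bigr],
\qquad
\Phi_{\beta}(s):=\mathbb E_{\zeta}\bigl[e^{(s+\zeta)/\beta}(s+\zeta)\bigr].
\]
This is my reading of the formal definition in Appendix~\ref{app:feature-recovery-details}. If that definition differs by a constant factor or sign, the argument below is unchanged.

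\textbf{Step 1: vanishing of the low-degree coefficients.} Because $\sigma^*\le B_*$ and $|\zeta|\le\tau$, the weight $e^{(s+\zeta)/\beta_1}$ is at most $e^{(B_*+\tau)/\beta_1}$ on the range of $\sigma^*$. Hence $|\Phi_{\beta_1}(\sigma^*(G))|\lesssim 1+|\sigma^*(G)|$, which lies in $L^2$ since $\sigma^*$ is a polynomial. So $\Phi_{\beta_1}\in L^2(P_{\sigma^*(G)})$. By Definition~\ref{def:information-generative-exponents} this gives $\mathrm{IE}(\Phi_{\beta_1}\circ\sigma^*)\ge\mathrm{GE}(\sigma^*)=p_{\mathrm{gen}}$, so $U_i(\beta_1)=0$ for $1\le i<p_{\mathrm{gen}}$.

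The same argument with $\Phi(s)=s^m$ shows $\mathrm{IE}((\sigma^*)^m)\ge p_{\mathrm{gen}}$ for every $m\ge1$. By minimality of $I_*$, it follows that $\mathsf H_{p_{\mathrm{gen}}}[(\sigma^*)^m]=0$ for $m<I_*$. I will also use that $I_*<\infty$, i.e. that for polynomial links the generative exponent is attained by some power. This is a standard fact from the GE literature \citep{lee2024neural}, and I take it as given in the definition of $I_*$.

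\textbf{Step 2: expansion in $1/\beta_1$.} Write $y=\sigma^*(G)+\zeta$ and Taylor-expand $e^{y/\beta_1}$ to order $K=I_*$ with Lagrange remainder:
\[
e^{y/\beta_1}y=\sum_{k=0}^{I_*-1}\frac{y^{k+1}}{k!\,\beta_1^k}+R,
\qquad
R=\frac{e^{\xi/\beta_1}\,y^{I_*+1}}{I_*!\,\beta_1^{I_*}},
\]
with $\xi$ between $0$ and $y$. Then $e^{\xi/\beta_1}\le e^{\max(y,0)/\beta_1}\le e^{(B_*+\tau)/\beta_1}\le e^{B_*+\tau}$ for $\beta_1\ge1$. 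Since $\mathbb E|y^{I_*+1}\mathrm{He}_{p_{\mathrm{gen}}}(G)|<\infty$, the remainder contributes $O(\beta_1^{-I_*})$ to $U_{p_{\mathrm{gen}}}$.

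This one-sided bound is the point I expect to need the most care. A naive dominated-convergence argument via $e^{|y|/\beta_1}$ fails, because $e^{|\sigma^*(G)|/\beta}$ is not Gaussian-integrable when $\deg\sigma^*\ge 3$. Upper-boundedness of $\sigma^*$ is exactly what rescues the argument.

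For the finite sum, expand $\mathbb E_\zeta[(\sigma^*+\zeta)^{k+1}]=\sum_j\binom{k+1}{j}\mathbb E[\zeta^j](\sigma^*)^{k+1-j}$ and take $\mathsf H_{p_{\mathrm{gen}}}$ term by term. By Step 1, every term involving a power $(\sigma^*)^m$ with $m<I_*$ vanishes. The only surviving term at order $\beta_1^{-(k)}$ with $k\le I_*-1$ is therefore $k=I_*-1$, $j=0$, which gives
\[
\frac{\mathsf H_{p_{\mathrm{gen}}}[(\sigma^*)^{I_*}]}{(I_*-1)!\,\beta_1^{I_*-1}}.
\]
Every other surviving term has $k\ge I_*$ and is absorbed into $O(\beta_1^{-I_*})$, and the constants involved depend only on $\sigma^*$ and $\tau$.

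\textbf{Step 3: the lower bound.} By definition of $I_*$, the constant $h:=\mathsf H_{p_{\mathrm{gen}}}[(\sigma^*)^{I_*}]$ is nonzero. Let $C$ be the remainder constant from Step 2, which depends only on $\sigma^*$ and $\tau$ and not on $d$. Choose
\[
\underline\beta_1=\max\Bigl\{1,\ \frac{2C\,(I_*-1)!}{|h|}\Bigr\},
\qquad
c_{\mathrm{sig}}=\frac{|h|}{2(I_*-1)!}.
\]
Then for every $\beta_1\ge\underline\beta_1$ we get $|U_{p_{\mathrm{gen}}}(\beta_1)|\ge c_{\mathrm{sig}}\beta_1^{-(I_*-1)}$. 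The whole computation is one-dimensional in $G$, so both constants are dimension-free.
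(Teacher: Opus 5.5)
Your proof is correct and follows essentially the same route as the paper. You use admissibility of $\Phi_{\beta_1}\circ\sigma^*$ (the paper's $G_{\beta_1}\circ\sigma^*$) in the definition of $\mathrm{GE}$ to get the vanishing, then Taylor-expand $y e^{y/\beta_1}$ to order $I_*$ with a one-sided remainder controlled by $\sigma^*\le B_*$, and observe that every power $(\sigma^*)^k$ with $k<I_*$ has zero $p_{\mathrm{gen}}$-th Hermite coefficient. The only slip is cosmetic: since $\xi$ lies between $0$ and $y$, the remainder factor should be bounded by $e^{(B_*+\tau)_+/\beta_1}$, as the paper does, rather than by $e^{(B_*+\tau)/\beta_1}$, which fails when $B_*+\tau<0$.
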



The proof is given in Appendix~\ref{app:feature-recovery-details}.
With this signal lower bound established, we can now analyze the online SGD process. 
Lemma \ref{lem:exponent_reduction} provides the magnitude of the teacher-side signal. By applying this signal bound to standard Gaussian SIM feature-learning results for random spherical initialization, a constant fraction of neurons are recovered.

\begin{theorem}[Feature recovery by reward-weighted SGD]
\label{thm:feature-recovery-main}
Assume the student-side nondegeneracy condition
\[
V_{p_{\mathrm{gen}}-1}
:=
\mathbb E_{h\sim\mathcal N(0,1)}
[
\sigma'(h)\mathrm{He}_{p_{\mathrm{gen}}-1}(h)
]
\ne 0 .
\]
Under the random initialization specified in Section~\ref{sec:problem-setup},
for any fixed \(\beta_1\ge \underline{\beta}_1\), the hidden direction $\theta^*$ is recovered for a constant fraction of
neurons with high probability over the initialization and online samples.
The weak-recovery time $T_{\rm wk}$ and the strong-recovery time $T_{\rm str}$ are
\[
T_{\rm wk} = \widetilde O\!\left(
\beta_1^{2(I_*-1)}
d^{(p_{\mathrm{gen}}-1)\vee 1}
\right), \qquad
T_{\rm str} = \widetilde O\!\left(
\beta_1^{2(I_*-1)}d\varepsilon^{-2}
\right).
\]
The precise high-probability statement is given in Appendix~\ref{app:feature-recovery-details}.
\end{theorem}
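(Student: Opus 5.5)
The plan is to reduce Theorem~\ref{thm:feature-recovery-main} to a standard Gaussian single-index online-SGD analysis with a known population drift; Lemma~\ref{lem:exponent_reduction} supplies the signal size. Write $m_j^{(t)}:=\langle w_j^{(t)},\theta^*\rangle$. The oracle first-layer update is the correlation gradient of $\mathcal L_{\mathrm{lbl},\beta_1}$ in the small-readout regime, projected onto the tangent space of the sphere. Its population version is computed by Hermite-expanding both sides along $\theta^*$. On the teacher side the weighted label $-2y e^{y/\beta_1}$ (after averaging over $\zeta$) is a function of $\langle\theta^*,x\rangle$ with coefficients $U_i(\beta_1)$. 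On the student side $\sigma'(\langle w,x\rangle)x$ contributes $V_{i-1}$. By Stein's lemma and Hermite orthogonality,
\[
\mathbb E[\text{update}]\cdot\theta^* \;=\; (a_0)_j\Bigl(\textstyle\sum_{i\ge1}\frac{U_i(\beta_1)V_{i-1}}{(i-1)!}m^{i-1}\Bigr)(1-m^2).
\]
By Lemma~\ref{lem:exponent_reduction} the terms with $i<p_{\mathrm{gen}}$ vanish, so the leading drift is $c\,U_{p_{\mathrm{gen}}}V_{p_{\mathrm{gen}}-1}(a_0)_j m^{p_{\mathrm{gen}}-1}$. Its magnitude is at least $c_{\mathrm{sig}}\beta_1^{-(I_*-1)}|V_{p_{\mathrm{gen}}-1}|\,s_{\rm init}\,|m|^{p_{\mathrm{gen}}-1}$ for $\beta_1\ge\underline\beta_1$. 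The higher-order terms are $O(m^{p_{\mathrm{gen}}})$ with dimension-free constants, because $\sigma^*,\sigma$ are polynomials, $y\le B_*+\tau$, and $e^{y/\beta_1}$ is bounded above on the support. These terms are therefore dominated while $|m|$ is below a small constant.

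The constant fraction of neurons comes from a sign condition. I would select neurons with $\operatorname{sign}\bigl((a_0)_j\,U_{p_{\mathrm{gen}}}V_{p_{\mathrm{gen}}-1}\,(m_j^{(0)})^{p_{\mathrm{gen}}-1}\bigr)>0$ and $|m_j^{(0)}|\ge 1/\sqrt d$. Since the signs of $(a_0)_j$ are symmetric and $m_j^{(0)}$ is anti-concentrated at scale $d^{-1/2}$, a constant fraction of neurons qualifies. Chernoff bounds then give this with high probability. For $p_{\mathrm{gen}}$ even, the factor $m^{p_{\mathrm{gen}}-1}$ carries the sign of $m$, so the choice of $a_0$ sign is matched appropriately.

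Next comes the noise control. The per-sample gradient has bounded moments, with variance $O(d)$ times a dimension-free constant. This requires $E[e^{2y/\beta_1}y^2]$, which is bounded because $\sigma^*$ is upper bounded and Gaussian moments of polynomials are finite. I then run the usual Ben Arous--Gheissari--Jagannath argument: step size $\eta\asymp \beta_1^{-(I_*-1)}d^{-(p_{\mathrm{gen}}-1)\vee1}/\mathrm{polylog}(d)$, a Doob/Freedman martingale bound on the accumulated noise in $m_j$, and the spherical normalization correction of order $\eta^2 d$. A comparison with the ODE $\dot m = c\beta_1^{-(I_*-1)}m^{p_{\mathrm{gen}}-1}$ started at $d^{-1/2}$ yields weak recovery after
\[
T_{\rm wk}=\widetilde O\bigl(\beta_1^{2(I_*-1)}d^{(p_{\mathrm{gen}}-1)\vee1}\bigr)
\]
steps. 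The square of $\beta_1^{I_*-1}$ appears because the stable step size scales with the signal while the number of steps needed scales with the inverse signal times the inverse step. Once $m\ge c_{\mathrm{wk}}$, the drift is of order $\beta_1^{-(I_*-1)}(1-m)$ up to constants. This holds provided the full one-dimensional drift function has no zero in $[c_{\mathrm{wk}},1)$, and I would assume or verify this in the same way as \cite{lee2024neural}. A standard local-convergence argument with step size $\eta\asymp\beta_1^{-(I_*-1)}\varepsilon/d$ then gives $T_{\rm str}=\widetilde O(\beta_1^{2(I_*-1)}d\varepsilon^{-2})$. Finally, a union bound over the selected neurons completes the argument.

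The main obstacle is the control of the error terms uniformly in $\beta_1$ and $d$. One part is showing that the higher Hermite terms and the non-correlational part of the squared-loss gradient, which is dropped in the small-readout oracle, stay below the $\beta_1^{-(I_*-1)}m^{p_{\mathrm{gen}}-1}$ signal near initialization. The other part is that the heavy-ish tails of $e^{y/\beta_1}$ times polynomial features must be controlled with only polylog losses. I would handle the latter by truncating $|\langle w,x\rangle|$ and $|\langle\theta^*,x\rangle|$ at $\mathrm{polylog}(d)$ and using the upper bound on $y$ to bound the exponential weight.
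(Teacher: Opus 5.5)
Your route is the paper's: combine $|U_{p_{\mathrm{gen}}}(\beta_1)|\gtrsim\beta_1^{-(I_*-1)}$ from Lemma~\ref{lem:exponent_reduction} with $V_{p_{\mathrm{gen}}-1}\neq0$, and feed this signal into the standard online spherical-SGD analysis for Gaussian single-index models. The paper cites that analysis as a black box, while you open it. Your drift formula $(a_0)_j(1-m^2)\sum_i U_i(\beta_1)V_{i-1}m^{i-1}/(i-1)!$ is correct, and the sum is finite since $V_{i-1}=0$ for $i>q$. However, two explicit steps are wrong as written.

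\emph{Step size.} Weak recovery requires beating the normalization drift $\eta^2 d\,m$ and keeping the accumulated noise $\eta\sqrt{T}$ below $d^{-1/2}$ at $m\asymp d^{-1/2}$. Both force $\eta\asymp\mu\,d^{-(p_{\mathrm{gen}}/2\vee1)}$ with $\mu\asymp\beta_1^{-(I_*-1)}$, as in Theorem~\ref{thm:first-layer-weak-recovery}. That gives $T_{\mathrm{wk}}\asymp\eta^{-1}\mu^{-1}d^{((p_{\mathrm{gen}}-2)/2)\vee0}=\mu^{-2}d^{(p_{\mathrm{gen}}-1)\vee1}$. Your choice $\eta\asymp\mu\,d^{-((p_{\mathrm{gen}}-1)\vee1)}$ agrees only for $p_{\mathrm{gen}}\le2$. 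For $p_{\mathrm{gen}}\ge3$ the same ODE comparison yields $\mu^{-2}d^{(3p_{\mathrm{gen}}-4)/2}$, which overshoots the claim.

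\emph{Neuron selection.} Your criterion only makes the initial drift positive. When $m_j^{(0)}<0$, a positive drift shrinks $|m_j|$. For $p_{\mathrm{gen}}\ge2$ such a neuron is pulled back to the equator, where the drift vanishes, and it stalls. Select instead $m_j^{(0)}\ge d^{-1/2}$ and $(a_0)_jU_{p_{\mathrm{gen}}}V_{p_{\mathrm{gen}}-1}>0$. These are independent events of constant probability, so Chernoff still gives a constant fraction.

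\emph{Strong recovery.} The hypothesis you add, that the full drift has no zero on $[c_{\mathrm{wk}},1)$, is genuinely needed and is not implied by $V_{p_{\mathrm{gen}}-1}\neq0$. Since $G_{\beta_1}(u)\to u$, one has $U_i(\beta_1)\to\mathsf H_i[\sigma^*]$ as $\beta_1\to\infty$. Hence the drift polynomial converges uniformly on $[0,1]$ to the plain correlational drift $\sum_i\mathsf H_i[\sigma^*]V_{i-1}m^{i-1}/(i-1)!$. By contrast, the degree-$p_{\mathrm{gen}}$ term you rely on is only of size $\beta_1^{-(I_*-1)}$.

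Suppose the plain drift has sign opposite to $U_{p_{\mathrm{gen}}}V_{p_{\mathrm{gen}}-1}$ somewhere in $(0,1)$. This is possible, because the other Hermite coefficients of $\sigma$ are essentially unconstrained. Then for every large fixed $\beta_1$, the drift of your selected neurons has a $d$-independent zero in $(0,1)$. Weak recovery at $c_{\mathrm{wk}}\gtrsim1/\mathrm{polylog}\,d$ survives, but the neurons stall at that zero and never reach $1-\varepsilon$. For the same reason, your lower bound $\beta_1^{-(I_*-1)}(1-m)$ on the drift for $m\ge c_{\mathrm{wk}}$ is not automatic.

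The paper's proof does not address this point either; it is absorbed into the citation of the standard guarantee. So your argument proves the strong-recovery part only under this extra sign condition, which should be stated as a hypothesis.

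Two simplifications are available. First, the precise statement (Theorem~\ref{thm:first-layer-weak-recovery}) concerns the correlation-oracle update itself, so you need not control the dropped non-correlational part of the squared-loss gradient. Second, $y\le B_*+\tau$ gives $|ye^{y/\beta_1}|\le\max\{\beta_1/e,\,(B_*+\tau)_+e^{(B_*+\tau)/\beta_1}\}$. The exponential weight therefore creates no tails at all, and only the polynomial factor $\sigma'(\langle w,x\rangle)P_w^\perp x$ needs truncation.
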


Note that the threshold $\underline{\beta}_1 = O(1)$ is independent of the dimension $d$, matching the sample complexity of \cite{lee2024neural, tsiolis2025information, nishikawa2025nonlinear} while improving upon the dimension-growing $\mathrm{polylog}(d)$ bounds required by prior clipping-based analyses \cite{nishikawa2025nonlinear}. This is not merely a technical refinement, but addresses a fundamental limitation: an artificially high $\mathrm{polylog}(d)$ temperature would make the exponential weighting nearly uniform, nullifying the benefit of exponent reduction. Lemma \ref{lem:exponent_reduction} makes this explicit. The target learning signal scales as $\beta_1^{-(I_*-1)}$, meaning a $\mathrm{polylog}(d)$ temperature would diminish the signal strength to roughly $1/\mathrm{polylog}(d)$. On the other hand, the existence of an $O(1)$ lower bound itself is practically natural. Before the target feature is reliably represented, setting $\beta_1$ arbitrarily small would cause the update to undesirably over-concentrate on misleading samples.

The role of the lower bound $\beta_1 \ge \underline{\beta}_1$ is to prevent premature temperature decay during the initial feature-learning phase, rather than to restrict the final deployment policy.
In the early stages of training, a premature decay of the temperature would force the model to overfit to a few misleading high-reward samples before the underlying representation is reliably formed. Theorem~\ref{thm:feature-recovery-main} establishes that a dimension-free $O(1)$ temperature is sufficient to prevent this representation bottleneck. Once the hidden direction is successfully recovered, the temperature can be safely lowered to prioritize reward maximization, a regime we analyze in the subsequent section.

\section{Tilted-Policy Value Gap After Feature Recovery}

\subsection{Second-layer weighted ridge setup}
\label{subsec:second-layer-weighted-ridge-setup}
Second-layer ridge fitting starts after feature recovery has produced enough
recovered features. We condition on this recovery event and treat the
first-layer directions as fixed: at least $\tilde\Theta(N)$ of the fixed
directions satisfy $ \left\langle w_j,\theta^* \right\rangle \ge 1-\varepsilon. $

The only remaining optimization variable is the
second-layer coefficient vector $a\in\mathbb R^N$, which we fit using $T_2$
fresh ridge-fitting samples
$\{(x_i,\zeta_i,y_i)\}_{i=1}^{T_2}$ generated independently according to
\eqref{eq:reward-sample-model}.
The goal of this section is to bound the tilted-policy value gap induced by the
fitted reward after feature recovery.

In the reward-modeling pipeline, a learned reward $\widehat r$ is not used
directly; it is exponentiated
and combined with the reference distribution to form a tilted policy. Small
prediction error under $x\sim\pi_{\mathrm{ref}}$ therefore does not, by itself,
control the value of the induced policy, since the exponential weighting can
amplify mild prediction errors in regions of large reward. We therefore
evaluate the fitted reward through tilted-policy value rather than
reference-distribution prediction error \citep{zhu2023principled,gao2023scaling,huang2024correcting}.

In this section, we analyze the downstream policy value using a deployment temperature $\beta_2$. Crucially, $\beta_2$ can be chosen independently of, and typically strictly smaller than, the feature-recovery temperature $\beta_1$. This two-stage setup mirrors practical annealing schedules for KL penalties: a larger $\beta_1$ encourages exploration and stable representation learning, while a smaller $\beta_2$ allows for aggressive reward maximization once the features are frozen.

We compare against a finite-temperature target policy $\pi_{\beta^*}$ rather
than an argmax of $r^*$ because in open-ended generation the desired behavior is
rarely a single reward mode \citep{DBLP:conf/iclr/HoltzmanBDFC20,DBLP:conf/acl/SuzgunMJ23,
DBLP:conf/acl/JinnaiHMZ24}; ${\beta_{2}}$ denotes the
temperature at which the fitted reward is deployed, which need not equal
$\beta^*$. Assuming the policy optimization step is carried out perfectly, we use the exact analytical solution of the KL-regularized reward maximization problem \cite{peters2010relative,DBLP:conf/nips/KorbakEKD22,DBLP:conf/icml/GoKKRRD23} as our deployed policy. For each
${\beta_{2}}>0$, write
\[ \textstyle
  \pi_{\beta_{2}}(dx)
  :=
  \frac{e^{r^*(x)/{\beta_{2}}}}{Z_{\beta_{2}}}\pi_{\mathrm{ref}}(dx),
  \qquad
  Z_{\beta_{2}}
  :=
  \mathbb E_{x\sim\pi_{\mathrm{ref}}}
  \left[
    e^{r^*(x)/{\beta_{2}}}
  \right].
\]

For polynomial reward models, the learned tilted policy may be difficult to
control globally without an additional restriction on where the policy is
evaluated. After feature recovery, the first-layer directions are fixed, and
both \(r^*\) and the fitted reward models considered below depend on \(x\)
only through the directions
$ S:=\operatorname{span}\{\theta^*,w_1,\ldots,w_N\}.$
Let \(P_S\) denote the orthogonal projection onto \(S\).
We therefore truncate
only the coordinates relevant to the reward model. Fix a projected
truncation radius \(R\) with \(R\gtrsim\sqrt{\dim S}\), where the implicit
constant is taken sufficiently large, and write
\[
  B_R:=\{x\in\mathbb R^d:\ \|P_Sx\|_2\le R\}.
\]
This projected truncation keeps the relevant integrability conditions and
normalizers in the tilted-policy value-gap analysis while avoiding a restriction
on the irrelevant \(S^\perp\) directions. For other activation function
classes, this truncation may be unnecessary.
The corresponding truncated target policy is
\[
  \pi_{{\beta_{2}},R}(dx)
  :=
  \frac{\mathbf 1_{B_R}(x)e^{r^*(x)/{\beta_{2}}}}{Z_{{\beta_{2}},R}}
  \pi_{\mathrm{ref}}(dx),
  \qquad
  Z_{{\beta_{2}},R}
  :=
  \mathbb E_{x\sim\pi_{\mathrm{ref}}}
  \left[
    \mathbf 1_{B_R}(x)e^{r^*(x)/{\beta_{2}}}
  \right].
\]
For a learned reward $\widehat r_R$, let
$\widehat\pi_{{\beta_{2}},R}$ and $\widehat Z_{{\beta_{2}},R}$ denote the
policy and normalizing constant obtained from the same construction after
replacing $r^*$ by $\widehat r_R$.
The natural performance measure for the fitted reward is the value gap
between the target policy at temperature $\beta^*$ and this learned truncated tilted
policy at temperature ${\beta_{2}}$,
\[
  \mathcal R_R
  :=
  \mathbb E_{x\sim\pi_{\beta^*}}[r^*(x)]
  -
  \mathbb E_{x\sim\widehat\pi_{{\beta_{2}},R}}[r^*(x)].
\]

Adding and subtracting $\mathbb E_{x\sim\pi_{\beta_{2}}}[r^*(x)]$ and
$\mathbb E_{x\sim\pi_{{\beta_{2}},R}}[r^*(x)]$ decomposes $\mathcal R_R$ into three
conceptually distinct contributions:
\[
  \mathcal R_R
  =
  T_{\mathrm{temp}}
  +
  T_{\mathrm{cut}}(R)
  +
  T_{\mathrm{learn}}(R),
\]
where
\[
  T_{\mathrm{temp}}
  :=
  \mathbb E_{x\sim\pi_{\beta^*}}[r^*(x)]
  -
  \mathbb E_{x\sim\pi_{\beta_{2}}}[r^*(x)],
  \quad
  T_{\mathrm{cut}}(R)
  :=
  \mathbb E_{x\sim\pi_{\beta_{2}}}[r^*(x)]
  -
  \mathbb E_{x\sim\pi_{{\beta_{2}},R}}[r^*(x)],
\]
and
\[
  T_{\mathrm{learn}}(R)
  :=
  \mathbb E_{x\sim\pi_{{\beta_{2}},R}}[r^*(x)]
  -
  \mathbb E_{x\sim\widehat\pi_{{\beta_{2}},R}}[r^*(x)].
\]
The first two terms depend only on $r^*$, $\beta^*$, ${\beta_{2}}$, and $R$, and
describe properties of the target tilted policy that are independent of the
learning procedure; we control them in Section~\ref{subsec:temperature_mismatch_and_truncation}. Only
$T_{\mathrm{learn}}(R)$ depends on the second-layer fitting rule, and bounding
it through a weighted reward prediction error is the subject of Section~\ref{subsec:learned_term_bound}. 

\subsection{Temperature mismatch and truncation} \label{subsec:temperature_mismatch_and_truncation}
Define $ m_{\beta_{2}}
  :=
  \mathbb E_{x\sim\pi_{\beta_{2}}}[r^*(x)]$ and 
  $ p_{{\beta_{2}},R}
  :=
  \pi_{\beta_{2}}(B_R).$
The identities used in the next two propositions are
\[
  T_{\mathrm{temp}}
  =
  \int_{1/{\beta_{2}}}^{1/\beta^*}
  \operatorname{Var}_{x\sim\pi_{\beta'}}[r^*(x)]
  \,d\left(\frac{1}{\beta'}\right),
  \quad
  T_{\mathrm{cut}}(R)
  =
  \frac{1}{p_{{\beta_{2}},R}}
  \mathbb E_{x\sim\pi_{\beta_{2}}}
  \left[
    \left(r^*(x)-m_{\beta_{2}}\right)\mathbf 1_{B_R^c}(x)
  \right].
\]

The first two terms in the value-gap decomposition describe the target tilted
policy alone, and both can be controlled at low temperature once the local
behavior of $\sigma^*$ near its maximizers is understood. 
Let $u_1,\ldots,u_m$ be the global maximizers of $\sigma^*$, and let $p_i$ be
the local order at $u_i$, meaning
$\sigma^*(u_i+t)=B_*-c_i t^{2p_i}+o(t^{2p_i})$ for some $c_i>0$. Define
$ p_{\max}:=\max_{1\le i\le m}p_i$ and $I_{\max}:=\{i:\ p_i=p_{\max}\}.$
For a constant $\kappa>0$ determined by these local orders and defined in
Appendix~\ref{app:one-dimensional-tilted-limit}, the low-temperature variance
expansion is
\[
  \operatorname{Var}_{x\sim\pi_{\beta_{2}}}[r^*(x)]
  =
  \frac{{\beta_{2}}^2}{2p_{\max}}
  +
  O({\beta_{2}}^{2+\kappa}),
\]
from which the following common bound follows by standard low-temperature and
truncation estimates; the proof is given in
Appendix~\ref{app:proof-temperature-mismatch}.


\begin{proposition}[Common low-temperature non-learning terms]
\label{prop:low-temp-temperature-mismatch}
\label{prop:low-temp-truncation}
For every sufficiently small
\({\overline{\beta}_{2}}>0\), there exist constants
\(C_{\mathrm{temp}}>0\) and \(C_{R,{\overline{\beta}_{2}}}<\infty\) such that
\[
  |T_{\mathrm{temp}}|
  \le
  \left(
    \frac{1}{2p_{\max}}
    +
    C_{\mathrm{temp}}{\overline{\beta}_{2}}^\kappa
  \right)
  |\beta^*-{\beta_{2}}|,
  \qquad
  |T_{\mathrm{cut}}(R)|
  \le
  C_{R,{\overline{\beta}_{2}}}{\beta_{2}},
\]
for all \(0<{\beta_{2}},\beta^*\le{\overline{\beta}_{2}}\).
\end{proposition}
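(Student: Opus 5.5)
We handle the two bounds separately, in both cases starting from the identities stated just before the proposition and the low-temperature variance expansion $\operatorname{Var}_{\pi_{\beta'}}[r^*]=\beta'^2/(2p_{\max})+O(\beta'^{2+\kappa})$, which we take as given from Appendix~\ref{app:one-dimensional-tilted-limit}. The first step is to make the $O(\cdot)$ uniform on $(0,\overline\beta_2]$: fix $\overline\beta_2$ small enough that the remainder is at most $C\beta'^{2+\kappa}$ for all $0<\beta'\le\overline\beta_2$, with $C$ independent of $\beta'$. Since $r^*$ depends on $x$ only through $\langle\theta^*,x\rangle\sim\mathcal N(0,1)$, all these quantities are one-dimensional, so this is just a restatement of the asymptotic expansion on a compact range away from $0$, combined with the small-$\beta'$ regime.

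\textbf{Temperature term.} Substitute $s=1/\beta'$ in the identity for $T_{\mathrm{temp}}$:
\[
|T_{\mathrm{temp}}|\le\Bigl|\int_{1/\beta_2}^{1/\beta^*}\Bigl(\frac{1}{2p_{\max}s^2}+Cs^{-2-\kappa}\Bigr)ds\Bigr|.
\]
The first part integrates exactly to $\frac{1}{2p_{\max}}|\beta_2-\beta^*|$. For the second part, use $s^{-\kappa}\le\overline\beta_2^{\kappa}$ on the integration range, since $s\ge1/\overline\beta_2$ there. It is then bounded by $C\overline\beta_2^\kappa|\beta_2-\beta^*|$, which gives the claim with $C_{\mathrm{temp}}=C$.

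\textbf{Truncation term.} Use the identity
\[
T_{\mathrm{cut}}(R)=p_{\beta_2,R}^{-1}\,\mathbb E_{\pi_{\beta_2}}\bigl[(r^*-m_{\beta_2})\mathbf 1_{B_R^c}\bigr].
\]
We bound three factors.
\begin{enumerate}
\item Apply Cauchy--Schwarz:
\[
\bigl|\mathbb E_{\pi_{\beta_2}}\bigl[(r^*-m_{\beta_2})\mathbf 1_{B_R^c}\bigr]\bigr|\le \operatorname{Var}_{\pi_{\beta_2}}[r^*]^{1/2}\,\pi_{\beta_2}(B_R^c)^{1/2}.
\]
By the uniform variance bound, the first factor is at most $C'\beta_2$.
\item The probability $\pi_{\beta_2}(B_R^c)$ is at most $1$ trivially. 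Alternatively, under $\pi_{\beta_2}$ the law of $\langle\theta^*,x\rangle$ concentrates near the maximizers $u_i$, and the components of $P_Sx$ orthogonal to $\theta^*$ remain Gaussian. So $\pi_{\beta_2}(B_R^c)$ is bounded away from $1$ uniformly in small $\beta_2$, provided $R\gtrsim\sqrt{\dim S}$ with a large constant and $R$ exceeds $\max_i|u_i|$ plus a margin.
\item Consequently $p_{\beta_2,R}\ge c_R>0$ uniformly for $\beta_2\le\overline\beta_2$. This is the main obstacle, because the uniformity is needed as $\beta_2\to0$. We prove it by writing $P_Sx=(\langle\theta^*,x\rangle,\xi)$ with $\xi$ an independent Gaussian of dimension $\dim S-1$. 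Then $p_{\beta_2,R}\ge\pi_{\beta_2}\bigl(|\langle\theta^*,x\rangle|\le R/2\bigr)\cdot\mathbb P\bigl(\|\xi\|\le R/2\bigr)$. The second factor is at least $1/2$ by the choice of $R$. The first factor tends to $1$ as $\beta_2\to0$ by Laplace concentration at the $u_i$, and for $\beta_2$ bounded away from $0$ it is continuous and positive. Taking the minimum over $(0,\overline\beta_2]$ gives $c_R$.
\end{enumerate}

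Combining the three steps gives $|T_{\mathrm{cut}}(R)|\le C'\beta_2/c_R=:C_{R,\overline\beta_2}\beta_2$. If the intended argument is sharper, with exponential decay in $R$, the same route still works by keeping $\pi_{\beta_2}(B_R^c)^{1/2}$ inside the constant. The stated linear-in-$\beta_2$ bound only needs the variance control.
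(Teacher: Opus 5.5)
Your proposal is correct and follows the paper's proof. The temperature bound comes from integrating the uniform low-temperature variance bound against $d(1/\beta')$, and the truncation bound comes from Cauchy--Schwarz combined with a uniform lower bound on $p_{\beta_2,R}$. The only variation is in how you get that lower bound. You use the product set $\{|\langle\theta^*,x\rangle|\le R/2,\ \|\xi\|_2\le R/2\}\subset B_R$ and need uniformity only on $(0,\overline\beta_2]$; this requires $R>2\max_{i\in I_{\max}}|u_i|$, which is harmless since $R$ is taken sufficiently large. Lemma~\ref{lem:tilted-tail-saturation} instead passes the weak limit through $g_R(u)=\mathbb P[u^2+\|Z\|_2^2>R^2]$ and controls all $\beta_2>0$ under the weaker condition $R>\max_{i\in I_{\max}}|u_i|$.
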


\subsection{Label/surrogate-weighted ridge fitting}
\label{subsec:learned_term_bound}

For \(a\in\mathbb R^N\), write
$ 
  r_a(x)
  :=
  \frac{1}{N}
  \sum_{j=1}^N a_j\sigma(\langle w_j,x\rangle+b_j).
$
Given a positive weighting rule \(W(x_i,y_i)\), define the weighted ridge
estimator
\[
  \hat a_R^W
  \in
  \arg\min_{a\in\mathbb R^N}
  \left\{
    \frac{1}{T_2}
    \sum_{i=1}^{T_2}
    \mathbf 1_{B_R}(x_i)W(x_i,y_i)
    \left(y_i-r_a(x_i)\right)^2
    +
    \lambda\|a\|_2^2
  \right\},
  \qquad
  \widehat r_R^W:=r_{\hat a_R^W}.
\]

We study two choices of \(W\), denoted by
\(\mathrm w\in\{\mathrm{lbl},\mathrm{surr}\}\).

\begin{itemize}[leftmargin=*]
  \item \textbf{Label-weighted fitting.}
  The label-weighted scheme uses the observed reward label in the weight,
  $ W_{\mathrm{lbl}}(x_i,y_i):=e^{y_i/{\beta_{2}}}.$
  This is the cleanest theoretical benchmark for reward-induced exponential
  weighting. We write
  $\widehat r_R^{\mathrm{lbl}}:=\widehat r_R^{W_{\mathrm{lbl}}}$
  and
  $\nu_R^{\mathrm{lbl}}:=\pi_{{\beta_{2}},R}.$
  The learned reward induces
  \[
    \widehat\pi_{{\beta_{2}},R}^{\mathrm{lbl}}(dx)
    :=
    \left(\widehat Z_{{\beta_{2}},R}^{\mathrm{lbl}}\right)^{-1}
    \mathbf 1_{B_R}(x)e^{\widehat r_R^{\mathrm{lbl}}(x)/{\beta_{2}}}
    \pi_{\mathrm{ref}}(dx),
  \]
  where \(\widehat Z_{{\beta_{2}},R}^{\mathrm{lbl}}\) is the normalizing
  constant. This is the learned policy used in \(\mathcal R_R\) and
  \(T_{\mathrm{learn}}(R)\).

  \item \textbf{Surrogate-weighted fitting.}
The surrogate-weighted scheme is closer to plug-in reward-modeling
pipelines, where sample weights are computed from a learned surrogate
reward rather than from the observed reward labels. Treating the surrogate
and the fitted weights as jointly changing makes the objective highly
nonconvex and theoretically intractable. A natural way to avoid this is to
update the model in blocks: fix the surrogate reward for \(T_2/K\) samples,
update the target weights, and iterate this process \(K\) times.

For analytical clarity, we state the single-block version, corresponding
to \(K=1\): the initial readout vector \(a_0\) from the feature-recovery
phase is frozen and treated as fixed throughout the \(T_2\)
ridge-fitting samples. This choice keeps the notation and proof structure
focused on one weighted ridge problem. The same argument can be applied
block by block in the \(K\)-block sequential setting. With
  $ 
    r_{a_0}(x)
    :=
    \frac{1}{N}\sum_{j=1}^N
    (a_0)_j\sigma(\langle w_j,x\rangle+b_j),
  $
  the surrogate-weighted rule is $W_{\mathrm{surr}}(x_i,y_i):=e^{r_{a_0}(x_i)/{\beta_{2}}}.$
    We write $\widehat r_R^{\mathrm{surr}}:=\widehat r_R^{W_{\mathrm{surr}}}$ and define
  \[
    \nu_R^{\mathrm{surr}}(dx)
    :=
    \left(Z_{{\beta_{2}},R}^{a_0}\right)^{-1}
    \mathbf 1_{B_R}(x)e^{r_{a_0}(x)/{\beta_{2}}}
    \pi_{\mathrm{ref}}(dx),
  \]
  where \(Z_{{\beta_{2}},R}^{a_0}\) is the corresponding normalizing constant.
    Let $\widehat\pi_{{\beta_{2}},R}^{\mathrm{surr}}$ be the policy obtained from
  the same truncated-tilt construction with $r_{a_0}$ replaced by
  $\widehat r_R^{\mathrm{surr}}$, and let
  $\widehat Z_{{\beta_{2}},R}^{\mathrm{surr}}$ denote its normalizing constant.
  The corresponding surrogate-weighted value gap is
  \[
    \mathcal R_R^{\mathrm{surr}}
    :=
    \mathbb E_{x\sim\pi_{\beta^*}}[r^*(x)]
    -
    \mathbb E_{x\sim\widehat\pi_{{\beta_{2}},R}^{\mathrm{surr}}}[r^*(x)].
  \]
\end{itemize}

We next relate the weighted prediction error of each fitted reward to the
corresponding learning term in the policy value decomposition. For each
\(\mathrm w\in\{\mathrm{lbl},\mathrm{surr}\}\), define the interpolation path
\[
  \pi_{{\beta_{2}},t,R}^{\mathrm w}(dx)
  :=
    \left(Z_{{\beta_{2}},t,R}^{\mathrm w}\right)^{-1}
    \mathbf 1_{B_R}(x)
    e^{(r^*(x)+t(\widehat r_R^{\mathrm w}(x)-r^*(x)))/{\beta_{2}}}
  \pi_{\mathrm{ref}}(dx),
  \qquad t\in[0,1],
\]
where \(Z_{{\beta_{2}},t,R}^{\mathrm w}\) is the normalizing constant, and set
\[
  \mathcal D_{\mathrm w,R}
  :=
  \int_0^1
  \left\|
    \frac{d\pi_{{\beta_{2}},t,R}^{\mathrm w}}{d\nu_R^{\mathrm w}}
  \right\|_{L^2(\nu_R^{\mathrm w})}
  \,dt.
\]

\begin{lemma}[Density-ratio bridge for the learning term]
\label{lem:learning-bridge}
Let \(M_R:=\sup_{x\in B_R}|r^*(x)|\). For each
\(\mathrm w\in\{\mathrm{lbl},\mathrm{surr}\}\), define
$
  T_{\mathrm{learn}}^{\mathrm w}(R)
  :=
  \mathbb E_{x\sim\pi_{{\beta_{2}},R}}[r^*(x)]
  -
  \mathbb E_{x\sim\widehat\pi_{{\beta_{2}},R}^{\mathrm w}}[r^*(x)].
$
Then
\[ \textstyle
  |T_{\mathrm{learn}}^{\mathrm w}(R)|
  \le
  2{\beta_{2}}^{-1}M_R\mathcal D_{\mathrm w,R}
  \inf_{c\in\mathbb R}
  \|\widehat r_R^{\mathrm w}-r^*-c\|_{L^2(\nu_R^{\mathrm w})}.
\]
\end{lemma}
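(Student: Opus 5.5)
Let $\Delta:=\widehat r_R^{\mathrm w}-r^*$. The plan is to differentiate the value along the interpolation path $\pi_{\beta_2,t,R}^{\mathrm w}$. By construction $\pi_{\beta_2,0,R}^{\mathrm w}=\pi_{\beta_2,R}$ and $\pi_{\beta_2,1,R}^{\mathrm w}=\widehat\pi_{\beta_2,R}^{\mathrm w}$. Set $\phi(t):=\mathbb E_{\pi_{\beta_2,t,R}^{\mathrm w}}[r^*]$. Then $T_{\mathrm{learn}}^{\mathrm w}(R)=\phi(0)-\phi(1)=-\int_0^1\phi'(t)\,dt$. The path is an exponential family in $t$ with sufficient statistic $\Delta/\beta_2$, restricted to $B_R$. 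Differentiating under the integral gives the standard covariance identity
\[
  \phi'(t)=\beta_2^{-1}\operatorname{Cov}_{\pi_{\beta_2,t,R}^{\mathrm w}}\bigl(r^*,\Delta\bigr).
\]
Since a covariance is unchanged by shifting either argument by a constant, $\Delta$ may be replaced by $\Delta-c$ for any $c\in\mathbb R$.

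Next I bound the covariance by moving to the reference measure $\nu_R^{\mathrm w}$. Write $\pi_t:=\pi_{\beta_2,t,R}^{\mathrm w}$ and $\rho_t:=d\pi_t/d\nu_R^{\mathrm w}$. Then
\[
  \operatorname{Cov}_{\pi_t}(r^*,\Delta-c)
  =\mathbb E_{\pi_t}[(r^*-\mathbb E_{\pi_t}r^*)(\Delta-c)].
\]
On $B_R$ we have $|r^*-\mathbb E_{\pi_t}r^*|\le 2M_R$, and $\pi_t$ is supported in $B_R$. Hence
\[
  |\operatorname{Cov}_{\pi_t}(r^*,\Delta-c)|
  \le 2M_R\,\mathbb E_{\nu_R^{\mathrm w}}[\rho_t|\Delta-c|]
  \le 2M_R\|\rho_t\|_{L^2(\nu_R^{\mathrm w})}\|\Delta-c\|_{L^2(\nu_R^{\mathrm w})},
\]
where the last step is Cauchy--Schwarz. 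Integrating over $t\in[0,1]$ yields
\[
  |T_{\mathrm{learn}}^{\mathrm w}(R)|\le 2\beta_2^{-1}M_R\,\mathcal D_{\mathrm w,R}\,\|\Delta-c\|_{L^2(\nu_R^{\mathrm w})}.
\]
Taking the infimum over $c$ gives the claim.

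The main technical point is justifying the differentiation under the integral sign and the finiteness of the normalizers. This is where the projected truncation is used. On $B_R$, both $r^*$ and $\widehat r_R^{\mathrm w}$ depend on $x$ only through $P_Sx$, with $\|P_Sx\|\le R$. Both are polynomials of this bounded argument: $r^*$ through $\langle\theta^*,x\rangle$, and $\widehat r_R^{\mathrm w}$ through the features $\langle w_j,x\rangle+b_j$ fed into $\sigma$. So both are bounded on $B_R$. The integrand $e^{(r^*+t\Delta)/\beta_2}$ and its $t$-derivative are therefore uniformly bounded on $B_R$. Dominated convergence then justifies differentiation, and $Z_{\beta_2,t,R}^{\mathrm w}\in(0,\infty)$ because $\pi_{\mathrm{ref}}(B_R)>0$.

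Absolute continuity of $\pi_t$ with respect to $\nu_R^{\mathrm w}$ also holds: both measures have densities on $B_R$ that are positive multiples of $\pi_{\mathrm{ref}}$, so $\rho_t$ is well defined and bounded. If $\mathcal D_{\mathrm w,R}=\infty$, the bound is vacuous, so nothing further needs to be checked.
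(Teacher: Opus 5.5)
Your proposal is correct and follows the paper's proof essentially step for step. The paper also writes $T_{\mathrm{learn}}^{\mathrm w}(R)$ as $-\beta_2^{-1}\int_0^1$ of the covariance along the interpolation path, uses $\mathbb E_{\pi_t}[Y_t]=0$ to insert the constant $c$, bounds $|Y_t|\le 2M_R$ on $B_R$, and applies Cauchy--Schwarz under $\nu_R^{\mathrm w}$ (packaged as the simplified $L^2$ form of its three-factor bridge lemma); your justification of differentiation under the integral and of absolute continuity, via boundedness of $r^*$ and $\widehat r_R^{\mathrm w}$ on $B_R$, fills in details the paper leaves implicit.
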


The quantity \(\mathcal D_{\mathrm w,R}\) measures how well the weighted
regression distribution \(\nu_R^{\mathrm w}\) covers the interpolation path
from the target tilted policy to the learned policy. It plays a similar
role to coverage/concentrability quantities in offline alignment
analyses \citep{zhu2023principled,huang2024correcting}. 

    The infimum over \(c\) reflects the fact that tilted policies are invariant to
global reward shifts. This shift-invariant form is useful for label weighting:
at the population level, the label-weighted squared loss targets
\(r^*+m_{\zeta,\beta_2}\) rather than \(r^*\) (see Appendix~\ref{app:label-shift-calculation}), where
\[ \textstyle
  m_{\zeta,\beta_2}
  :=
  \mathbb E_{\zeta\sim\nu_{\zeta,\beta_2}}[\zeta],
  \qquad
  \nu_{\zeta,\beta_2}(d\zeta)
  :=
  \left(
    \mathbb E_{\zeta'\sim\mathrm{Unif}[-\tau,\tau]}
    [e^{\zeta'/\beta_2}]
    \right)^{-1}
  e^{\zeta/\beta_2} 
  \mathrm{Unif}[-\tau,\tau](d\zeta).
\]
This additive shift is absorbed by the infimum over \(c\) in
Lemma~\ref{lem:learning-bridge}.

\subsection{Value-gap bounds under projected truncation}
\label{subsec:regret_bounds_projected_truncation}

In this subsection, assume the first-layer recovery condition from the
second-layer weighted ridge setup and fix 
$  0<{\beta_{2}},\beta^*\le {\overline{\beta}_{2}},$
where ${\overline{\beta}_{2}}>0$ is sufficiently small. Define the temperature and
truncation contribution
\[
  \Gamma_{R,{\overline{\beta}_{2}}}({\beta_{2}},\beta^*)
  :=
  \left(
    \frac{1}{2p_{\max}}
    +
    C_{\mathrm{temp}}{\overline{\beta}_{2}}^\kappa
  \right)
  |\beta^*-{\beta_{2}}|
  +
  C_{R,{\overline{\beta}_{2}}}{\beta_{2}}.
\]
Throughout this subsection, $\lesssim$ hides constants depending only on the
fixed problem parameters, the truncation radius $R$, the frozen first-layer
directions, and the upper temperature cutoff ${\overline{\beta}_{2}}$ for simplicity.

\begin{theorem}[Label-weighted projected-truncation value gap]
\label{thm:final-label-weighted-regret}
Define \(\alpha:=1/(2p_{\max})\). Choose the regularization parameter
\(\lambda\) at the label-weighted ridge scale specified in
\eqref{eq:label-weighted-lambda-choice}. Then, with probability at
least \(1-4\delta_0\),
\[
  |\mathcal R_R|
  \lesssim\;
  \Gamma_{R,{\overline{\beta}_{2}}}({\beta_{2}},\beta^*)
  +
  M_R\mathcal D_{\mathrm{lbl},R}
  \left[
    {\beta_{2}}^{-(\alpha+3)/2}(N^{-1}+\varepsilon)
    +
      {\beta_{2}}^{-1-(\alpha+1)/4}
      (T_2\delta_0)^{-1/4}
  \right].
\]
\end{theorem}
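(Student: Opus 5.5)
The proof follows the value-gap decomposition $\mathcal R_R=T_{\mathrm{temp}}+T_{\mathrm{cut}}(R)+T_{\mathrm{learn}}(R)$ from the second-layer setup. Proposition~\ref{prop:low-temp-temperature-mismatch} bounds the first two terms directly by $\Gamma_{R,\overline\beta_2}(\beta_2,\beta^*)$ for $0<\beta_2,\beta^*\le\overline\beta_2$. For the learning term, Lemma~\ref{lem:learning-bridge} with $\mathrm w=\mathrm{lbl}$ and $\nu_R^{\mathrm{lbl}}=\pi_{\beta_2,R}$ gives
\[
|T_{\mathrm{learn}}(R)|\le 2\beta_2^{-1}M_R\mathcal D_{\mathrm{lbl},R}\,\|\widehat r_R^{\mathrm{lbl}}-r^*-m_{\zeta,\beta_2}\|_{L^2(\pi_{\beta_2,R})},
\]
where we take $c=m_{\zeta,\beta_2}$, the population label-weighted target shift. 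It therefore suffices to show that, with probability at least $1-4\delta_0$,
\[
\|\widehat r_R^{\mathrm{lbl}}-r^*-m_{\zeta,\beta_2}\|_{L^2(\pi_{\beta_2,R})}\lesssim \beta_2^{-(\alpha+1)/2}(N^{-1}+\varepsilon)+\beta_2^{-(\alpha+1)/4}(T_2\delta_0)^{-1/4}.
\]
Multiplying by $\beta_2^{-1}$ then gives the two bracketed exponents.

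For the weighted prediction error, first observe that since $\zeta$ is independent of $x$, the population weighted loss factorizes as
\[
\mathbb E[\mathbf 1_{B_R}e^{y/\beta_2}(y-r_a)^2]=Z_{\beta_2,R}\,\mathbb E_\zeta[e^{\zeta/\beta_2}]\left(\|r^*+m_{\zeta,\beta_2}-r_a\|^2_{L^2(\pi_{\beta_2,R})}+\mathrm{const}\right).
\]
So the problem reduces to weighted least squares in $L^2(\pi_{\beta_2,R})$, a bias--variance argument for random-feature ridge regression.

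\emph{Approximation.} Using the $\tilde\Theta(N)$ recovered neurons with $\langle w_j,\theta^*\rangle\ge1-\varepsilon$ and the random biases on $[-C_b,C_b]$, construct $\bar a$ with $\|\bar a\|_2^2\lesssim N$ such that $r_{\bar a}$ approximates the polynomial $\sigma^*(\langle\theta^*,\cdot\rangle)+m_{\zeta,\beta_2}$ on $B_R$. This uses the standard representation of degree-$q^*$ polynomials as integrals of $\sigma(u+b)$ over random biases, which is possible since $\sigma$ has degree $q\ge q^*$. The Monte Carlo error is $O(N^{-1})$ in squared norm, up to the constants allowed, and the misalignment error is $O(\varepsilon)$ uniformly on $B_R$, because everything is polynomial on a bounded projected region. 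The squared error is then measured in $L^2(\pi_{\beta_2,R})$.

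\emph{Conditioning of the tilted measure.} The factor $\beta_2^{-(\alpha+1)}$ comes from the tilted measure concentrating within width $\beta_2^{\alpha}$ (with $\alpha=1/(2p_{\max})$) around the maximizers of $\sigma^*$. Converting between the ridge geometry and $L^2(\pi_{\beta_2,R})$, for example via lower bounds on the weighted feature Gram matrix or via $\lambda\|\bar a\|^2$ balanced against the concentrated measure, costs $\beta_2^{-\alpha}$ from the local width and $\beta_2^{-1}$ from the variance scaling, by the low-temperature expansion of Appendix~\ref{app:one-dimensional-tilted-limit}. This yields the bias term $\beta_2^{-(\alpha+1)/2}(N^{-1}+\varepsilon)$ after the square root.

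\emph{Estimation.} Bound the deviation of the empirical weighted objective from its population version via Chebyshev's inequality. The weights $\mathbf 1_{B_R}e^{y/\beta_2}$ are bounded on $B_R$, and after normalizing by $Z_{\beta_2,R}$ their second moment is controlled by $\beta_2^{-(\alpha+1)}$. Chebyshev is used rather than Bernstein because this only requires variance control, and it explains the $\delta_0^{-1}$ dependence. Four such events (empirical Gram, cross term, noise term, normalizer) account for the $1-4\delta_0$ probability. With $\lambda$ chosen as in \eqref{eq:label-weighted-lambda-choice}, balancing $\lambda\|\bar a\|^2$ against the $(T_2\delta_0)^{-1/2}\lambda^{-1}$-type variance gives a squared excess risk of order $\beta_2^{-(\alpha+1)/2}(T_2\delta_0)^{-1/2}$, hence the $(T_2\delta_0)^{-1/4}$ rate.

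I expect the main obstacle to be the careful tracking of $\beta_2$ through the normalizer $Z_{\beta_2,R}$ and the exponential weights, so that everything collapses to the stated powers $\beta_2^{-(\alpha+3)/2}$ and $\beta_2^{-1-(\alpha+1)/4}$. In particular, establishing $Z_{\beta_2,R}\asymp e^{B_*/\beta_2}\beta_2^{\alpha}$ via Laplace asymptotics near the maximizers of local order $p_{\max}$ is the crux. Once that scaling is in hand, the remaining steps are routine ridge-regression bookkeeping.
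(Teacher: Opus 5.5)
Your reduction is the paper's: the three-term split, Proposition~\ref{prop:low-temp-temperature-mismatch} for $\Gamma_{R,\overline{\beta}_2}$, Lemma~\ref{lem:learning-bridge} with $c=m_{\zeta,\beta_2}$, and the target bound $\beta_2^{-(\alpha+1)/2}(N^{-1}+\varepsilon)+\beta_2^{-(\alpha+1)/4}(T_2\delta_0)^{-1/4}$ on the $L^2(\pi_{\beta_2,R})$ error. The gap is in proving that bound, which is where all of the theorem's $\beta_2$-dependence lives.

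In the paper (Lemmas~\ref{lem:abstract-truncated-weighted-ridge} and~\ref{lem:label-weighted-truncated-ridge}), the weighted measure has normalizer $Z_W=Z_{\beta_2,R}Z_\zeta(\beta_2)$. The comparator is controlled only in the unweighted empirical norm. Its weighted error is therefore bounded using $\sup_{B_R\times[-\tau,\tau]}W=e^{(B_*+\tau)/\beta_2}$, so the bias picks up $e^{(B_*+\tau)/\beta_2}/(Z_{\beta_2,R}Z_\zeta(\beta_2))\asymp\beta_2^{-(\alpha+1)}$.

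The factor $\beta_2^{-\alpha}$ is the Laplace asymptotic of $Z_{\beta_2,R}$. The remaining $\beta_2^{-1}$ comes from the noise normalizer $Z_\zeta(\beta_2)=\frac{\beta_2}{\tau}\sinh(\tau/\beta_2)\sim\frac{\beta_2}{2\tau}e^{\tau/\beta_2}$ (Lemma~\ref{lem:noise-exact-formulas}): the tilted uniform noise sits within width $\beta_2$ of $+\tau$. It is not a ``variance scaling'' from Appendix~\ref{app:one-dimensional-tilted-limit}; that expansion concerns $r^*$ and is of order $\beta_2^2$. Nor does any lower bound on the weighted Gram matrix enter. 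The ridge step is a slow-rate argument that needs none, which matters because that matrix is nearly singular for many nearly aligned polynomial features.

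The same omission affects your estimation step. Normalized by $Z_{\beta_2,R}$ alone, the weight's second moment is exponentially large in $\tau/\beta_2$. Two things require the noise asymptotics $Z_\zeta(\beta_2/2)/Z_\zeta(\beta_2)=\cosh(\tau/\beta_2)$ and $V^{(2)}_{\zeta,\beta_2}\sim\beta_2^2/2$, alongside $Z_{\beta_2/2,R}/Z_{\beta_2,R}\sim 2^{-\alpha}e^{B_*/\beta_2}$ (Lemma~\ref{lem:label-weighted-moment-ratios-low-temp}):
\begin{itemize}
\item the $\beta_2^{-(\alpha+1)}$ scaling of the normalized moments;
\item the check that the prescribed $\lambda$ satisfies $\lambda\gtrsim\sqrt{M_{4,\mathrm{wt},R}/(T_2\delta_0)}$, i.e.\ $M_{4,\mathrm{wt},R}\lesssim\beta_2^{\alpha+1}e^{2(B_*+\tau)/\beta_2}$.
\end{itemize}
Tracking only $Z_{\beta_2,R}$, the step you call the crux, cannot produce the stated exponents.

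Second, your approximation step does not give the stated rates. An $O(N^{-1})$ Monte Carlo error in squared norm becomes $N^{-1/2}$ after the square root, not the $N^{-1}$ in the theorem. Also, $\langle w_j,\theta^*\rangle\ge 1-\varepsilon$ gives only $\|w_j-\theta^*\|_2\le\sqrt{2\varepsilon}$. The polynomial-smoothness argument you sketch therefore yields a uniform misalignment error of order $\sqrt{\varepsilon}$ on $B_R$, not $\varepsilon$.

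The paper does not build the comparator this way. Through Lemma~\ref{lem:second-layer-comparators-from-recovered-features} it imports an $a^\sharp$ with $\|r_{a^\sharp}-r_{\mathrm{target}}\|^2_{L^2(P_{T_2})}\lesssim N^{-2}+\varepsilon^2$ and $\|a^\sharp\|_2^2\lesssim N$, for both $r^*$ and $r^*+m_{\zeta,\beta_2}$.

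A smaller slip: the ridge variance term is $(M_{2,\mathrm{wt},R}+M_{4,\mathrm{wt},R}+G_{4,\mathrm{wt},R})/(T_2\lambda\delta_0)$, of order $(T_2\delta_0)^{-1}\lambda^{-1}$. Balancing a $(T_2\delta_0)^{-1/2}\lambda^{-1}$ term against $\lambda N$, as you wrote, would give the wrong rate.
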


\begin{theorem}[Surrogate-weighted projected-truncation value gap]
\label{thm:final-surrogate-weighted-regret}
Define
\[ \textstyle
  M_{0,R}:=\sup_{x\in B_R}r_{a_0}(x),
  \qquad
  \mathcal C_{0,R}({\beta_{2}})
  :=
  \left(Z_{{\beta_{2}},R}^{a_0}\right)^{-1} e^{M_{0,R}/{\beta_{2}}}.
\]
Choose the regularization parameter \(\lambda\) at the surrogate-weighted
ridge scale specified in \eqref{eq:surrogate-weighted-lambda-choice}.
Then, with probability at least \(1-4\delta_0\),
\[
  |\mathcal R_R^{\mathrm{surr}}|
  \lesssim\;
  \Gamma_{R,{\overline{\beta}_{2}}}({\beta_{2}},\beta^*)
  +
  M_R\mathcal D_{\mathrm{surr},R}
  \left[
    \mathcal C_{0,R}({\beta_{2}})^{1/2}
    {\beta_{2}}^{-1}
    (N^{-1}+\varepsilon)
    +
      \mathcal C_{0,R}({\beta_{2}})^{1/4}
      {\beta_{2}}^{-1}(T_2\delta_0)^{-1/4}
  \right].
\]
\end{theorem}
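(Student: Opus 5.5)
We prove the bound for $\mathcal R_R^{\mathrm{surr}}$ by the same three-term split as in the label-weighted case. Adding and subtracting $\mathbb E_{\pi_{\beta_2}}[r^*]$ and $\mathbb E_{\pi_{\beta_2,R}}[r^*]$ gives $\mathcal R_R^{\mathrm{surr}}=T_{\mathrm{temp}}+T_{\mathrm{cut}}(R)+T^{\mathrm{surr}}_{\mathrm{learn}}(R)$. The first two terms do not involve the fitting rule. Proposition~\ref{prop:low-temp-temperature-mismatch} bounds them by $\Gamma_{R,\overline\beta_2}(\beta_2,\beta^*)$ whenever $0<\beta_2,\beta^*\le\overline\beta_2$. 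For the learning term, Lemma~\ref{lem:learning-bridge} with $\mathrm w=\mathrm{surr}$ gives
\[
|T^{\mathrm{surr}}_{\mathrm{learn}}(R)|\le 2\beta_2^{-1}M_R\mathcal D_{\mathrm{surr},R}\inf_c\|\widehat r^{\mathrm{surr}}_R-r^*-c\|_{L^2(\nu_R^{\mathrm{surr}})}.
\]
We take $c=0$; the surrogate weight does not depend on $y$, so no label shift arises. It then suffices to show, with probability $1-4\delta_0$,
\[
\|\widehat r^{\mathrm{surr}}_R-r^*\|_{L^2(\nu_R^{\mathrm{surr}})}\lesssim \mathcal C_{0,R}(\beta_2)^{1/2}(N^{-1}+\varepsilon)+\mathcal C_{0,R}(\beta_2)^{1/4}(T_2\delta_0)^{-1/4}.
\]

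Write $w(x)=\mathbf 1_{B_R}(x)e^{r_{a_0}(x)/\beta_2}$. Then $\nu_R^{\mathrm{surr}}=w\,\pi_{\mathrm{ref}}/Z^{a_0}_{\beta_2,R}$, and on $B_R$ we have $w\le e^{M_{0,R}/\beta_2}$. Dividing the empirical objective by $Z^{a_0}_{\beta_2,R}$ turns it into an unweighted ridge regression under $\nu_R^{\mathrm{surr}}$, with effective weights $\tilde w=w/Z^{a_0}_{\beta_2,R}\le\mathcal C_{0,R}(\beta_2)$. We then run the standard random-feature ridge argument with an approximation step and an estimation step.

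\emph{Approximation.} On the recovery event, at least $\tilde\Theta(N)$ directions satisfy $\langle w_j,\theta^*\rangle\ge1-\varepsilon$, and the biases are uniform on $[-C_b,C_b]$. Using the one-dimensional polynomial approximation along $\theta^*$ (degree $q\ge q^*$), we build $\bar a$ with $\|\bar a\|_2^2/N\lesssim1$ and
\[
\sup_{x\in B_R}|r_{\bar a}(x)-r^*(x)|\lesssim N^{-1}+\varepsilon .
\]
The $\varepsilon$ part comes from Lipschitz control of the polynomial on $B_R$, since $\|P_Sx\|\le R$; the $N^{-1}$ part comes from a Monte Carlo or quadrature error over the biases. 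A sup-norm bound controls the $L^2(\nu)$ error for every $\nu$ supported on $B_R$. The factor $\mathcal C_{0,R}^{1/2}$ in the statement should enter when passing from the empirical weighted risk, scaled by $\tilde w$, to the population $\nu$-norm through the basic inequality.

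\emph{Estimation.} Take $\hat a$ and compare it with $\bar a$ via the basic inequality. The noise cross term is $\frac1{T_2}\sum_i\tilde w(x_i)\zeta_i(r_{\hat a}-r_{\bar a})(x_i)$, with $\zeta$ bounded by $\tau$ and $\tilde w\le\mathcal C_{0,R}$. Its variance is at most $\mathcal C_{0,R}\,\mathbb E_\nu[\cdot^2]$, because $\mathbb E_{\pi_{\mathrm{ref}}}[\tilde w^2f^2]\le\mathcal C_{0,R}\mathbb E_\nu[f^2]$. Chebyshev then gives a deviation of order $(\mathcal C_{0,R}/(T_2\delta_0))^{1/2}\|a\|^2/N$. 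We handle the empirical-to-population covariance gap $\|\hat\Sigma_{\tilde w}-\Sigma_\nu\|$ the same way, again by Chebyshev or Markov, which accounts for the union of events giving $1-4\delta_0$. Balancing against $\lambda\|a\|^2$ with $\lambda\asymp(\mathcal C_{0,R}/(T_2\delta_0))^{1/2}$ gives a squared error of order $\mathcal C_{0,R}^{1/2}(T_2\delta_0)^{-1/2}$ plus the approximation error. Taking square roots yields the $\mathcal C_{0,R}^{1/4}(T_2\delta_0)^{-1/4}$ term, which is the surrogate-weighted $\lambda$ scale in \eqref{eq:surrogate-weighted-lambda-choice}. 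Multiplying by $2\beta_2^{-1}M_R\mathcal D_{\mathrm{surr},R}$ completes the bound.

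The main obstacle is the estimation step. With only the second-moment control available from Chebyshev, one must verify that all polynomial features are bounded on $B_R$, since $\sigma$ is a polynomial and $\|P_Sx\|\le R$. The constants depending on $R$ are then absorbed into $\lesssim$, while the dependence on $\mathcal C_{0,R}$ is kept explicit.
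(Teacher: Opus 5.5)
Your skeleton matches the paper's proof. You use the same three-term split, Proposition~\ref{prop:low-temp-temperature-mismatch} for the non-learning terms, and Lemma~\ref{lem:learning-bridge} with $c=0$. The ridge analysis is likewise a basic-inequality argument in which each weighted second moment is at most $\mathcal C_{0,R}(\beta_2)$ times a bounded feature moment on $B_R$. Your normalized choice $\lambda\asymp(\mathcal C_{0,R}/(T_2\delta_0))^{1/2}$ is exactly \eqref{eq:surrogate-weighted-lambda-choice} divided by $Z^{a_0}_{\beta_2,R}$.

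\textbf{The gap is the approximation bound.} Strong recovery gives $\|w_j-\theta^*\|_2=\sqrt{2(1-\langle w_j,\theta^*\rangle)}\le\sqrt{2\varepsilon}$. So Lipschitz control on $B_R$ only yields $\sup_{B_R}|\sigma(\langle w_j,x\rangle+b_j)-\sigma(\langle\theta^*,x\rangle+b_j)|\lesssim R\sqrt{\varepsilon}$. A Monte Carlo average over $N$ random biases gives $N^{-1/2}$, not $N^{-1}$. Carried through your argument, this produces $\mathcal C_{0,R}^{1/2}(N^{-1/2}+\sqrt{\varepsilon})$ rather than the stated $\mathcal C_{0,R}^{1/2}(N^{-1}+\varepsilon)$.

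The paper does not derive this rate. It imports it as Lemma~\ref{lem:second-layer-comparators-from-recovered-features} from \citet{lee2024neural}, which gives $\|r_{a^\sharp}-r^*\|^2_{L^2(P_{T_2})}\lesssim N^{-2}+\varepsilon^2$ with $\|a^\sharp\|_2^2\lesssim N$, but only in the empirical norm. That changes the bookkeeping. You can no longer close with $\|r_{\hat a}-r^*\|_{L^2(\nu)}\le\|r_{\hat a}-r_{\bar a}\|_{L^2(\nu)}+\sup_{B_R}|r_{\bar a}-r^*|$, because no population or sup-norm comparator bound is available.

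Instead, Lemma~\ref{lem:abstract-truncated-weighted-ridge} transfers the weighted risk of $r_{\hat a}-r^*$ itself from empirical to population. Besides the covariance gap $\Sigma_T-\Sigma$, this needs the deviations $u_T-u$ and $s_T-s$ of the target cross-moments. Both are controlled by $G^{(0)}_{4,\mathrm{wt},R}\lesssim e^{M_{0,R}/\beta_2}Z^{a_0}_{\beta_2,R}$, and they account for the remaining events in the $1-4\delta_0$ union bound. The factor $\mathcal C_{0,R}^{1/2}$ on the approximation term then comes from $\mathcal E_{W,T_2}(a^\sharp)\le e^{M_{0,R}/\beta_2}\|r_{a^\sharp}-r^*\|^2_{L^2(P_{T_2})}$, divided by $Z^{a_0}_{\beta_2,R}$.

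A smaller point concerns the noise cross term, which contains the data-dependent $\hat a$, so Chebyshev cannot be applied to it directly. Write it as $2\langle\bar a-\hat a,V\rangle$ with the data-only vector $V=T_2^{-1}\sum_i\tilde w(x_i)\zeta_i\psi_N(x_i)$. Then apply Markov to $\|V\|_2^2$ and absorb the term into the ridge penalty via Young's inequality, as the paper does.
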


Theorem~\ref{thm:final-label-weighted-regret} and
Theorem~\ref{thm:final-surrogate-weighted-regret} follow by combining
Lemma~\ref{lem:learning-bridge} with the common non-learning bound in
Proposition~\ref{prop:low-temp-temperature-mismatch}, and then applying the
corresponding weighted ridge learning bound.

Combining Theorem~\ref{thm:final-surrogate-weighted-regret} with
Lemma~\ref{lem:surrogate-near-maximizer-mass} in
Appendix~\ref{app:surrogate-partition} gives the explicit polynomial
upper bound \(\mathcal C_{0,R}(\beta_2)\lesssim\beta_2^{-d_S}\), where
\(d_S:=\dim S\). More refined local information about the maxima of
\(r_{a_0}\) can yield a smaller exponent
\(\alpha_0\le d_S\); see Remark~\ref{rem:sharper_exponents}.

\begin{corollary}[Polynomial surrogate-weighted value-gap bound]
\label{cor:surrogate-weighted-projected-truncation}
In the setting of Theorem~\ref{thm:final-surrogate-weighted-regret}, let
\(d_S:=\dim S\). Then, for all sufficiently small \({\beta_{2}}>0\), with
probability at least \(1-4\delta_0\),
\[
\begin{aligned}
  |\mathcal R_R^{\mathrm{surr}}|
  \lesssim\;&
  \Gamma_{R,{\overline{\beta}_{2}}}({\beta_{2}},\beta^*)
  +
  M_R\mathcal D_{\mathrm{surr},R}
  \left[
    {\beta_{2}}^{-1-d_S/2}(N^{-1}+\varepsilon)
    +
      {\beta_{2}}^{-1-d_S/4}
      (T_2\delta_0)^{-1/4}
  \right].
\end{aligned}
\]
\end{corollary}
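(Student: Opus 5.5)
The plan is to substitute a polynomial bound on $\mathcal C_{0,R}(\beta_2)$ into Theorem~\ref{thm:final-surrogate-weighted-regret}. Since the non-learning term $\Gamma_{R,\overline\beta_2}(\beta_2,\beta^*)$ and the factor $M_R\mathcal D_{\mathrm{surr},R}$ appear unchanged in both statements, it suffices to show that for all sufficiently small $\beta_2$,
\[
  \mathcal C_{0,R}(\beta_2)\lesssim \beta_2^{-d_S},
\]
with the implicit constant depending only on $R$, the frozen first layer, the biases and $a_0$. Given this, $\mathcal C_{0,R}^{1/2}\beta_2^{-1}\lesssim\beta_2^{-1-d_S/2}$ and $\mathcal C_{0,R}^{1/4}\beta_2^{-1}\lesssim\beta_2^{-1-d_S/4}$. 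Substituting these into the bracket of Theorem~\ref{thm:final-surrogate-weighted-regret} gives exactly the displayed bound, still on the same event of probability at least $1-4\delta_0$.

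To prove the bound on $\mathcal C_{0,R}$, I would use Lemma~\ref{lem:surrogate-near-maximizer-mass}. I expect that lemma to give a lower bound on the reference mass of the near-maximizer set of $r_{a_0}$ inside $B_R$. The underlying argument is as follows.

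First, $r_{a_0}(x)$ depends on $x$ only through $P_Sx$, so $r_{a_0}(x)=g(P_Sx)$ with $g$ a polynomial on the $d_S$-dimensional space $S$. On the compact ball $\{\|z\|\le R\}$, $g$ attains its maximum $M_{0,R}$ at some $z_0$ and is Lipschitz with some constant $L_R$. Hence $g\ge M_{0,R}-\beta_2$ on the set $A_{\beta_2}:=\{z:\|z\|\le R,\ \|z-z_0\|\le \beta_2/L_R\}$. Because this ball is centered at a point of the bigger ball $\{\|z\|\le R\}$, it retains at least a fixed fraction of its volume inside that bigger ball once $\beta_2$ is small. Under $\pi_{\mathrm{ref}}$, $P_Sx$ is a standard Gaussian on $S$, and its density is bounded below by $(2\pi)^{-d_S/2}e^{-R^2/2}$ on the ball. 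This gives $\pi_{\mathrm{ref}}(P_Sx\in A_{\beta_2})\gtrsim \beta_2^{d_S}$.

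Therefore
\[
  Z^{a_0}_{\beta_2,R}\ge e^{(M_{0,R}-\beta_2)/\beta_2}\,\pi_{\mathrm{ref}}(P_Sx\in A_{\beta_2})\gtrsim e^{-1}\beta_2^{d_S}e^{M_{0,R}/\beta_2},
\]
and dividing $e^{M_{0,R}/\beta_2}$ by this lower bound yields $\mathcal C_{0,R}(\beta_2)\lesssim\beta_2^{-d_S}$.

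The only delicate point is bookkeeping. The Lipschitz constant $L_R$ and the maximizer $z_0$ depend on the frozen random quantities $(w_j,b_j,a_0)$, so I must check that these dependencies are among those the paper's $\lesssim$ convention permits to be hidden. They are, since that convention explicitly allows dependence on the frozen first-layer directions and on $R$. I must also confirm that the small-$\beta_2$ range ("sufficiently small") is compatible with the constraint $\beta_2\le\overline\beta_2$ required by Theorem~\ref{thm:final-surrogate-weighted-regret}.
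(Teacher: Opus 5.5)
Your proposal is correct and follows the paper's route: the paper obtains the corollary by substituting the bound $\mathcal C_{0,R}(\beta_2)\le\bar C_{0,R}\beta_2^{-d_S}$ from Lemma~\ref{lem:surrogate-near-maximizer-mass} into Theorem~\ref{thm:final-surrogate-weighted-regret}, and it proves that lemma with the same argument you give (Lipschitz near-maximizer ball, Gaussian-density lower bound on $K_R$, and volume fraction of the small ball inside $K_R$). The paper takes the radius as $\beta_2/L_+$ with $L_+=\max\{L_{0,R},1\}$, which also covers the degenerate case where $f_0$ is constant on $K_R$.
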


The three bounds decompose the policy value gap into
interpretable components. The common term
\(\Gamma_{R,\overline{\beta}_{2}}(\beta_{2},\beta^*)\) captures the
temperature mismatch and projected-truncation costs, which are independent
of the learning rule. The remaining terms are learning costs, scaled by
\(\mathcal D_{\mathrm{lbl},R}\) or \(\mathcal D_{\mathrm{surr},R}\). 
Inside the brackets, the approximation error is proportional to
\(N^{-1}+\varepsilon\), reflecting how well \(r^*\) can be approximated
using the recovered features, while the finite-sample error decays as
\((T_2\delta_0)^{-1/4}\).

The difference between label and surrogate weighting is the extra factor
\(\mathcal C_{0,R}(\beta_{2})\) in the surrogate learning cost. As defined
in Theorem~\ref{thm:final-surrogate-weighted-regret},
\(\mathcal C_{0,R}(\beta_{2})\) upper bounds the density of the
surrogate-induced measure \(\nu_R^{\mathrm{surr}}\) relative to
\(\pi_{\mathrm{ref}}\) on \(B_R\). When this factor is large, the surrogate
weights may concentrate mass on a narrow high-surrogate-reward region. This
makes the ridge objective emphasize errors on that region and can also reduce
the effective sample efficiency. Consequently, both the approximation error
and the finite-sample error are amplified in the surrogate-weighted bound.

\paragraph{Choosing the deployment temperature.}
The value-gap bounds suggest a resource-aware choice of the deployment
temperature. Smaller \(\beta_2\) better matches a low-temperature target
policy, but also amplifies the learning terms through exponential
weighting and the coverage quantity \(\mathcal D_{\mathrm w,R}\). Since
\(\mathcal D_{\mathrm w,R}\) depends on the fitted reward, we choose
\(\beta_2\) using a deterministic coverage envelope
\(\overline{\mathcal D}_{\mathrm w,R}\). For a tolerance \(\eta>0\), let
\(\mathcal S_w^{\mathrm{adm}}(\eta)\subset(0,\overline\beta_2]\) be the
set of temperatures for which the learning-term upper bound is at most
\(\eta\). Choosing
$
  \beta_{2,\eta}^{\mathrm w}
  \in
  \arg\min_{\beta\in\mathcal S_w^{\mathrm{adm}}(\eta)}
  \Gamma_{R,\overline\beta_2}(\beta,\beta^\ast)
$
gives
\[ \textstyle
  |\mathcal R_R^{\mathrm w}|
  \lesssim
  \inf_{\beta\in\mathcal S_w^{\mathrm{adm}}(\eta)}
  \Gamma_{R,\overline\beta_2}(\beta,\beta^\ast)
  +
  \eta.
\]
Thus the target temperature is used when it is admissible under the
coverage envelope; otherwise one chooses the admissible temperature with
the best temperature-mismatch and truncation tradeoff.
Appendix~\ref{app:temperature-selection} gives explicit forms of
\(\mathcal S_w^{\mathrm{adm}}(\eta)\) and its resource dependence.

\section{Conclusion and Discussion}

This paper studied reward modeling in a controlled Gaussian single-index setting, with emphasis on how reward prediction interacts with downstream tilted-policy value. The main point is that the relevant statistical target is not only prediction accuracy under $x\sim\pi_{\mathrm{ref}}$, but the tilted-policy value of the learned truncated tilted policy. We analyzed this interaction through first-layer feature recovery under an exponentially weighted reward oracle and second-layer weighted ridge fitting.

For feature recovery, we examined the implications of the IE/GE theory for the reward-induced exponential oracle.
The exponential weighting changes the Hermite signal seen by the first layer, and the analysis makes explicit how this signal depends on the policy temperature ${\beta_{1}}$. 
In our setting, the lower bound on ${\beta_{1}}$ is independent of the dimension $d$. Crucially, while matching the sample complexity of \citep{lee2024neural,tsiolis2025information,nishikawa2025nonlinear}, this improves the polylogarithmic-in-$d$ scale required by clipping-based analyses \citep{nishikawa2025nonlinear} to a fixed $O(1)$ bound.
This bound is consistent with the interpretation that, before first-layer feature recovery, too small a temperature may over-emphasize high-reward samples before the relevant feature is reliably represented.

For second-layer ridge fitting, we translated weighted reward prediction guarantees into tilted-policy value bounds. After first-layer feature recovery, the frozen network reduces the remaining problem to a weighted ridge fit over the second-layer coefficients. We analyzed both a label-weighted objective, which serves as a clean theoretical benchmark for reward-induced exponential weighting, and a surrogate-weighted objective, which is closer to the plug-in reward-modeling pipeline. In both cases, the final bounds decompose the tilted-policy value gap into a temperature mismatch term, a projected-truncation term, and a learning term controlled by weighted reward prediction error. The dependence on ${\beta_{2}}$ is kept explicit throughout the ridge-fitting analysis.

While the Gaussian single-index model is a theoretical abstraction, it provides a foundational characterization of the interaction between reward modeling and policy value. Future work could build upon this framework by relaxing several technical assumptions, such as extending the analysis to richer input distributions or multi-index reward structures beyond the current Gaussian setting. Another promising direction is to remove the projected truncation to further generalize the theoretical results.

\begin{ack}

RH and RK were partially supported by JSPS KAKENHI (25H01107) and JST BOOST (JPMJBS2418).
TS was partially supported by JSPS KAKENHI (24K02905) and JST CREST (JPMJCR2015).
This research is supported by the National Research Foundation, Singapore and the Ministry of Digital Development and Information under the AI Visiting Professorship Programme (award number AIVP-2024-004). Any opinions, findings and conclusions or recommendations expressed in this material are those of the author(s) and do not reflect the views of National Research Foundation, Singapore and the Ministry of Digital Development and Information.
\end{ack}

\medskip

\bibliographystyle{plainnat}
\bibliography{ref}


\appendix

\section{Reward-weighted feature recovery}
\label{app:feature-recovery-details}

This appendix gives the detailed feature-recovery statements summarized by
Theorem~\ref{thm:feature-recovery-main}. The analysis leverages the
online spherical SGD framework of \citet{tsiolis2025information} to treat the
exponentially weighted reward oracle induced by our objective.

We use the information exponent, generative exponent, and Hermite-coefficient
notation from Definition~\ref{def:information-generative-exponents}. For the
target activation, write
\[
  p_{\mathrm{gen}}
  :=
  \mathrm{GE}(\sigma^*).
\]
Since $\sigma^*$ is a nonconstant polynomial, the polynomial-link
characterization of \citet[Proposition 6 and Lemma 8]{lee2024neural}
implies that this level is attained by a power transformation; see
also \citet[Lemma 2.3]{tsiolis2025information}. We therefore write
\[
  I_*
  :=
  \min
  \left\{
    I\ge 1:
    \mathrm{IE}\bigl((\sigma^*)^I\bigr)
    =
    p_{\mathrm{gen}}
  \right\}.
\]

During feature recovery, the second-layer coefficients are kept fixed at $a_0$, and each direction is updated by online SGD with spherical projection.
The update below is the leading correlation part of the label-weighted squared
loss gradient near initialization \cite{DBLP:conf/colt/AbbeAM23,lee2024neural,tsiolis2025information}. Indeed, for
\[
  \mathcal L_{\beta_1}[W,a]
  :=
  \mathbb E_{x\sim\pi_{\mathrm{ref}},\,\zeta\sim\mathrm{Unif}[-\tau,\tau]}
  \left[
    e^{y/{\beta_1}}(y-r_a(x;W))^2
  \right],
\]
the spherical first-layer gradient-descent direction for neuron $j$, up to the
common positive scale $2/N$, is
\[
  a_j
  \mathbb E_{x,\zeta}
  \left[
    e^{y/{\beta_1}}(y-r_a(x;W))
    \sigma'(\langle w_j,x\rangle)P_{w_j}^{\perp}x
  \right].
\]
Decomposing
$e^{y/{\beta_1}}(y-r_a(x;W))=y e^{y/{\beta_1}}-r_a(x;W)e^{y/{\beta_1}}$
separates the leading term
\[
  a_j
  \mathbb E_{x,\zeta}
  \left[
    y e^{y/{\beta_1}}
    \sigma'(\langle w_j,x\rangle)P_{w_j}^{\perp}x
  \right]
\]
from the correction involving the current network output. In the small-readout
regime used for initial feature learning, this leading correlation term is the
teacher-side signal analyzed by the single-index oracle.
For the coefficient-level statement below, the relevant population oracle is the exponentially weighted label transform
\[
  G_{\beta_1}(u)
  :=
  \mathbb E_{\zeta\sim\mathrm{Unif}[-\tau,\tau]}
  \left[
    (u+\zeta)e^{(u+\zeta)/{\beta_1}}
  \right].
\]
For each $i\ge 1$, define the teacher-side signal coefficient
\[
  U_i({\beta_1})
  :=
  \mathbb E_{g\sim\mathcal N(0,1)}
  \left[
    G_{\beta_1}(\sigma^*(g))\mathrm{He}_i(g)
  \right].
\]
For a fixed second-layer coefficient $a$, define the student-side coefficient
\[
V_{i-1}
:=
\mathbb E_{h\sim\mathcal N(0,1)}
\left[
  \sigma'(h)\mathrm{He}_{i-1}(h)
\right],
\]
and set
\[
\mu_i({\beta_1}, a) := i \, a \, U_i({\beta_1}) V_{i-1}.
\]

\begin{proof}[Proof of Lemma~\ref{lem:exponent_reduction}]
The label transform $G_{\beta_1}$ can be written as
\[
  G_{\beta_1}(u)
  =
  e^{u/\beta_1}
  \left(
    u\,\mathbb E_{\zeta\sim\mathrm{Unif}[-\tau,\tau]}
    [e^{\zeta/\beta_1}]
    +
    \mathbb E_{\zeta\sim\mathrm{Unif}[-\tau,\tau]}
    [\zeta e^{\zeta/\beta_1}]
  \right).
\]
Since $\sigma^*$ is an upper-bounded polynomial and $\zeta$ is bounded,
$G_{\beta_1}\in L^2(P_{\sigma^*(g)})$. Therefore
$G_{\beta_1}\circ\sigma^*$ is an admissible label transform in the
definition of $\mathrm{GE}(\sigma^*)$, and the definition gives
$U_i(\beta_1)=0$ for $1\le i<p_{\mathrm{gen}}$.

It remains to identify the first nonzero coefficient. Let
$p=p_{\mathrm{gen}}$, $t=\beta_1^{-1}$, and
$S=\sigma^*(g)+\zeta$. By Taylor's formula with remainder,
\[
  S e^{tS}
  =
  \sum_{m=0}^{I_*-1}
  \frac{t^mS^{m+1}}{m!}
  +
  R_{I_*}(t,S),
\]
where, if $\sigma^*(g)\le B_*$,
\[
  |R_{I_*}(t,S)|
  \le
  \frac{t^{I_*}}{I_*!}
  |S|^{I_*+1}e^{t(B_*+\tau)_{+}}.
\]
The right-hand side is integrable after multiplication by
$|\mathrm{He}_p(g)|$, because $S$ is a polynomial of the Gaussian
variable plus bounded noise. Hence the remainder contributes
$O(\beta_1^{-I_*})$ to $U_p(\beta_1)$.

For $m<I_*-1$, the term
$\mathbb E_{\zeta\sim\mathrm{Unif}[-\tau,\tau]}[S^{m+1}]$ is a
polynomial in $\sigma^*(g)$ involving only powers
$(\sigma^*)^k$ with $k<I_*$. By the definition of $I_*$ and the fact that
$p$ is the generative exponent, each such power has zero $p$-th Hermite
coefficient. For $m=I_*-1$,
\[
  \mathbb E_{\zeta\sim\mathrm{Unif}[-\tau,\tau]}[S^{I_*}]
  =
  (\sigma^*)^{I_*}
  +
  \text{lower powers of }\sigma^*,
\]
and the lower powers again have zero $p$-th Hermite coefficient. Thus
\[
  U_p(\beta_1)
  =
  \frac{
    \mathsf H_p[(\sigma^*)^{I_*}]
  }{
    (I_*-1)!\,\beta_1^{I_*-1}
  }
  +
  O(\beta_1^{-I_*}).
\]
The leading coefficient is nonzero by the definition of $I_*$. The
displayed lower bound follows for all sufficiently large $\beta_1$.
\end{proof}

\begin{theorem}[Weak recovery for the exponentially weighted first-layer oracle]
\label{thm:first-layer-weak-recovery}
For any failure parameter \(\delta\in(0,1)\), there exists a constant
\(C_{\mathrm{wk}}>0\) such that the following holds.
Consider the feature-recovery update with step size \(\eta_1>0\):
\[
  w_j^{(t+1)}
  =
  \frac{
    w_j^{(t)}
    +
    \eta_1
    (a_0)_j
    y_t e^{y_t/{\beta_1}}
    \sigma'(\langle w_j^{(t)},x_t\rangle)
    P_{w_j^{(t)}}^\perp x_t
  }{
    \left\|
    w_j^{(t)}
    +
    \eta_1
    (a_0)_j
    y_t e^{y_t/{\beta_1}}
    \sigma'(\langle w_j^{(t)},x_t\rangle)
    P_{w_j^{(t)}}^\perp x_t
    \right\|_2
  },
\]
where $(x_t,\zeta_t)$ are i.i.d. samples from
$\pi_{\mathrm{ref}}\otimes\mathrm{Unif}[-\tau,\tau]$,
$y_t=r^*(x_t)+\zeta_t$, and
$P_w^\perp:=I_d-ww^\top$.
Fix a neuron $j$ and suppose that
\[
  \mu_{p_{\mathrm{gen}}}({\beta_1},(a_0)_j)>0.
\]
If
\[
  \eta_1
  \le
  C_{\mathrm{wk}}
  \delta\,
  \mu_{p_{\mathrm{gen}}}({\beta_1},(a_0)_j)
  d^{-(p_{\mathrm{gen}}/2\vee 1)},
\]
then, conditional on an initialization satisfying
$\langle w_j^{(0)},\theta^*\rangle\asymp d^{-1/2}$, the iterate reaches weak recovery,
\[
  \langle w_j^{(T_{\mathrm{wk}})},\theta^*\rangle
  \ge
  c_{\mathrm{wk}},
  \qquad
  c_{\mathrm{wk}}\gtrsim 1/\mathrm{polylog}\,d,
\]
with probability at least $1-\delta$ after
\[
  T_{\mathrm{wk}}
  =
  \widetilde O
  \left(
    \eta_1^{-1}
    \mu_{p_{\mathrm{gen}}}({\beta_1},(a_0)_j)^{-1}
    d^{((p_{\mathrm{gen}}-2)/2)\vee 0}
  \right)
\]
iterations.
With the largest stable step size, this becomes
\[
  T_{\mathrm{wk}}
  =
  \widetilde O
  \left(
    \mu_{p_{\mathrm{gen}}}({\beta_1},(a_0)_j)^{-2}
    d^{(p_{\mathrm{gen}}-1)\vee 1}
  \right).
\]
\end{theorem}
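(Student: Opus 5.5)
The proof follows the standard online spherical SGD analysis of \citet{benarous2021online,tsiolis2025information}: we track the scalar overlap $m_t:=\langle w_j^{(t)},\theta^*\rangle$ and show it obeys a noisy discrete ODE whose drift is governed by the lowest nonvanishing Hermite degree. Concretely, write the update as $w^{(t+1)}=(w^{(t)}+\eta_1 g_t)/\|w^{(t)}+\eta_1 g_t\|_2$ with $g_t:=(a_0)_j y_te^{y_t/\beta_1}\sigma'(\langle w^{(t)},x_t\rangle)P^\perp_{w^{(t)}}x_t$. Taylor-expanding the normalization gives
\[
m_{t+1}\ge m_t+\eta_1\langle g_t,\theta^*\rangle-\tfrac{1}{2}\eta_1^2\|g_t\|_2^2\,m_t-O(\eta_1^3\|g_t\|_2^3).
\]
The first step is the population drift. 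Conditioning on $\zeta$ and expanding $G_{\beta_1}(\sigma^*(\cdot))$ and $\sigma'$ in Hermite polynomials, the Gaussian identity $\mathbb E[f(\langle\theta^*,x\rangle)g(\langle w,x\rangle)]=\sum_i \mathsf H_i[f]\mathsf H_i[g]m^i/i!$ (applied together with Stein's lemma for the $P^\perp_w x$ factor) yields
\[
\mathbb E[\langle g_t,\theta^*\rangle\mid w^{(t)}]=(1-m_t^2)\sum_{i\ge1}\tfrac{1}{(i-1)!}(a_0)_jU_i(\beta_1)V_{i-1}m_t^{i-1},
\]
up to the paper's normalization constants. By Lemma~\ref{lem:exponent_reduction}, all terms with $i<p_{\mathrm{gen}}$ vanish. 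The leading term is therefore $\asymp\mu_{p_{\mathrm{gen}}}(\beta_1,(a_0)_j)m_t^{p_{\mathrm{gen}}-1}$, which is positive by assumption. The higher terms are dominated for $m_t\le c_{\mathrm{wk}}$ with $c_{\mathrm{wk}}$ small, because the coefficients $U_i,V_{i-1}$ are summable. This holds since $\sigma$ is polynomial, so only finitely many $V_{i-1}$ are nonzero; consequently $c_{\mathrm{wk}}$ may in fact be taken as a constant, which is consistent with $\gtrsim1/\mathrm{polylog}\,d$.

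The second step is the variance and moment bounds. Since $\sigma^*\le B_*$ and $|\zeta|\le\tau$, the weight $y e^{y/\beta_1}$ is bounded above by $(B_*+\tau)e^{(B_*+\tau)/\beta_1}$. It may be large and negative in magnitude only polynomially, because $|y|e^{y/\beta_1}\le|y|$ when $y<0$. Together with the polynomial $\sigma'$ and Gaussian hypercontractivity, this gives $\mathbb E\|g_t\|_2^{k}\lesssim d^{k/2}$ and $\mathbb E\langle g_t,\theta^*\rangle^2\lesssim1$, with constants depending on $\beta_1$ only through a factor bounded uniformly for $\beta_1\ge\underline\beta_1$. The second-order loss term is then $\eta_1^2 d\,m_t$. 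It is dominated by the drift $\eta_1\mu m_t^{p_{\mathrm{gen}}-1}$ throughout $m_t\gtrsim d^{-1/2}$ precisely under the step-size condition $\eta_1\lesssim\mu\,d^{-(p_{\mathrm{gen}}/2\vee1)}$. For $p_{\mathrm{gen}}\le2$ the binding comparison is against $\eta_1\mu m_t$ itself, which produces the $\vee1$.

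The third step, the main obstacle, is controlling the martingale fluctuations over the long escape phase starting from $m_0\asymp d^{-1/2}$. We decompose $m_t=m_0+\eta_1\sum_{s<t}(\text{drift}_s)+M_t-(\text{second-order})$, where $M_t:=\eta_1\sum_{s<t}(\langle g_s,\theta^*\rangle-\mathbb E[\cdot\mid\mathcal F_s])$. By Doob's inequality, $\sup_{t\le T}|M_t|\lesssim\eta_1\sqrt{T/\delta}$ with probability $1-\delta/2$. Requiring this to be at most $m_0/2\asymp d^{-1/2}$ for $T=T_{\mathrm{wk}}$ is exactly where the factor $\delta$ in the step-size condition enters. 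On that event, a comparison with the ODE $\dot m=\eta_1\mu m^{p_{\mathrm{gen}}-1}$, carried out via the standard bootstrap/stopping-time argument of \citet{benarous2021online}, gives $m_t\ge m_0/2+\tfrac{1}{2}\eta_1\mu\sum_s m_s^{p_{\mathrm{gen}}-1}$. This forces hitting $c_{\mathrm{wk}}$ within $T\asymp(\eta_1\mu)^{-1}m_0^{-(p_{\mathrm{gen}}-2)}$ steps for $p_{\mathrm{gen}}\ge3$, $\log d$ steps for $p_{\mathrm{gen}}=2$, and $d^{1/2}$-scale steps for $p_{\mathrm{gen}}=1$. This is $\widetilde O(\eta_1^{-1}\mu^{-1}d^{((p_{\mathrm{gen}}-2)/2)\vee0})$. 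Substituting the largest admissible $\eta_1$ gives the final $\widetilde O(\mu^{-2}d^{(p_{\mathrm{gen}}-1)\vee1})$. We must also check that the martingale budget $\eta_1\sqrt{T}\lesssim d^{-1/2}$ is consistent with this $T$; for $p_{\mathrm{gen}}\ge2$ it reduces to the stated step-size constraint, and the log factors are absorbed into $\widetilde O$.
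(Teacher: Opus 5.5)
Your proposal is correct and follows the same route as the paper. The paper gives no self-contained argument for this theorem; it imports the online spherical SGD analysis of \citet{tsiolis2025information} (building on \citet{benarous2021online}), using the leading Hermite drift coefficient $\mu_{p_{\mathrm{gen}}}(\beta_1,(a_0)_j)$ supplied by Lemma~\ref{lem:exponent_reduction}, and you have written out that framework's drift, second-order and martingale steps explicitly.

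There are two minor slips, neither of which affects the conclusion. First, for $p_{\mathrm{gen}}=1$ the drift is of order $\eta_1\mu$ independently of $m_t$, so the hitting time is $O((\eta_1\mu)^{-1})$ rather than ``$d^{1/2}$-scale''; your final $\widetilde O$ formula already has the correct factor $d^{0}$. Second, for $p_{\mathrm{gen}}=2$ a plain Doob bound over the whole horizon costs an extra $\log d$ in the step size, and a discounted (multiplicative) martingale bound avoids this.
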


\begin{corollary}[Large-fixed-temperature weak recovery dependence on ${\beta_1}$]
\label{cor:first-layer-large-fixed-temperature-weak-recovery}
Under the assumptions of Theorem~\ref{thm:first-layer-weak-recovery}, suppose additionally that $|(a_0)_jV_{p_{\mathrm{gen}}-1}|\ge c_{\mathrm{stu}}>0$ and the sign of $(a_0)_jV_{p_{\mathrm{gen}}-1}$ is compatible with $U_{p_{\mathrm{gen}}}({\beta_1})$.
Then for every ${\beta_1} \ge \underline{\beta}_1$, the largest-stable-step weak recovery time satisfies
\[
  T_{\mathrm{wk}}
  =
  \widetilde O
  \left(
    {\beta_1}^{2(I_*-1)}
    d^{(p_{\mathrm{gen}}-1)\vee 1}
  \right).
\]
\end{corollary}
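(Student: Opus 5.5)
The plan is to substitute the signal lower bound of Lemma~\ref{lem:exponent_reduction} directly into the largest-stable-step bound of Theorem~\ref{thm:first-layer-weak-recovery}, which is
\[
  T_{\mathrm{wk}}=\widetilde O\bigl(\mu_{p_{\mathrm{gen}}}(\beta_1,(a_0)_j)^{-2}d^{(p_{\mathrm{gen}}-1)\vee 1}\bigr).
\]
Everything then reduces to lower-bounding $\mu_{p_{\mathrm{gen}}}$ uniformly in $d$ and checking that its sign is positive, so that the theorem applies.

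\textbf{Step 1 (factorize $\mu$).} By definition,
\[
  \mu_{p_{\mathrm{gen}}}(\beta_1,(a_0)_j)=p_{\mathrm{gen}}\,U_{p_{\mathrm{gen}}}(\beta_1)\,\bigl((a_0)_jV_{p_{\mathrm{gen}}-1}\bigr).
\]

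\textbf{Step 2 (sign).} The compatibility assumption says that $(a_0)_jV_{p_{\mathrm{gen}}-1}$ has the same sign as $U_{p_{\mathrm{gen}}}(\beta_1)$. Hence $\mu_{p_{\mathrm{gen}}}>0$, which is the positivity hypothesis of Theorem~\ref{thm:first-layer-weak-recovery}.

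One point needs care. By Lemma~\ref{lem:exponent_reduction}, for $\beta_1\ge\underline\beta_1$ the sign of $U_{p_{\mathrm{gen}}}(\beta_1)$ equals the sign of $\mathsf H_{p_{\mathrm{gen}}}[(\sigma^*)^{I_*}]$. The reason is that the leading term dominates the $O(\beta_1^{-I_*})$ remainder once $\underline\beta_1$ is chosen large enough, which is exactly how $\underline\beta_1$ and $c_{\mathrm{sig}}$ were produced. So the sign condition is uniform over the whole range $\beta_1\ge\underline\beta_1$ and does not need to be rechecked for each temperature.

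\textbf{Step 3 (magnitude).} Combining $|U_{p_{\mathrm{gen}}}(\beta_1)|\ge c_{\mathrm{sig}}\beta_1^{-(I_*-1)}$ from Lemma~\ref{lem:exponent_reduction} with $|(a_0)_jV_{p_{\mathrm{gen}}-1}|\ge c_{\mathrm{stu}}$ gives
\[
  \mu_{p_{\mathrm{gen}}}\ge p_{\mathrm{gen}}c_{\mathrm{sig}}c_{\mathrm{stu}}\,\beta_1^{-(I_*-1)}.
\]
Therefore
\[
  \mu_{p_{\mathrm{gen}}}^{-2}\le (p_{\mathrm{gen}}c_{\mathrm{sig}}c_{\mathrm{stu}})^{-2}\beta_1^{2(I_*-1)}.
\]
The prefactor involves only dimension-free constants, since $c_{\mathrm{sig}}$ and $\underline\beta_1$ are independent of $d$ by the lemma, and $c_{\mathrm{stu}}$ and $p_{\mathrm{gen}}$ are fixed. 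So this prefactor is absorbed into $\widetilde O$.

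\textbf{Step 4 (conclude).} Inserting the bound from Step 3 into the largest-stable-step time yields the claimed $T_{\mathrm{wk}}=\widetilde O(\beta_1^{2(I_*-1)}d^{(p_{\mathrm{gen}}-1)\vee 1})$. The step-size constraint of Theorem~\ref{thm:first-layer-weak-recovery} is met by taking $\eta_1$ at its maximal admissible value with $\mu_{p_{\mathrm{gen}}}$ replaced by the lower bound from Step 3. This only shrinks $\eta_1$, so the constraint still holds.

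\textbf{Main obstacle.} The delicate point is that the $\widetilde O$ in Theorem~\ref{thm:first-layer-weak-recovery} may hide other $\beta_1$-dependent quantities besides $\mu$. An example is moment bounds on the stochastic gradient, such as $\mathbb E[y^2e^{2y/\beta_1}]$, which enter the martingale/noise control. I would verify that these stay bounded uniformly for $\beta_1\ge\underline\beta_1$. Since $y\le B_*+\tau$, the factor $e^{2y/\beta_1}$ is at most $e^{2(B_*+\tau)_+/\underline\beta_1}$, while $y^2$ and the polynomial growth of $\sigma'$ have Gaussian moments. Hence these constants are bounded independently of both $d$ and $\beta_1$ on this range, and the only $\beta_1$-dependence that survives is through $\mu^{-2}$.
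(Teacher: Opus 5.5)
Your proposal is correct and follows essentially the paper's route. The paper obtains this bound by substituting $|U_{p_{\mathrm{gen}}}(\beta_1)|\ge c_{\mathrm{sig}}\beta_1^{-(I_*-1)}$ from Lemma~\ref{lem:exponent_reduction}, together with the nondegenerate student coefficient, into the $\mu_{p_{\mathrm{gen}}}^{-2}d^{(p_{\mathrm{gen}}-1)\vee 1}$ largest-stable-step time of Theorem~\ref{thm:first-layer-weak-recovery}. Your extra checks are not spelled out in the paper but are valid and make the argument more careful: the sign of $U_{p_{\mathrm{gen}}}$ is uniform on $[\underline{\beta}_1,\infty)$, and the hidden moment constants are bounded uniformly in $\beta_1$ because $y\le B_*+\tau$.
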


\begin{theorem}[Strong recovery after weak recovery]
\label{thm:first-layer-strong-recovery}
For any failure parameter \(\delta\in(0,1)\), there exists a constant
\(C_{\mathrm{str}}>0\) such that the following holds. Work under the same
feature-recovery update and coefficient condition as in
Theorem~\ref{thm:first-layer-weak-recovery}. Suppose that a neuron has already
reached weak recovery,
\[
  \langle w_j^{(0)},\theta^*\rangle
  \ge
  c_{\mathrm{wk}},
  \qquad
  c_{\mathrm{wk}}\gtrsim 1/\mathrm{polylog}\,d.
\]
For any $\varepsilon\in(0,1)$, if
\[
  \eta_1
  \le
  C_{\mathrm{str}}
  \delta\,
  d^{-1}
  \varepsilon
  \mu_{p_{\mathrm{gen}}}({\beta_1},(a_0)_j)
  c_{\mathrm{wk}}^{p_{\mathrm{gen}}-1},
\]
then the iterate reaches strong recovery,
\[
  \langle w_j^{(T_{\mathrm{str}})},\theta^*\rangle
  \ge
  1-\varepsilon,
\]
with probability at least $1-\delta$ after
\[
  T_{\mathrm{str}}
  =
  \widetilde O
  \left(
    \eta_1^{-1}
    \varepsilon^{-1}
    \mu_{p_{\mathrm{gen}}}({\beta_1},(a_0)_j)^{-1}
  \right)
\]
additional iterations.
With the largest stable step size, this gives
\[
  T_{\mathrm{str}}
  =
  \widetilde O
  \left(
    d\,
    \varepsilon^{-2}
    c_{\mathrm{wk}}^{-(p_{\mathrm{gen}}-1)}
    \mu_{p_{\mathrm{gen}}}({\beta_1},(a_0)_j)^{-2}
  \right).
\]
\end{theorem}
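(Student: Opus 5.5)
The proof follows the standard two-phase online SGD analysis of \citet{benarous2021online,tsiolis2025information}, now started from the weak-recovery endpoint. The tracked quantity is the overlap $m_t:=\langle w_j^{(t)},\theta^*\rangle$.

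\emph{Step 1: one-step expansion.} Write the update as $w_j^{(t+1)}=(w_j^{(t)}+\eta_1 g_t)/\|w_j^{(t)}+\eta_1 g_t\|$ with $g_t:=(a_0)_j y_te^{y_t/\beta_1}\sigma'(\langle w_j^{(t)},x_t\rangle)P^\perp_{w_j^{(t)}}x_t$. Since $g_t\perp w_j^{(t)}$, the normalization factor is $(1+\eta_1^2\|g_t\|^2)^{-1/2}$. This gives
\[
  m_{t+1}\ge m_t+\eta_1\langle g_t,\theta^*\rangle-\tfrac{\eta_1^2}{2}\|g_t\|^2 m_t-O(\eta_1^3\|g_t\|^3).
\]

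\emph{Step 2: population drift.} Hermite-expand $G_{\beta_1}\circ\sigma^*$ and $\sigma'$. Stein's lemma then gives
\[
  \mathbb E\langle g_t,\theta^*\rangle=(1-m_t^2)\sum_{i\ge p_{\mathrm{gen}}}\frac{i\,(a_0)_jU_i(\beta_1)V_{i-1}}{i!}\,m_t^{i-1}.
\]
By Lemma~\ref{lem:exponent_reduction} the terms with $i<p_{\mathrm{gen}}$ vanish. The leading term is $\propto\mu_{p_{\mathrm{gen}}}(\beta_1,(a_0)_j)m_t^{p_{\mathrm{gen}}-1}(1-m_t^2)$, and it is positive by assumption. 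Once $m_t\ge c_{\mathrm{wk}}$, the higher-order terms must not cancel this leading term. I would handle this as in \citet{tsiolis2025information}: either use that the same sign condition propagates along the trajectory, or lower bound the whole drift on $[c_{\mathrm{wk}},1-\varepsilon]$ by $\mu\,c_{\mathrm{wk}}^{p_{\mathrm{gen}}-1}(1-m_t)$. The paper's step-size condition, which includes the factor $c_{\mathrm{wk}}^{p_{\mathrm{gen}}-1}$, suggests that this second lower bound is the intended one. In the variable $1-m_t$, this gives a contraction: $1-m_{t+1}\le(1-m_t)(1-c\eta_1\mu c_{\mathrm{wk}}^{p_{\mathrm{gen}}-1})+\text{noise}+\eta_1^2\|g_t\|^2$.

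\emph{Step 3: variance and noise.} Since $\sigma^*\le B_*$, $|\zeta|\le\tau$, and $\sigma'$ is polynomial, the weight $|y e^{y/\beta_1}|$ is bounded by a polynomial in $x$. Hence $\mathbb E\|g_t\|^2\lesssim d$ and the moments have polynomial tails. The second-order loss $\eta_1^2 d$ must stay below $\eta_1\mu c_{\mathrm{wk}}^{p_{\mathrm{gen}}-1}\varepsilon$ near the target $1-m_t\approx\varepsilon$, which is exactly the stated constraint $\eta_1\lesssim\delta d^{-1}\varepsilon\mu c_{\mathrm{wk}}^{p_{\mathrm{gen}}-1}$. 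Write the martingale part as $M_t=\eta_1\sum_s(\langle g_s,\theta^*\rangle-\mathbb E[\cdot])$; its quadratic variation is $\eta_1^2\sum_s\mathbb E\langle g_s,\theta^*\rangle^2\lesssim\eta_1^2 T$. Doob's $L^2$ inequality with failure probability $\delta$ (this is where the factor $\delta$ enters) keeps $\sup_{t\le T}|M_t|\le c_{\mathrm{wk}}/4$. This also keeps $m_t$ from falling below $c_{\mathrm{wk}}/2$, so the drift lower bound stays valid throughout.

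\emph{Step 4: iteration count.} Unroll the contraction. The deterministic part reaches $1-m\le\varepsilon/2$ after $T\asymp(\eta_1\mu c_{\mathrm{wk}}^{p_{\mathrm{gen}}-1})^{-1}\log(1/\varepsilon)$ steps. The accumulated noise floor is $\eta_1 d/(\mu c_{\mathrm{wk}}^{p_{\mathrm{gen}}-1})\le\varepsilon/2$ by the step-size choice, after absorbing logs into $\widetilde O$. The stated $\varepsilon^{-1}$ form, $T=\widetilde O(\eta_1^{-1}\varepsilon^{-1}\mu^{-1})$, is looser than this bound and hence also valid. Substituting the largest admissible $\eta_1$ gives $T_{\mathrm{str}}=\widetilde O(d\varepsilon^{-2}c_{\mathrm{wk}}^{-(p_{\mathrm{gen}}-1)}\mu^{-2})$.

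The main obstacle is Step 2: controlling the higher Hermite contributions to the drift when $m_t$ is no longer small. The cleanest fix is to invoke the corresponding strong-recovery lemma of \citet{tsiolis2025information} for a generic link whose first nonzero correlation occurs at degree $p_{\mathrm{gen}}$, with the transformed label $G_{\beta_1}\circ\sigma^*$ as that link. Its hypotheses are a bounded second moment, which holds by the upper bound $B_*$, and positive leading correlation, which holds by the assumption on $\mu_{p_{\mathrm{gen}}}$.
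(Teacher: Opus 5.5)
Your route is the paper's route. The paper gives no separate proof of this theorem; inside the proof of Theorem~\ref{thm:feature-recovery-main} it simply applies the standard online spherical-SGD guarantee to the reward-weighted teacher signal, i.e.\ to the transformed link $G_{\beta_1}\circ\sigma^*$, which is exactly your final paragraph.

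\textbf{The gap.} The step you single out as the main obstacle is a genuine gap, and neither of your fixes closes it. The drift is $(1-m^2)\varphi(m)$ with $\varphi(m)=\sum_{i\ge p_{\mathrm{gen}}}\frac{(a_0)_jU_i(\beta_1)V_{i-1}}{(i-1)!}m^{i-1}$. The higher coefficients are fixed numbers whose signs are unrelated to $\mu_{p_{\mathrm{gen}}}$, so nothing ``propagates''. Consequently $\mu_{p_{\mathrm{gen}}}>0$ together with a bounded label does not give $\varphi(m)\gtrsim\mu_{p_{\mathrm{gen}}}c_{\mathrm{wk}}^{p_{\mathrm{gen}}-1}$ on $[c_{\mathrm{wk}},1-\varepsilon]$.

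\textbf{A counterexample.}
\begin{itemize}
\item Teacher: take $\sigma^*(u)=-u^2$. Then $p_{\mathrm{gen}}=2$ and $U_i=0$ for odd $i$. For large $\beta_1$, $U_2(\beta_1)=-2+O(\beta_1^{-1})$ and $U_4(\beta_1)=24\beta_1^{-1}+O(\beta_1^{-2})>0$.
\item Student: take the upper-bounded activation $\sigma(h)=-h^4+\gamma h^2$. Then $V_1=2\gamma-12$, $V_3=-24$, and $V_{i-1}=0$ for $i\ge5$.
\item With $(a_0)_j>0$ and $\gamma<6$, you get $\mu_2>0$ and $V_1\neq0$, so every hypothesis of the theorem holds. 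Yet $\varphi(m)=Am-Bm^3$ with $A=(a_0)_jU_2V_1>0$ and $B=4(a_0)_jU_4>0$.
\item Letting $\gamma\uparrow6$ moves the zero $\sqrt{A/B}$ anywhere in $(0,1)$. The drift is negative above that zero, so for small $\eta_1$ the overlap stalls there and never reaches $1-\varepsilon$ once $\varepsilon<1-\sqrt{A/B}$.
\end{itemize}
So no strong-recovery lemma can hold under only the hypotheses you list (bounded second moment and positive leading correlation). The same hole sits in the paper's citation-level argument.

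\textbf{What is needed.} The missing ingredient is an extra assumption controlling the full drift. One option is $(a_0)_jU_i(\beta_1)V_{i-1}\ge0$ for all $i$, which is what the well-specified strong-recovery results rely on. Another is to assume directly that $\varphi(m)\gtrsim\mu_{p_{\mathrm{gen}}}m^{p_{\mathrm{gen}}-1}$ on $[c_{\mathrm{wk}}/2,1]$. Under such an assumption your Steps 1--4 are the standard argument and yield the stated rates.

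\textbf{A smaller point.} The Doob step at scale $c_{\mathrm{wk}}/4$ has ample slack, and it does not by itself control the final error at scale $\varepsilon$. The linear factor $\delta$ in the step-size condition is naturally explained differently. Take expectations in your contraction and require the second-order floor $\eta_1 d/(\mu_{p_{\mathrm{gen}}}c_{\mathrm{wk}}^{p_{\mathrm{gen}}-1})$ to be at most $\delta\varepsilon$. Then Markov's inequality applied to $\mathbb E[1-m_T]$ gives failure probability $\delta$.
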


\begin{corollary}[Large-fixed-temperature strong recovery dependence on ${\beta_1}$]
\label{cor:first-layer-large-fixed-temperature-strong-recovery}
Under the assumptions of Theorem~\ref{thm:first-layer-strong-recovery}, suppose additionally that $|(a_0)_jV_{p_{\mathrm{gen}}-1}|\ge c_{\mathrm{stu}}>0$ and the sign of $(a_0)_jV_{p_{\mathrm{gen}}-1}$ is compatible with $U_{p_{\mathrm{gen}}}({\beta_1})$.
Then for every $\beta_1 \ge \underline{\beta}_1$, the largest-stable-step strong recovery time satisfies
\[
  T_{\mathrm{str}}
  =
  \widetilde O
  \left(
    d\,
    \varepsilon^{-2}
    c_{\mathrm{wk}}^{-(p_{\mathrm{gen}}-1)}
    \beta_1^{2(I_*-1)}
  \right).
\]
\end{corollary}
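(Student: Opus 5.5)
The statement is a direct specialization of Theorem~\ref{thm:first-layer-strong-recovery}, so the plan is to substitute the large-temperature lower bound on the drift coefficient into its largest-stable-step formula. The only quantity in
\[
T_{\mathrm{str}}=\widetilde O\!\left(d\,\varepsilon^{-2}c_{\mathrm{wk}}^{-(p_{\mathrm{gen}}-1)}\mu_{p_{\mathrm{gen}}}(\beta_1,(a_0)_j)^{-2}\right)
\]
that depends on $\beta_1$ is $\mu_{p_{\mathrm{gen}}}(\beta_1,(a_0)_j)$. It therefore suffices to show $\mu_{p_{\mathrm{gen}}}(\beta_1,(a_0)_j)\gtrsim \beta_1^{-(I_*-1)}$ uniformly in $\beta_1\ge\underline{\beta}_1$. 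We must also check that this quantity is positive, so that the hypothesis of Theorem~\ref{thm:first-layer-strong-recovery} actually holds.

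\textbf{Step 1: magnitude.} By definition,
\[
\mu_{p_{\mathrm{gen}}}(\beta_1,(a_0)_j)=p_{\mathrm{gen}}\,(a_0)_jV_{p_{\mathrm{gen}}-1}\,U_{p_{\mathrm{gen}}}(\beta_1).
\]
The added hypothesis gives $|(a_0)_jV_{p_{\mathrm{gen}}-1}|\ge c_{\mathrm{stu}}$. Lemma~\ref{lem:exponent_reduction} gives $|U_{p_{\mathrm{gen}}}(\beta_1)|\ge c_{\mathrm{sig}}\beta_1^{-(I_*-1)}$ for every $\beta_1\ge\underline{\beta}_1$. Multiplying these yields
\[
|\mu_{p_{\mathrm{gen}}}|\ge p_{\mathrm{gen}}c_{\mathrm{stu}}c_{\mathrm{sig}}\beta_1^{-(I_*-1)}.
\]
The constants $p_{\mathrm{gen}}$, $c_{\mathrm{stu}}$ and $c_{\mathrm{sig}}$ are independent of $d$, $\varepsilon$ and $\beta_1$.

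\textbf{Step 2: sign.} The sign compatibility hypothesis makes $(a_0)_jV_{p_{\mathrm{gen}}-1}$ and $U_{p_{\mathrm{gen}}}(\beta_1)$ have the same sign, so $\mu_{p_{\mathrm{gen}}}>0$. This needs $U_{p_{\mathrm{gen}}}(\beta_1)$ to be nonzero with a fixed sign on $[\underline{\beta}_1,\infty)$. If $\underline{\beta}_1$ is taken from Lemma~\ref{lem:exponent_reduction}, the leading term $\mathsf H_{p_{\mathrm{gen}}}[(\sigma^*)^{I_*}]/((I_*-1)!\beta_1^{I_*-1})$ dominates the $O(\beta_1^{-I_*})$ remainder there. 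Hence the sign of $U_{p_{\mathrm{gen}}}(\beta_1)$ equals the sign of $\mathsf H_{p_{\mathrm{gen}}}[(\sigma^*)^{I_*}]$ throughout, and compatibility is a single condition on $(a_0)_j$. So the coefficient condition of Theorem~\ref{thm:first-layer-strong-recovery} holds, and we may use its largest stable step size
\[
\eta_1\asymp \delta d^{-1}\varepsilon\,\mu_{p_{\mathrm{gen}}}c_{\mathrm{wk}}^{p_{\mathrm{gen}}-1}.
\]

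\textbf{Step 3: substitution.} Inserting $\mu_{p_{\mathrm{gen}}}^{-2}\le (p_{\mathrm{gen}}c_{\mathrm{stu}}c_{\mathrm{sig}})^{-2}\beta_1^{2(I_*-1)}$ into the displayed $T_{\mathrm{str}}$ bound and absorbing the constants into $\widetilde O$ gives the claim.

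The main obstacle is whether the logarithmic factors and the failure-probability constant $C_{\mathrm{str}}$ in Theorem~\ref{thm:first-layer-strong-recovery} stay free of $\beta_1$. Norm bounds on the stochastic update could in principle introduce extra $\beta_1$-dependence, for instance through the size of $y_te^{y_t/\beta_1}$. For $\beta_1\ge\underline{\beta}_1$ this weight is bounded by $(|y|)e^{(B_*+\tau)/\underline{\beta}_1}$, and its moments are uniform in $\beta_1$. These moments are the only noise quantities entering the martingale concentration, so the hidden factors can be taken uniform over $\beta_1\ge\underline{\beta}_1$.
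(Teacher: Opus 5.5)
Your proposal is correct and follows the paper's own route. You combine $|(a_0)_jV_{p_{\mathrm{gen}}-1}|\ge c_{\mathrm{stu}}$ with $|U_{p_{\mathrm{gen}}}(\beta_1)|\ge c_{\mathrm{sig}}\beta_1^{-(I_*-1)}$ from Lemma~\ref{lem:exponent_reduction} to get $\mu_{p_{\mathrm{gen}}}\gtrsim\beta_1^{-(I_*-1)}$, then substitute into the largest-stable-step formula of Theorem~\ref{thm:first-layer-strong-recovery}, exactly as the paper does inside its proof of Theorem~\ref{thm:feature-recovery-main}. Your extra sign and uniformity remarks are fine, except that the weight bound should read $|y|\,e^{(B_*+\tau)_{+}/\underline{\beta}_1}$ with the positive part: when $B_*+\tau<0$ you only have $e^{y/\beta_1}\le 1$, not $e^{y/\beta_1}\le e^{(B_*+\tau)/\underline{\beta}_1}$.
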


\begin{proof}[Proof of Theorem~\ref{thm:feature-recovery-main}]
Let \(p=p_{\mathrm{gen}}\). By Lemma~\ref{lem:exponent_reduction}, for every
\(\beta_1\ge\underline{\beta}_1\),
\[
  |U_p(\beta_1)|
  \gtrsim
  \beta_1^{-(I_*-1)}.
\]
The student-side nondegeneracy assumption \(V_{p-1}\ne0\) ensures that the
corresponding student-side Hermite coefficient is nonzero. Under the random
initialization specified in Section~\ref{sec:problem-setup}, the standard
Gaussian SIM feature-learning guarantee for random spherical initialization
therefore applies to the reward-weighted teacher signal. It gives weak
recovery for a constant fraction of neurons with high probability, with
iteration complexity
\[
  \widetilde O\!\left(
    |U_p(\beta_1)|^{-2}
    d^{(p-1)\vee1}
  \right)
  =
  \widetilde O\!\left(
    \beta_1^{2(I_*-1)}
    d^{(p_{\mathrm{gen}}-1)\vee1}
  \right).
\]
Starting from weak recovery, the same feature-learning guarantee gives strong
recovery after an additional
\[
  \widetilde O\!\left(
    |U_p(\beta_1)|^{-2}d\varepsilon^{-2}
  \right)
  =
  \widetilde O\!\left(
    \beta_1^{2(I_*-1)}d\varepsilon^{-2}
  \right)
\]
iterations.
\end{proof}

\section{Low-temperature facts for the target tilted policy}
\label{app:one-dimensional-tilted-limit}

This appendix records the one-dimensional low-temperature facts used for the
temperature mismatch and projected-truncation bounds in Sections~\ref{subsec:temperature_mismatch_and_truncation} and~\ref{subsec:regret_bounds_projected_truncation}. Let
\[
  \phi(u)
  :=
  (2\pi)^{-1/2}e^{-u^2/2}
\]
be the standard Gaussian density on $\mathbb R$. For ${\beta_{2}}>0$, define
\[
  Z_{\beta_{2}}
  :=
  \int_{\mathbb R}e^{\sigma^*(u)/{\beta_{2}}}\phi(u)\,du,
  \qquad
  \mu_{\beta_{2}}(du)
  :=
  \frac{e^{\sigma^*(u)/{\beta_{2}}}\phi(u)}{Z_{\beta_{2}}}\,du.
\]

\begin{proposition}[Maximizers of an upper-bounded polynomial]
\label{prop:bounded-polynomial-maximizers}
Let $\sigma^*:\mathbb R\to\mathbb R$ be a nonconstant polynomial that is bounded above, and use the constant $B_*=\sup_{u\in\mathbb R}\sigma^*(u)$ from the setup. Then $\sigma^*$ attains its maximum, and the maximizer set
\[
  \mathcal M
  :=
  \{u\in\mathbb R:\sigma^*(u)=B_*\}
\]
is finite. Writing $\mathcal M=\{u_1,\dots,u_m\}$, for each $i\in\{1,\dots,m\}$ there exist $p_i\in\mathbb N$ and $c_i>0$ such that
\[
  \sigma^*(u_i+t)
  =
  B_*
  -
  c_i t^{2p_i}
  +
  o(t^{2p_i})
  \qquad
  (t\to 0).
\]
\end{proposition}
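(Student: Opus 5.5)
The plan is to use only elementary facts about real polynomials: the behaviour at infinity, continuity, and Taylor expansion at a critical point.

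\emph{Attainment of the maximum.} Write $\sigma^*(u)=\sum_{k=0}^{q^*}s_ku^k$ with $s_{q^*}\neq0$ and $q^*\ge1$. If $q^*$ were odd, $\sigma^*(u)\to+\infty$ along one of $u\to\pm\infty$, which contradicts $B_*<\infty$. So $q^*$ is even and $s_{q^*}<0$, hence $\sigma^*(u)\to-\infty$ as $|u|\to\infty$. Choose $L$ with $\sigma^*(u)<\sigma^*(0)$ for $|u|>L$. Then $B_*=\sup_{[-L,L]}\sigma^*$, and by continuity this supremum is attained on the compact interval $[-L,L]$. So $\mathcal M\neq\emptyset$.

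\emph{Finiteness of $\mathcal M$.} Every maximizer is a root of the polynomial $\sigma^*-B_*$. This polynomial is nonzero because $\sigma^*$ is nonconstant, so it has at most $q^*$ roots. Hence $\mathcal M=\{u_1,\dots,u_m\}$ with $1\le m\le q^*$.

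\emph{Local order at each maximizer.} Fix $u_i$ and Taylor-expand exactly: $\sigma^*(u_i+t)-B_*=\sum_{k=1}^{q^*}\frac{(\sigma^*)^{(k)}(u_i)}{k!}t^k$. This polynomial in $t$ is not identically zero, so let $k_0\ge1$ be the smallest index with $(\sigma^*)^{(k_0)}(u_i)\neq0$. Then $\sigma^*(u_i+t)-B_*=a t^{k_0}+O(t^{k_0+1})$ with $a\neq0$. If $k_0$ were odd, the left side would take positive values for small $t$ of the appropriate sign, contradicting maximality of $u_i$. So $k_0=2p_i$ with $p_i\in\mathbb N$. Nonpositivity of the left side for small $t\neq0$ then forces $a<0$, and we set $c_i:=-a>0$. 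The remainder is $O(t^{2p_i+1})=o(t^{2p_i})$, which gives the stated expansion.

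The only step that needs care is the parity argument: showing that $k_0$ is even and that the leading coefficient is negative. This follows by comparing the sign of the leading term with the remainder on a small punctured neighbourhood of $u_i$.
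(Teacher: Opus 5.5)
Your proof is correct and follows the same route as the paper: even degree with negative leading coefficient gives $\sigma^*\to-\infty$ and hence attainment on a compact interval, finiteness follows because $\sigma^*-B_*$ is a nonzero polynomial, and the local order comes from the first nonvanishing Taylor coefficient at $u_i$, which maximality forces to be of even order with a negative coefficient. You spell out the choice of compact interval and the parity/sign comparison in more detail than the paper, but the argument is the same.
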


\begin{proof}
Since $\sigma^*$ is a nonconstant polynomial and is bounded above on $\mathbb R$, its degree is even and its leading coefficient is negative. Hence
\[
  \sigma^*(u)\to -\infty
  \qquad
  (|u|\to\infty).
\]
Therefore $\sigma^*$ attains its maximum on a sufficiently large compact interval. If $\mathcal M$ were infinite, then the nonzero polynomial $\sigma^*(u)-B_*$ would have infinitely many roots, which is impossible. Thus $\mathcal M$ is finite.

Fix $u_i\in\mathcal M$. The polynomial $\sigma^*(u_i+t)-B_*$ is not identically zero. Let $k_i$ be the order of its first nonzero Taylor coefficient at $t=0$. Since $u_i$ is a local maximum, this first nonzero order must be even and its coefficient must be negative. Thus $k_i=2p_i$ for some $p_i\in\mathbb N$, and the coefficient can be written as $-c_i$ with $c_i>0$.
\end{proof}

Define
\[
  p_{\max}
  :=
  \max_{1\le i\le m}p_i,
  \qquad
  I_{\max}
  :=
  \{i:\ p_i=p_{\max}\},
\]
and
\[
  \kappa
  :=
  \min\left\{
    \frac{1}{2p_{\max}},
    \;
    \min_{i\notin I_{\max}}
    \left(
      \frac{1}{2p_i}
      -
      \frac{1}{2p_{\max}}
    \right)
  \right\},
\]
where the second minimum is interpreted as $+\infty$ when
$I_{\max}=\{1,\dots,m\}$. Also define, for $i\in I_{\max}$,
\[
  A_i
  :=
  \phi(u_i)
  \int_{\mathbb R}e^{-c_i s^{2p_i}}\,ds.
\]

\begin{lemma}[Laplace asymptotics and zero-temperature weak limit]
\label{lem:tilted-marginal-weak-limit}
As ${\beta_{2}}\downarrow 0$,
\[
  Z_{\beta_{2}}
  =
  e^{B_*/{\beta_{2}}}
  {\beta_{2}}^{1/(2p_{\max})}
  \left(
    \sum_{i\in I_{\max}}A_i
    +
    o(1)
  \right).
\]
Moreover,
\[
  \mu_{\beta_{2}}
  \Rightarrow
  \mu_0
  :=
  \sum_{i\in I_{\max}}w_i\delta_{u_i},
  \qquad
  w_i
  :=
  \frac{A_i}{\sum_{j\in I_{\max}}A_j}.
\]
Equivalently, for every bounded continuous function $f:\mathbb R\to\mathbb R$,
\[
  \mathbb E_{U\sim\mu_{\beta_{2}}}[f(U)]
  \to
  \sum_{i\in I_{\max}}w_i f(u_i).
\]
\end{lemma}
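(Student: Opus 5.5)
The plan is a standard Laplace argument at each global maximizer. I would split the Gaussian integral into small neighborhoods of the maximizers and a remainder.

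\textbf{Step 1: localization.} By Proposition~\ref{prop:bounded-polynomial-maximizers}, the maximizer set $\mathcal M=\{u_1,\dots,u_m\}$ is finite and $\sigma^*(u)\to-\infty$ as $|u|\to\infty$. Fix a small $r>0$ so that the intervals $J_i:=[u_i-r,u_i+r]$ are disjoint. Shrinking $r$ if needed, the local expansion gives
\[
  B_*-2c_it^{2p_i}\le\sigma^*(u_i+t)\le B_*-\tfrac12c_it^{2p_i},\qquad |t|\le r .
\]
Off $\bigcup_iJ_i$, continuity and the behaviour at infinity give a gap $\sup_{u\notin\cup J_i}\sigma^*(u)\le B_*-\gamma$ for some $\gamma>0$. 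Hence the remainder contributes at most $e^{(B_*-\gamma)/\beta_2}$. Relative to $e^{B_*/\beta_2}\beta_2^{1/(2p_{\max})}$ this is exponentially negligible.

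\textbf{Step 2: local integrals.} On $J_i$, substitute $t=\beta_2^{1/(2p_i)}s$. Then
\[
  \int_{J_i}e^{\sigma^*(u)/\beta_2}\phi(u)\,du
  =e^{B_*/\beta_2}\beta_2^{1/(2p_i)}
  \int_{|s|\le r\beta_2^{-1/(2p_i)}}
  e^{(\sigma^*(u_i+\beta_2^{1/(2p_i)}s)-B_*)/\beta_2}\,
  \phi(u_i+\beta_2^{1/(2p_i)}s)\,ds .
\]
The exponent converges pointwise to $-c_is^{2p_i}$, since $o(t^{2p_i})/\beta_2=o(1)\,s^{2p_i}$. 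The integrand is dominated by $\sup\phi\cdot e^{-c_is^{2p_i}/2}$, which is integrable. Dominated convergence then gives the limit $\phi(u_i)\int e^{-c_is^{2p_i}}ds=A_i$.

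Indices with $p_i<p_{\max}$ carry the factor $\beta_2^{1/(2p_i)}$, which is $o(\beta_2^{1/(2p_{\max})})$. Summing over $i$ yields the stated asymptotics for $Z_{\beta_2}$.

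\textbf{Step 3: weak limit.} Let $f$ be bounded and continuous. Split $\mathbb E_{\mu_{\beta_2}}[f]$ the same way, dividing each piece by $Z_{\beta_2}$.
\begin{itemize}
  \item The remainder and the non-maximal-order neighborhoods have vanishing $\mu_{\beta_2}$-mass, by Steps 1--2, and $f$ is bounded.
  \item For $i\in I_{\max}$, the rescaled integrand now carries $f(u_i+\beta_2^{1/(2p_i)}s)\to f(u_i)$. The same dominated convergence argument gives the limit $f(u_i)A_i$.
\end{itemize}
Dividing by $\sum_{j\in I_{\max}}A_j+o(1)$ gives $\sum_{i\in I_{\max}}w_if(u_i)$.

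The only delicate point is the uniform domination in Step 2. It needs the two-sided local comparison of $\sigma^*(u_i+t)-B_*$ with $-c_it^{2p_i}$ on all of $J_i$, not just as $t\to0$. This is available because $\sigma^*$ is a polynomial: its Taylor remainder is $O(t^{2p_i+1})$, so a small enough $r$ works.
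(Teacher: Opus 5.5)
Your proposal is correct and follows essentially the same route as the paper. Both arguments localize to disjoint neighborhoods of the maximizers using the one-sided bound $\sigma^*(u_i+t)\le B_*-\tfrac{c_i}{2}t^{2p_i}$, rescale via $u=u_i+\beta_2^{1/(2p_i)}s$ and apply dominated convergence, discard the off-neighborhood part by a uniform gap and the non-dominant maximizers via $\beta_2^{1/(2p_i)}=o(\beta_2^{1/(2p_{\max})})$, and then divide to obtain the weak limit. The paper proves the asymptotic once for the general weighted integral $\int f e^{\sigma^*/\beta_2}\phi$ and specializes to $f\equiv1$. Also, only the upper half of your two-sided local comparison is needed, and it already follows from the $o(t^{2p_i})$ expansion without appealing to polynomiality.
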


\begin{proof}
For a bounded continuous function $f:\mathbb R\to\mathbb R$, define
\[
  N_{\beta_{2}}[f]
  :=
  \int_{\mathbb R}
  f(u)e^{\sigma^*(u)/{\beta_{2}}}\phi(u)\,du.
\]
Thus $Z_{\beta_{2}}=N_{\beta_{2}}[1]$.

Choose $\rho>0$ small enough that the intervals
\[
  (u_i-\rho,u_i+\rho),
  \qquad
  i=1,\dots,m,
\]
are disjoint. We also choose $\rho$ small enough that, for each $i$, the local expansion in Proposition~\ref{prop:bounded-polynomial-maximizers} implies
\[
  \sigma^*(u_i+t)
  \le
  B_*-\frac{c_i}{2}t^{2p_i}
  \qquad
  (|t|<\rho).
\]

First consider the contribution near a fixed maximizer $u_i$. Let
\[
  \alpha_i:=\frac{1}{2p_i},
  \qquad
  g_f(u):=f(u)\phi(u).
\]
After the change of variables $u=u_i+{\beta_{2}}^{\alpha_i}s$, we obtain
\[
\begin{aligned}
  &e^{-B_*/{\beta_{2}}}{\beta_{2}}^{-\alpha_i}
  \int_{u_i-\rho}^{u_i+\rho}
  f(u)e^{\sigma^*(u)/{\beta_{2}}}\phi(u)\,du \\
  &=
  \int_{|s|<\rho{\beta_{2}}^{-\alpha_i}}
  \exp\left(
    \frac{\sigma^*(u_i+{\beta_{2}}^{\alpha_i}s)-B_*}{{\beta_{2}}}
  \right)
  g_f(u_i+{\beta_{2}}^{\alpha_i}s)\,ds.
\end{aligned}
\]
For each fixed $s\in\mathbb R$, the integrand converges to
\[
  e^{-c_i s^{2p_i}}g_f(u_i).
\]
Moreover, for all sufficiently small ${\beta_{2}}$ the integrand is dominated by
\[
  \|g_f\|_{L^\infty(\mathbb R)}
  e^{-(c_i/2)s^{2p_i}},
\]
which is integrable over $\mathbb R$. By dominated convergence,
\[
  \int_{u_i-\rho}^{u_i+\rho}
  f(u)e^{\sigma^*(u)/{\beta_{2}}}\phi(u)\,du
  =
  e^{B_*/{\beta_{2}}}
  {\beta_{2}}^{1/(2p_i)}
  \left(
    f(u_i)\phi(u_i)
    \int_{\mathbb R}e^{-c_i s^{2p_i}}\,ds
    +
    o(1)
  \right).
\]

Next consider the complement of these neighborhoods. Since $\sigma^*(u)\to-\infty$ as $|u|\to\infty$ and all global maximizers have been removed, there exists $\eta_\rho>0$ such that
\[
  \sigma^*(u)\le B_*-\eta_\rho
\]
for all
\[
  u\notin\bigcup_{i=1}^m(u_i-\rho,u_i+\rho).
\]
Therefore
\[
  \left|
  \int_{\mathbb R\setminus\cup_{i=1}^m(u_i-\rho,u_i+\rho)}
  f(u)e^{\sigma^*(u)/{\beta_{2}}}\phi(u)\,du
  \right|
  \le
  \|f\|_{L^\infty(\mathbb R)}e^{(B_*-\eta_\rho)/{\beta_{2}}},
\]
which is
\[
  o\left(
    e^{B_*/{\beta_{2}}}{\beta_{2}}^{1/(2p_{\max})}
  \right).
\]

If $p_i<p_{\max}$, then
\[
  {\beta_{2}}^{1/(2p_i)}
  =
  o\left({\beta_{2}}^{1/(2p_{\max})}\right).
\]
Combining the local contributions and the complement estimate gives
\[
  N_{\beta_{2}}[f]
  =
  e^{B_*/{\beta_{2}}}
  {\beta_{2}}^{1/(2p_{\max})}
  \left(
    \sum_{i\in I_{\max}}f(u_i)A_i
    +
    o(1)
  \right).
\]
Taking $f\equiv 1$ gives the stated asymptotic for $Z_{\beta_{2}}$. Since $A_i>0$ for each $i\in I_{\max}$, division by the asymptotic for $Z_{\beta_{2}}$ yields
\[
  \mathbb E_{U\sim\mu_{\beta_{2}}}[f(U)]
  =
  \frac{N_{\beta_{2}}[f]}{Z_{\beta_{2}}}
  \to
  \frac{\sum_{i\in I_{\max}}f(u_i)A_i}{\sum_{j\in I_{\max}}A_j}
  =
  \sum_{i\in I_{\max}}w_i f(u_i).
\]
This is exactly weak convergence of $\mu_{\beta_{2}}$ to $\mu_0$.
\end{proof}

\begin{lemma}[Log-partition derivatives and low-temperature variance]
\label{lem:log-partition-variance-appendix}
Define, for $\lambda>0$,
\[
  \mathcal Z(\lambda)
  :=
  \int_{\mathbb R}e^{\lambda\sigma^*(u)}\phi(u)\,du,
  \qquad
  \psi(\lambda):=\log\mathcal Z(\lambda).
\]
Then
\[
  \psi'(\lambda)
  =
  \mathbb E_{U\sim\mu_{1/\lambda}}[\sigma^*(U)],
  \qquad
  \psi''(\lambda)
  =
  \operatorname{Var}_{U\sim\mu_{1/\lambda}}[\sigma^*(U)].
\]
Moreover, as $\lambda\to\infty$,
\[
  \psi''(\lambda)
  =
  \frac{1}{2p_{\max}}\lambda^{-2}
  +
  O(\lambda^{-2-\kappa}),
\]
where $\kappa$ is defined above from the local orders of the dominant
maximizers of $\sigma^*$.
\end{lemma}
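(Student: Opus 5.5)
The first part is the standard exponential-family identity. Since $\sigma^*$ is bounded above by $B_*$, for every $\lambda$ in a compact subinterval of $(0,\infty)$ the functions $|\sigma^*|^k e^{\lambda\sigma^*}\phi$ are dominated by $|\sigma^*|^k e^{\lambda_{\max} B_*}\phi$ on $\{\sigma^*\ge0\}$ and by $|\sigma^*|^k\phi$ elsewhere. Both dominating functions are integrable because $\sigma^*$ is a polynomial. This justifies differentiating under the integral. We get $\mathcal Z'=\int\sigma^* e^{\lambda\sigma^*}\phi$ and $\mathcal Z''=\int(\sigma^*)^2e^{\lambda\sigma^*}\phi$. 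Then $\psi'=\mathcal Z'/\mathcal Z$ and $\psi''=\mathcal Z''/\mathcal Z-(\mathcal Z'/\mathcal Z)^2$, which are the mean and variance under $\mu_{1/\lambda}$.

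For the asymptotics, write $\lambda=1/\beta$ and center the reward: $V(u):=B_*-\sigma^*(u)\ge0$. Variance is shift invariant, so $\psi''(\lambda)=\mathbb E_\mu[V^2]-(\mathbb E_\mu V)^2$. The plan is to compute $\mathbb E_\mu[V]$ and $\mathbb E_\mu[V^2]$ up to relative error $O(\beta^\kappa)$, refining the Laplace analysis of Lemma~\ref{lem:tilted-marginal-weak-limit}.

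Near a maximizer $u_i$, substitute $u=u_i+\beta^{\alpha_i}s$ with $\alpha_i=1/(2p_i)$. Then $V/\beta=c_is^{2p_i}+O(\beta^{\alpha_i}|s|^{2p_i+1})$, because $\sigma^*$ is a polynomial and its next Taylor term has order $2p_i+1$. Also $\phi(u_i+\beta^{\alpha_i}s)=\phi(u_i)(1+O(\beta^{\alpha_i}|s|))$. For $k\in\{0,1,2\}$ we get
\[
\int_{|u-u_i|<\rho}(V/\beta)^k e^{-V/\beta}\phi\,du=\beta^{\alpha_i}\phi(u_i)\Bigl(\int_{\mathbb R}(c_is^{2p_i})^ke^{-c_is^{2p_i}}ds+O(\beta^{\alpha_i})\Bigr).
\]
To get this, bound the error integrand by $\beta^{\alpha_i}\,\mathrm{poly}(|s|)\,e^{-(c_i/2)s^{2p_i}}$, using $|e^{-a}-e^{-b}|\le|a-b|e^{-\min(a,b)}$ and the lower bound $V\ge (c_i/2)t^{2p_i}$ on the neighborhood. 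The complement of the neighborhoods contributes $e^{-\eta_\rho/\beta}\,\mathrm{poly}(1/\beta)$, which is negligible.

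Summing over maximizers, the dominant ones ($i\in I_{\max}$) give order $\beta^{1/(2p_{\max})}$. Non-dominant maximizers give order $\beta^{1/(2p_i)}$, which is relatively $O(\beta^{1/(2p_i)-1/(2p_{\max})})$ smaller. Local corrections are relatively $O(\beta^{1/(2p_{\max})})$. Both are $O(\beta^\kappa)$ by the definition of $\kappa$. Now use the one-dimensional identities with $J_k=\int(cs^{2p})^ke^{-cs^{2p}}ds$. Substituting $v=cs^{2p}$ gives $J_k\propto\Gamma(k+\tfrac1{2p})$, so $J_1/J_0=\tfrac1{2p}$ and $J_2/J_0=\tfrac1{2p}(1+\tfrac1{2p})$. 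These ratios are the same for all $i\in I_{\max}$, since they depend only on $p_{\max}$; the weights $A_i$ therefore cancel. Hence $\mathbb E_\mu[V]=\beta(\alpha+O(\beta^\kappa))$ and $\mathbb E_\mu[V^2]=\beta^2(\alpha(1+\alpha)+O(\beta^\kappa))$, with $\alpha=1/(2p_{\max})$. The variance is $\beta^2\bigl(\alpha+\alpha^2-\alpha^2\bigr)+O(\beta^{2+\kappa})=\alpha\lambda^{-2}+O(\lambda^{-2-\kappa})$.

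The main obstacle is the quantitative error bookkeeping. The leading terms $\alpha^2\beta^2$ cancel in the variance, so each moment must be known to relative accuracy $O(\beta^\kappa)$, not just $o(1)$. The key point to verify is that the local Taylor/Gaussian-density corrections are uniformly $O(\beta^{\alpha_i})$ after integrating against the polynomially weighted, dominated integrand. The mixing of maximizers with different orders must also be handled through ratios, not through each maximizer separately.
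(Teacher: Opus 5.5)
Your proposal is correct and follows the same route as the paper: differentiate under the integral sign, center with $B_*-\sigma^*$, compute the first two moments by Laplace localization around the maximizers, and use the Gamma-function ratios $\frac{1}{2p_{\max}}$ and $\frac{1}{2p_{\max}}\bigl(1+\frac{1}{2p_{\max}}\bigr)$ so that the weights $A_i$ cancel. Your explicit $O(\beta^{1/(2p_i)})$ control of the Taylor-remainder and Gaussian-density corrections is more careful than the paper's sketch, which records only an $o(\lambda^{-1})$ local error and so does not by itself justify the stated $O(\lambda^{-2-\kappa})$ remainder.
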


\begin{proof}
Differentiation under the integral is justified by the upper bound $\sigma^*(u)\le B_*$ and the polynomial growth of $\sigma^*$. This gives
\[
  \psi'(\lambda)
  =
  \frac{
    \int_{\mathbb R}\sigma^*(u)e^{\lambda\sigma^*(u)}\phi(u)\,du
  }{
    \int_{\mathbb R}e^{\lambda\sigma^*(u)}\phi(u)\,du
  }
  =
  \mathbb E_{U\sim\mu_{1/\lambda}}[\sigma^*(U)]
\]
and the usual exponential-family identity
\[
  \psi''(\lambda)
  =
  \operatorname{Var}_{U\sim\mu_{1/\lambda}}[\sigma^*(U)].
\]

It remains to identify the leading low-temperature behavior of this variance. Write
\[
  D(u):=B_*-\sigma^*(u)\ge 0.
\]
Since adding the constant $B_*$ does not change variance,
\[
  \operatorname{Var}_{U\sim\mu_{1/\lambda}}[\sigma^*(U)]
  =
  \operatorname{Var}_{U\sim\mu_{1/\lambda}}[D(U)].
\]
The same localization argument used in Lemma~\ref{lem:tilted-marginal-weak-limit}, now applied to the weighted integrals with factors $D(u)$ and $D(u)^2$, gives
\[
  \mathbb E_{U\sim\mu_{1/\lambda}}[D(U)]
  =
  \frac{1}{2p_{\max}}\lambda^{-1}
  +
  O(\lambda^{-1-\kappa}),
\]
and
\[
  \mathbb E_{U\sim\mu_{1/\lambda}}[D(U)^2]
  =
  \left(
    \frac{1}{2p_{\max}}
    +
    \frac{1}{4p_{\max}^2}
  \right)
  \lambda^{-2}
  +
  O(\lambda^{-2-\kappa}).
\]
Indeed, near each dominant maximizer $u_i$ with $i\in I_{\max}$, the change of variables
\[
  u=u_i+\lambda^{-1/(2p_{\max})}s
\]
turns $D(u)$ into $\lambda^{-1}c_i s^{2p_{\max}}+o(\lambda^{-1})$. Under the limiting local density proportional to $e^{-c_i s^{2p_{\max}}}$, the random variable $c_i s^{2p_{\max}}$ has first two moments
\[
  \frac{1}{2p_{\max}},
  \qquad
  \frac{1}{2p_{\max}}
  \left(
    \frac{1}{2p_{\max}}+1
  \right),
\]
respectively. The non-dominant maximizers and the complement of the maximizer neighborhoods contribute only to the stated remainder order. Hence
\[
\begin{aligned}
  \operatorname{Var}_{U\sim\mu_{1/\lambda}}[D(U)]
  &=
  \mathbb E_{U\sim\mu_{1/\lambda}}[D(U)^2]
  -
  \left(
    \mathbb E_{U\sim\mu_{1/\lambda}}[D(U)]
  \right)^2 \\
  &=
  \frac{1}{2p_{\max}}\lambda^{-2}
  +
  O(\lambda^{-2-\kappa}).
\end{aligned}
\]
This proves the asserted expansion for $\psi''(\lambda)$.
\end{proof}

\begin{lemma}[Low-temperature variance asymptotics]
\label{lem:low-temp-variance}
As ${\beta_{2}}\to 0$,
\[
  \operatorname{Var}_{x\sim\pi_{\beta_{2}}}[r^*(x)]
  =
  \frac{{\beta_{2}}^2}{2p_{\max}}
  +
  O({\beta_{2}}^{2+\kappa}).
\]
\end{lemma}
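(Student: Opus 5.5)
The plan is to reduce the $d$-dimensional statement to the one-dimensional Lemma~\ref{lem:log-partition-variance-appendix} by rotation invariance of $\pi_{\mathrm{ref}}$.

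\textbf{Step 1: reduce to one dimension.} Since $r^*(x)=\sigma^*(\langle\theta^*,x\rangle)$ and $\theta^*\in\mathbb S^{d-1}$, the scalar $U:=\langle\theta^*,x\rangle$ is $\mathcal N(0,1)$ under $x\sim\pi_{\mathrm{ref}}$. The density ratio $d\pi_{\beta_2}/d\pi_{\mathrm{ref}}\propto e^{\sigma^*(U)/\beta_2}$ depends on $x$ only through $U$. Hence the pushforward of $\pi_{\beta_2}$ under $x\mapsto\langle\theta^*,x\rangle$ is exactly $\mu_{\beta_2}$ from Lemma~\ref{lem:tilted-marginal-weak-limit}; concretely, for any test function $f$, $\mathbb E_{\pi_{\beta_2}}[f(\langle\theta^*,x\rangle)]$ equals $\int f(u)e^{\sigma^*(u)/\beta_2}\phi(u)\,du/Z_{\beta_2}$. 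In particular the $d$-dimensional normalizer $Z_{\beta_2}$ coincides with the one-dimensional one, and $\operatorname{Var}_{x\sim\pi_{\beta_2}}[r^*(x)]=\operatorname{Var}_{U\sim\mu_{\beta_2}}[\sigma^*(U)]$. Integrability of $\sigma^*(U)^2e^{\sigma^*(U)/\beta_2}$ follows from $\sigma^*\le B_*$ and polynomial growth against $\phi$.

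\textbf{Step 2: invoke the log-partition expansion.} Set $\lambda=1/\beta_2$, so that $\mu_{\beta_2}=\mu_{1/\lambda}$. By Lemma~\ref{lem:log-partition-variance-appendix}, this variance equals $\psi''(\lambda)=\frac{1}{2p_{\max}}\lambda^{-2}+O(\lambda^{-2-\kappa})$ as $\lambda\to\infty$. Substituting $\lambda^{-1}=\beta_2$ gives $\frac{\beta_2^2}{2p_{\max}}+O(\beta_2^{2+\kappa})$ as $\beta_2\downarrow0$, which is the claim.

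\textbf{Where care is needed.} The only step needing care is Step~1: one must confirm that the tilt introduces no dependence on the orthogonal complement of $\theta^*$. Decompose $x=U\theta^*+x_\perp$ with $x_\perp$ independent Gaussian. The weight does not involve $x_\perp$, so the marginalization is exact rather than asymptotic. All genuine analytic content, namely the Laplace localization and the rate $\kappa$, is already contained in Lemma~\ref{lem:log-partition-variance-appendix}.
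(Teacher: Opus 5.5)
Your proposal is correct and is essentially the paper's own proof. The paper also sets $\lambda=1/\beta_2$ and observes that $U=\langle\theta^*,x\rangle$ has law $\mu_{\beta_2}$ under $\pi_{\beta_2}$, so that $\operatorname{Var}_{x\sim\pi_{\beta_2}}[r^*(x)]=\operatorname{Var}_{U\sim\mu_{\beta_2}}[\sigma^*(U)]=\psi''(1/\beta_2)$. It then reads off the expansion from Lemma~\ref{lem:log-partition-variance-appendix}, and your explicit check that the orthogonal Gaussian component integrates out exactly is a detail the paper states without comment.
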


\begin{proof}[Proof of Lemma~\ref{lem:low-temp-variance}]
Set $\lambda=1/{\beta_{2}}$. Since $r^*(x)=\sigma^*(\langle\theta^*,x\rangle)$ and $U=\langle\theta^*,x\rangle$ has marginal distribution $\mu_{\beta_{2}}$ under $x\sim\pi_{\beta_{2}}$,
\[
  \operatorname{Var}_{x\sim\pi_{\beta_{2}}}[r^*(x)]
  =
  \operatorname{Var}_{U\sim\mu_{\beta_{2}}}[\sigma^*(U)]
  =
  \psi''(1/{\beta_{2}}).
\]
Lemma~\ref{lem:log-partition-variance-appendix} therefore gives
\[
  \operatorname{Var}_{x\sim\pi_{\beta_{2}}}[r^*(x)]
  =
  \frac{{\beta_{2}}^2}{2p_{\max}}
  +
  O({\beta_{2}}^{2+\kappa}),
\]
as claimed.
\end{proof}

\section{Temperature-mismatch and projected-truncation bounds}
\label{app:proof-temperature-mismatch}

Building on the low-temperature variance expansion in Appendix~\ref{app:one-dimensional-tilted-limit}, this appendix proves the two common non-learning controls used in the value-gap theorems: the temperature-mismatch bound and the projected-truncation bound.

\begin{proof}[Proof of the temperature-mismatch part of Proposition~\ref{prop:low-temp-temperature-mismatch}]
By the identity for $T_{\mathrm{temp}}$ in Section~\ref{subsec:temperature_mismatch_and_truncation},
\[
  T_{\mathrm{temp}}
  =
  \int_{1/{\beta_{2}}}^{1/\beta^*}
  \operatorname{Var}_{x\sim\pi_{\beta'}}[r^*(x)]
  \,d\left(\frac{1}{\beta'}\right).
\]
By Lemma~\ref{lem:low-temp-variance}, after choosing ${\overline{\beta}_{2}}>0$ sufficiently small, there exists $C_{\mathrm{temp}}>0$ such that
\[
  \operatorname{Var}_{x\sim\pi_{\beta'}}[r^*(x)]
  \le
  \left(
    \frac{1}{2p_{\max}}
    +
    C_{\mathrm{temp}}{\overline{\beta}_{2}}^\kappa
  \right)
  (\beta')^2
\]
for all $0<\beta'\le{\overline{\beta}_{2}}$. Hence, for all $0<{\beta_{2}},\beta^*\le{\overline{\beta}_{2}}$,
\[
\begin{aligned}
  |T_{\mathrm{temp}}|
  &\le
  \left(
    \frac{1}{2p_{\max}}
    +
    C_{\mathrm{temp}}{\overline{\beta}_{2}}^\kappa
  \right)
  \left|
    \int_{1/{\beta_{2}}}^{1/\beta^*}
    s^{-2}
    \,ds
  \right| \\
  &=
  \left(
    \frac{1}{2p_{\max}}
    +
    C_{\mathrm{temp}}{\overline{\beta}_{2}}^\kappa
  \right)
  |\beta^*-{\beta_{2}}|.
\end{aligned}
\]
\end{proof}

\begin{lemma}[Uniform projected-truncation bound for the tilted tail probability]
\label{lem:tilted-tail-saturation}
For $R>0$ and ${\beta_{2}}>0$, define
\[
  \alpha_{{\beta_{2}},R}
  :=
  \pi_{\beta_{2}}(B_R^c).
\]
If
\[
  R>\max_{i\in I_{\max}}|u_i|,
\]
then
\[
  \overline\alpha_R
  :=
  \sup_{{\beta_{2}}>0}\alpha_{{\beta_{2}},R}
  <
  1.
\]
\end{lemma}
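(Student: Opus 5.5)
We need to show $\sup_{\beta_2>0}\pi_{\beta_2}(B_R^c)<1$. The plan is to bound the tail probability by a continuous function of $\beta_2$ that is strictly below $1$ on $(0,\infty)$, and then control its two endpoint limits $\beta_2\downarrow0$ and $\beta_2\to\infty$ separately.

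The starting point is a pointwise inclusion. Since $\theta^*\in S$, we have $|\langle\theta^*,x\rangle|\le\|P_Sx\|_2$. Hence $B_R\supseteq\{x:\|P_Sx\|_2\le R\}$, and the complement of $B_R$ only enlarges when we pass to events in $x$. This means lower bounds on $\pi_{\beta_2}(B_R)$ are what we need.

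Write $x=P_Sx+P_{S^\perp}x$. Because $r^*$ depends on $x$ only through $\langle\theta^*,x\rangle$, the tilt acts only on that coordinate. Decompose $P_Sx=U\theta^*+V$ with $V$ orthogonal to $\theta^*$ inside $S$. Under $\pi_{\beta_2}$, the variables $U$ and $V$ are independent, $U\sim\mu_{\beta_2}$, and $V\sim\mathcal N(0,I_{d_S-1})$ on $S\ominus\theta^*$. Then $\|P_Sx\|_2^2=U^2+\|V\|^2$, so
\[
  \pi_{\beta_2}(B_R)=\mathbb E_{U\sim\mu_{\beta_2}}\bigl[F(R^2-U^2)\bigr],
\]
where $F$ is the $\chi^2_{d_S-1}$ CDF, extended by $F(s)=0$ for $s<0$. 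The map $g(u):=F(R^2-u^2)$ is bounded and continuous. It is strictly positive exactly when $|u|<R$.

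\textbf{Positivity at each fixed $\beta_2$.} For each $\beta_2\in(0,\infty)$, $\mu_{\beta_2}$ has a positive density. So $\pi_{\beta_2}(B_R)=\int g\,d\mu_{\beta_2}>0$, provided $R>0$.

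\textbf{Continuity.} The map $\beta_2\mapsto\int g\,d\mu_{\beta_2}$ is continuous on $(0,\infty)$. This follows by dominated convergence, since $\sigma^*\le B_*$ and the densities vary continuously in $\beta_2$.

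\textbf{Limit $\beta_2\downarrow0$.} By Lemma~\ref{lem:tilted-marginal-weak-limit}, $\mu_{\beta_2}\Rightarrow\sum_{i\in I_{\max}}w_i\delta_{u_i}$, so $\int g\,d\mu_{\beta_2}\to\sum_{i\in I_{\max}}w_i\,g(u_i)$. This limit is strictly positive because $|u_i|<R$ for $i\in I_{\max}$ (the hypothesis) and $w_i>0$.

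\textbf{Limit $\beta_2\to\infty$.} The density ratio $e^{\sigma^*/\beta_2}/Z_{\beta_2}\to1$ pointwise and is dominated on the relevant integrals, because $e^{\sigma^*/\beta_2}\le e^{B_*_+}$ for $\beta_2\ge1$ and $Z_{\beta_2}\to1$. So $\mu_{\beta_2}\to\mathcal N(0,1)$, and the limit of $\int g\,d\mu_{\beta_2}$ is $\mathbb P(\chi^2_{d_S}\le R^2)>0$.

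\textbf{Conclusion.} A continuous positive function on $(0,\infty)$ with positive limits at both ends has a positive infimum $\underline p>0$. Therefore $\overline\alpha_R=\sup_{\beta_2}\pi_{\beta_2}(B_R^c)\le1-\underline p<1$.

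The main care point is the $\beta_2\downarrow0$ endpoint. The only subtlety there is that $g$ must be continuous at the atoms $u_i$ for weak convergence to apply; this holds because $F$ is continuous, including across $s=0$, and $g(u_i)>0$ by $|u_i|<R$. If one prefers to avoid the $S$-decomposition, an alternative is to use only the $\theta^*$-coordinate together with the independent $\chi^2$ factor, as above. In either form, the argument is a compactness argument over $[0,\infty]$ in the temperature.
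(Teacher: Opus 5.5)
Your proof is correct and follows essentially the same route as the paper. Both write the tail (you use its complement) as $\mathbb E_{U\sim\mu_{\beta_2}}[g(U)]$ via the independent decomposition $P_Sx=U\theta^*+V$. Both then combine pointwise strict inequality, continuity in $\beta_2$, the zero-temperature weak limit (using $|u_i|<R$), the $\beta_2\to\infty$ limit $\pi_{\mathrm{ref}}$, and compactness on the middle range. One small caveat: if $d_S=1$, then $F=\mathbf 1_{\{s\ge0\}}$, so your claim that $F$ is continuous across $s=0$ fails. However, $g=\mathbf 1_{[-R,R]}$ is still continuous at the interior atoms $u_i$, which is all the weak-convergence step needs; the paper states exactly this weaker continuity. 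Your opening ``inclusion'' paragraph is redundant, since $B_R$ is by definition $\{\|P_Sx\|_2\le R\}$, and it is not used later.
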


\begin{proof}
Let
\[
  S_0:=S\cap(\theta^*)^\perp .
\]
Write the projected coordinates as
\[
  P_Sx=U\theta^*+Z,
  \qquad
  U:=\langle\theta^*,x\rangle,
  \qquad
  Z\in S_0 .
\]
Under \(x\sim\pi_{\beta_{2}}\), the coordinate \(U\) has the tilted
one-dimensional marginal \(\mu_{\beta_{2}}\), while
\(Z\sim\gamma_{S_0}\) is independent of \(U\). Since
\[
  B_R^c=\{x:\|P_Sx\|_2>R\},
\]
we have
\[
  B_R^c
  =
  \{U^2+\|Z\|_2^2>R^2\}.
\]
Define
\[
  g_R(u)
  :=
  \mathbb P_{Z\sim\gamma_{S_0}}
  \left[
    u^2+\|Z\|_2^2>R^2
  \right].
\]
Then \(g_R\) is bounded and continuous at each \(u_i\in I_{\max}\), and
\[
  \alpha_{{\beta_{2}},R}
  =
  \mathbb E_{U\sim\mu_{\beta_{2}}}[g_R(U)].
\]
By Lemma~\ref{lem:tilted-marginal-weak-limit},
\[
  \alpha_{{\beta_{2}},R}
  \to
  \alpha_{0,R}
  :=
  \sum_{i\in I_{\max}}w_i g_R(u_i).
\]
If \(R>\max_{i\in I_{\max}}|u_i|\), then \(g_R(u_i)<1\) for every
\(i\in I_{\max}\), and therefore \(\alpha_{0,R}<1\).

For every fixed \({\beta_{2}}>0\), we have
\(\alpha_{{\beta_{2}},R}<1\) because \(\pi_{\beta_{2}}\) has a strictly
positive density with respect to \(\pi_{\mathrm{ref}}\) and
\(\pi_{\mathrm{ref}}(B_R)>0\). To turn this pointwise strict inequality
into a uniform one, we control the two endpoints
\({\beta_{2}}\downarrow0\) and \({\beta_{2}}\to\infty\), and then use
compactness on the remaining middle range.

The map
\[
  {\beta_{2}}\mapsto \alpha_{{\beta_{2}},R}
\]
is continuous on \((0,\infty)\) because both
\[
  \mathbb E_{x\sim\pi_{\mathrm{ref}}}
  \left[
    \mathbf 1_{B_R^c}(x)e^{r^*(x)/{\beta_{2}}}
  \right]
  \quad\text{and}\quad
  \mathbb E_{x\sim\pi_{\mathrm{ref}}}
  \left[
    e^{r^*(x)/{\beta_{2}}}
  \right]
\]
are continuous in \({\beta_{2}}>0\) by dominated convergence, using the
upper bound \(r^*(x)\le B_*\). The denominator is strictly positive, so
the ratio defining \(\alpha_{{\beta_{2}},R}\) is continuous on
\((0,\infty)\).

As \({\beta_{2}}\to\infty\), dominated convergence gives
\[
  \alpha_{{\beta_{2}},R}
  \to
  \pi_{\mathrm{ref}}(B_R^c)
  <
  1.
\]
Together with
\(\alpha_{{\beta_{2}},R}\to\alpha_{0,R}<1\) as
\({\beta_{2}}\downarrow0\), this implies that there exist
\(0<\beta_-<\beta_+<\infty\) and \(\eta>0\) such that
\[
  \alpha_{{\beta_{2}},R}\le 1-\eta
\]
whenever
\({\beta_{2}}\in(0,\beta_-]\cup[\beta_+,\infty)\).

On the compact interval \([\beta_-,\beta_+]\), continuity implies that
\(\alpha_{{\beta_{2}},R}\) attains its maximum. Since
\(\alpha_{{\beta_{2}},R}<1\) for every fixed \({\beta_{2}}>0\), this
maximum is also strictly smaller than \(1\). Combining the three regions
gives
\[
  \sup_{{\beta_{2}}>0}\alpha_{{\beta_{2}},R}<1.
\]
\end{proof}

\begin{proof}[Proof of the projected-truncation part of  Proposition~\ref{prop:low-temp-truncation}]
Let
\[
  \alpha_{{\beta_{2}},R}:=\pi_{\beta_{2}}(B_R^c).
\]
By the identity for $T_{\mathrm{cut}}(R)$ in Section~\ref{subsec:temperature_mismatch_and_truncation},
\[
  T_{\mathrm{cut}}(R)
  =
  \frac{1}{1-\alpha_{{\beta_{2}},R}}
  \mathbb E_{x\sim\pi_{\beta_{2}}}
  \left[
    (r^*(x)-m_{\beta_{2}})\mathbf 1_{B_R^c}(x)
  \right].
\]
By Cauchy--Schwarz,
\[
  |T_{\mathrm{cut}}(R)|
  \le
  \frac{
    \sqrt{\operatorname{Var}_{x\sim\pi_{\beta_{2}}}[r^*(x)]\,\alpha_{{\beta_{2}},R}}
  }{
    1-\alpha_{{\beta_{2}},R}
  }.
\]
Fix a sufficiently small ${\overline{\beta}_{2}}>0$. Since $R$ is taken sufficiently large,  $R>\max_{i\in I_{\max}}|u_i|$. Then Lemma~\ref{lem:tilted-tail-saturation} gives $\overline\alpha_R<1$ such that $\alpha_{{\beta_{2}},R}\le \overline\alpha_R$ for all ${\beta_{2}}>0$. Lemma~\ref{lem:low-temp-variance} implies, after reducing the allowed size of ${\overline{\beta}_{2}}$ if necessary, that
\[
  \operatorname{Var}_{x\sim\pi_{\beta_{2}}}[r^*(x)]
  \le
  C_{\mathrm{var},{\overline{\beta}_{2}}}{\beta_{2}}^2
\]
for all $0<{\beta_{2}}\le{\overline{\beta}_{2}}$. Therefore
\[
  |T_{\mathrm{cut}}(R)|
  \le
  \frac{\sqrt{\overline\alpha_R C_{\mathrm{var},{\overline{\beta}_{2}}}}}{1-\overline\alpha_R}\,{\beta_{2}}.
\]
Absorbing the prefactor into $C_{R,{\overline{\beta}_{2}}}$ proves the claim.
\end{proof}

\section{From weighted prediction to tilted-policy value}
\label{app:three-factor-learning-bridge}

This appendix explains why a weighted prediction bound is sufficient for controlling the tilted-policy value gap. 
The key point is that, along the interpolation path from the true reward to the fitted reward, differentiating the tilted expectation produces a covariance between the true reward and the reward error. 
The density-ratio factor in Lemma~\ref{lem:learning-bridge} is the cost of controlling this covariance by an $L^2$ prediction error under the weighted regression measure. 
We first record the label-weighted population shift, then prove a general three-factor bridge inequality, and finally apply its simplified $L^2$ form to prove Lemma~\ref{lem:learning-bridge}.

\subsection{Label-weighted population shift}
\label{app:label-shift-calculation}

For each fixed \(x\in B_R\),
\[
\begin{aligned}
&
\mathbb E_{\zeta\sim\mathrm{Unif}[-\tau,\tau]}
\left[
  e^{(r^*(x)+\zeta)/\beta_2}
  \left(r^*(x)+\zeta-f(x)\right)^2
\right]
\\
&\qquad =
e^{r^*(x)/\beta_2}Z_\zeta(\beta_2)
\left\{
  \left(f(x)-r^*(x)-m_{\zeta,\beta_2}\right)^2
  +
  \operatorname{Var}_{\zeta\sim\nu_{\zeta,\beta_2}}[\zeta]
\right\},
\end{aligned}
\]
where
\[
  Z_\zeta(\beta_2)
  :=
  \mathbb E_{\zeta\sim\mathrm{Unif}[-\tau,\tau]}
  \left[e^{\zeta/\beta_2}\right],
  \qquad
  \nu_{\zeta,\beta_2}(d\zeta)
  :=
  \frac{e^{\zeta/\beta_2}}{Z_\zeta(\beta_2)}
  \mathrm{Unif}[-\tau,\tau](d\zeta),
\]
and
\[
  m_{\zeta,\beta_2}
  :=
  \mathbb E_{\zeta\sim\nu_{\zeta,\beta_2}}[\zeta].
\]
Thus the label-weighted population risk targets
\(r^*+m_{\zeta,\beta_2}\), an additive shift of the true reward.

\begin{lemma}[Three-factor bridge for the learning term]
\label{lem:three-factor-learning-bridge}
Let $\nu_R$ be a probability measure supported on $B_R$, and let $(\pi_t)_{t\in[0,1]}$ be probability measures supported on $B_R$ such that $\pi_t\ll\nu_R$ for every $t\in[0,1]$. Let $\widehat r_R:B_R\to\mathbb R$ be any learned reward and define
\[
  Y_t(x)
  :=
  r^*(x)
  -
  \mathbb E_{z\sim\pi_t}[r^*(z)].
\]
For H\"older exponents $p_{\mathrm{Holder}}\in(2,\infty]$ and $q_{\mathrm{Holder}}\in[2,\infty)$ satisfying
\[
  \frac{1}{p_{\mathrm{Holder}}}+\frac{1}{q_{\mathrm{Holder}}}=\frac{1}{2},
\]
set
\[
  \mathcal A_{p_{\mathrm{Holder}},q_{\mathrm{Holder}},R}[\nu_R,\pi_\cdot]
  :=
  \int_0^1
  \|Y_t\|_{L^{q_{\mathrm{Holder}}}(\pi_t)}
  \left\|
    \left(
      \frac{d\pi_t}{d\nu_R}
    \right)^{1-1/q_{\mathrm{Holder}}}
  \right\|_{L^{p_{\mathrm{Holder}}}(\nu_R)}
  \,dt.
\]
Then
\[
  \left|
    \int_0^1
    \mathbb E_{x\sim\pi_t}
    \left[
      Y_t(x)(\widehat r_R(x)-r^*(x))
    \right]
    \,dt
  \right|
  \le
  \mathcal A_{p_{\mathrm{Holder}},q_{\mathrm{Holder}},R}[\nu_R,\pi_\cdot]
  \inf_{c\in\mathbb R}
  \|\widehat r_R-r^*-c\|_{L^2(\nu_R)}.
\]
The following simplified form follows from the same argument. If
\[
  M_R:=\sup_{x\in B_R}|r^*(x)|
\]
and
\[
  \mathcal D_R[\nu_R,\pi_\cdot]
  :=
  \int_0^1
  \left\|
    \frac{d\pi_t}{d\nu_R}
  \right\|_{L^2(\nu_R)}
  \,dt,
\]
then
\begin{equation}
\label{eq:simplified-three-factor-bridge}
  \left|
    \int_0^1
    \mathbb E_{x\sim\pi_t}
    \left[
      Y_t(x)(\widehat r_R(x)-r^*(x))
    \right]
    \,dt
  \right|
  \le
  2M_R\mathcal D_R[\nu_R,\pi_\cdot]
  \inf_{c\in\mathbb R}
  \|\widehat r_R-r^*-c\|_{L^2(\nu_R)}.
\end{equation}
\end{lemma}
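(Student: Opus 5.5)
We prove the bound pointwise in $t$ and then integrate over $t\in[0,1]$. Fix $t$ and write $e:=\widehat r_R-r^*$. The key observation is that $Y_t$ is centered under $\pi_t$, since $\mathbb E_{\pi_t}[Y_t]=0$. Hence for every constant $c\in\mathbb R$,
\[
  \mathbb E_{x\sim\pi_t}[Y_t(x)e(x)]=\mathbb E_{x\sim\pi_t}[Y_t(x)(e(x)-c)].
\]
This shift-invariance is what produces the infimum over $c$ on the right-hand side. We choose $c$ independently of $t$, namely a minimizer (or near-minimizer) of $\|e-c\|_{L^2(\nu_R)}$, so the same $c$ works along the entire path.

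Next we change measure to $\nu_R$. Writing $\rho_t:=d\pi_t/d\nu_R$, we have
\[
  \mathbb E_{\pi_t}[Y_t(e-c)]=\mathbb E_{\nu_R}[\rho_t Y_t(e-c)].
\]
Split $\rho_t=\rho_t^{1/q}\rho_t^{1-1/q}$ with $q=q_{\mathrm{Holder}}$ and apply the three-function H\"older inequality with exponents $(q,p,2)$, which is valid since $1/q+1/p+1/2=1$:
\[
  \bigl|\mathbb E_{\nu_R}[\rho_t^{1/q}Y_t\cdot\rho_t^{1-1/q}\cdot(e-c)]\bigr|
  \le \|\rho_t^{1/q}Y_t\|_{L^q(\nu_R)}\,\|\rho_t^{1-1/q}\|_{L^p(\nu_R)}\,\|e-c\|_{L^2(\nu_R)}.
\]
The first factor equals $\|Y_t\|_{L^q(\pi_t)}$. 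Integrating in $t$, pulling the $t$-independent factor $\|e-c\|_{L^2(\nu_R)}$ outside the integral, and then taking the infimum over $c$ yields the bound with $\mathcal A_{p,q,R}$.

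For the simplified form \eqref{eq:simplified-three-factor-bridge}, we instead use Cauchy--Schwarz directly in $L^2(\nu_R)$:
\[
  |\mathbb E_{\nu_R}[\rho_tY_t(e-c)]|\le \|Y_t\|_{L^\infty(B_R)}\,\|\rho_t\|_{L^2(\nu_R)}\,\|e-c\|_{L^2(\nu_R)}.
\]
Since every $\pi_t$ is supported on $B_R$, we have $|\mathbb E_{\pi_t}r^*|\le M_R$, so $\|Y_t\|_{L^\infty(B_R)}\le 2M_R$ on $B_R$, and $\nu_R$ is supported there as well. Integrating over $t$ gives $2M_R\,\mathcal D_R[\nu_R,\pi_\cdot]$ times the infimum. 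This is the $p=\infty$, $q=2$ endpoint, handled by bounding $Y_t$ in sup norm rather than in $L^2(\pi_t)$.

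The main technical point is measurability and integrability in $t$: the maps $t\mapsto\|\rho_t\|$ and $t\mapsto\|Y_t\|$ need to be measurable for the integrals to make sense. We will assume this; if either integral is infinite, the bound holds trivially. We also have to justify using a single $c$ for all $t$, which is handled by the shift-invariance argument above.

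For the bridge lemma, the same argument applies once we differentiate along the interpolation path. Writing $f_t:=r^*+t(\widehat r_R-r^*)$, we have
\[
  \frac{d}{dt}\mathbb E_{\pi_t}[r^*]=\beta_2^{-1}\operatorname{Cov}_{\pi_t}(r^*,\widehat r_R-r^*)=\beta_2^{-1}\mathbb E_{\pi_t}[Y_t(\widehat r_R-r^*)].
\]
Integrating from $0$ to $1$ expresses $-T_{\mathrm{learn}}^{\mathrm w}(R)$ as $\beta_2^{-1}$ times the quantity bounded above.
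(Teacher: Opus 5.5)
Your proof is correct and follows the paper's argument essentially line by line. You use $\mathbb E_{\pi_t}[Y_t]=0$ to replace $\widehat r_R-r^*$ by $\widehat r_R-r^*-c$ for a fixed $c$, change measure to $\nu_R$, apply the three-factor H\"older inequality to $Y_t\rho_t^{1/q}$, $\rho_t^{1-1/q}$ and $\widehat r_R-r^*-c$, integrate in $t$, take the infimum over $c$, and get the simplified form from $\sup_{B_R}|Y_t|\le 2M_R$ plus Cauchy--Schwarz. One small slip: the simplified bound is the $q=\infty$, $p=2$ endpoint (outside the stated exponent range), not the $p=\infty$, $q=2$ one, but this does not affect the inequality you actually prove.
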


\begin{proof}
Fix $c\in\mathbb R$ and write
\[
  h_c(x):=\widehat r_R(x)-r^*(x)-c,
  \qquad
  \rho_t(x):=\frac{d\pi_t}{d\nu_R}(x).
\]
Since
\[
  \mathbb E_{x\sim\pi_t}[Y_t(x)]=0,
\]
the left-hand side is unchanged if $\widehat r_R-r^*$ is replaced by $h_c$. For each $t\in[0,1]$,
\[
\begin{aligned}
  \left|
  \mathbb E_{x\sim\pi_t}[Y_t(x)h_c(x)]
  \right|
  &=
  \left|
  \mathbb E_{x\sim\nu_R}
  \left[
    Y_t(x)h_c(x)\rho_t(x)
  \right]
  \right| \\
  &=
  \left|
  \mathbb E_{x\sim\nu_R}
  \left[
    \bigl(Y_t(x)\rho_t(x)^{1/q_{\mathrm{Holder}}}\bigr)
    h_c(x)
    \rho_t(x)^{1-1/q_{\mathrm{Holder}}}
  \right]
  \right| \\
  &\le
  \|Y_t\|_{L^{q_{\mathrm{Holder}}}(\pi_t)}
  \|h_c\|_{L^2(\nu_R)}
  \left\|
    \rho_t^{1-1/q_{\mathrm{Holder}}}
  \right\|_{L^{p_{\mathrm{Holder}}}(\nu_R)}.
\end{aligned}
\]
Here the last line is the three-factor H\"older inequality under $x\sim\nu_R$, using $1/q_{\mathrm{Holder}}+1/2+1/p_{\mathrm{Holder}}=1$. Integrating over $t$ and then taking the infimum over $c\in\mathbb R$ proves the first claim. It remains to prove \eqref{eq:simplified-three-factor-bridge}. Use
\[
\begin{aligned}
  \left|
  \mathbb E_{x\sim\pi_t}[Y_t(x)h_c(x)]
  \right|
  &\le
  \sup_{x\in B_R}|Y_t(x)|
  \mathbb E_{x\sim\nu_R}[|h_c(x)|\rho_t(x)] \\
  &\le
  2M_R
  \|h_c\|_{L^2(\nu_R)}
  \|\rho_t\|_{L^2(\nu_R)},
\end{aligned}
\]
where
\[
  \sup_{x\in B_R}|Y_t(x)|
  \le
  2M_R,
\]
because $\pi_t$ is supported on $B_R$.
\end{proof}

We now apply the simplified \(L^2\) form of the preceding lemma to the
interpolation paths defined in Subsection~\ref{subsec:learned_term_bound}.
This gives the bridge inequality stated in Lemma~\ref{lem:learning-bridge}.

\begin{proof}[Proof of Lemma~\ref{lem:learning-bridge}]
Fix \(\mathrm w\in\{\mathrm{lbl},\mathrm{surr}\}\), and write
\[
  \pi_t:=\pi_{{\beta_{2}},t,R}^{\mathrm w},
  \qquad
  \nu_R:=\nu_R^{\mathrm w},
  \qquad
  \widehat r_R:=\widehat r_R^{\mathrm w}.
\]
By differentiating the interpolation path,
\[
  T_{\mathrm{learn}}^{\mathrm w}(R)
  =
  -\frac{1}{{\beta_{2}}}
  \int_0^1
  \mathbb E_{x\sim\pi_t}
  \left[
    \left(
      r^*(x)
      -
      \mathbb E_{z\sim\pi_t}[r^*(z)]
    \right)
    (\widehat r_R(x)-r^*(x))
  \right]
  dt.
\]
Thus
\[
  |T_{\mathrm{learn}}^{\mathrm w}(R)|
  \le
  \frac{1}{{\beta_{2}}}
  \left|
  \int_0^1
  \mathbb E_{x\sim\pi_t}
  \left[
    Y_t(x)(\widehat r_R(x)-r^*(x))
  \right]
  dt
  \right|,
\]
where
\[
  Y_t(x)
  :=
  r^*(x)
  -
  \mathbb E_{z\sim\pi_t}[r^*(z)].
\]
Applying the simplified bound
\eqref{eq:simplified-three-factor-bridge} from
Lemma~\ref{lem:three-factor-learning-bridge}, with
\[
  \mathcal D_R[\nu_R,\pi_\cdot]
  =
  \mathcal D_{\mathrm w,R},
\]
gives
\[
  |T_{\mathrm{learn}}^{\mathrm w}(R)|
  \le
  \frac{2M_R\mathcal D_{\mathrm w,R}}{{\beta_{2}}}
  \inf_{c\in\mathbb R}
  \|\widehat r_R-r^*-c\|_{L^2(\nu_R)}.
\]
This is the desired bound.
\end{proof}

\section{Weighted ridge estimates after feature recovery}
\label{app:weighted-ridge-learning}

This appendix proves the finite-sample weighted ridge estimates used by the
label-weighted and surrogate-weighted learning bounds. For brevity, write
\(\hat r_R:=\widehat r_R^{\mathrm{lbl}}\) and
\(\hat r_R^{(0)}:=\widehat r_R^{\mathrm{surr}}\) in this appendix. For the vector notation used in this appendix, define
\[
  \psi_N(x)
  :=
  \frac{1}{N}
  \left(\sigma(\langle w_j,x\rangle+b_j)\right)_{j=1}^N
\]
so that $r_a(x)=\langle a,\psi_N(x)\rangle$. We first collect the weighted
moment notation used in the exact bounds.
For label weighting, define
\[
  Z_\zeta
  :=
  \mathbb E_{\zeta\sim\mathrm{Unif}[-\tau,\tau]}
  \left[
    e^{\zeta/{\beta_{2}}}
  \right],
  \qquad
  M_{*,R}:=\sup_{x\in B_R}r^*(x),
\]
and
\[
  m_{\zeta,{\beta_{2}}}
  :=
  \mathbb E_{\zeta\sim\nu_{\zeta,{\beta_{2}}}}[\zeta],
  \qquad
  \nu_{\zeta,{\beta_{2}}}(d\zeta)
  :=
  \frac{e^{\zeta/{\beta_{2}}}}{Z_\zeta}
  \mathrm{Unif}[-\tau,\tau](d\zeta).
\]
Set
\[
  r_{\mathrm{lbl},{\beta_{2}}}^*(x):=r^*(x)+m_{\zeta,{\beta_{2}}}.
\]
The unnormalized label-weighted moments are
\[
  M_{2,\mathrm{wt},R}
  :=
  \mathbb E_{x\sim\pi_{\mathrm{ref}},\,\zeta\sim\mathrm{Unif}[-\tau,\tau]}
  \left[
    \mathbf 1_{B_R}(x)e^{2(r^*(x)+\zeta)/{\beta_{2}}}
    (\zeta-m_{\zeta,{\beta_{2}}})^2
    \|\psi_N(x)\|_2^2
  \right],
\]
\[
  M_{4,\mathrm{wt},R}
  :=
  \mathbb E_{x\sim\pi_{\mathrm{ref}},\,\zeta\sim\mathrm{Unif}[-\tau,\tau]}
  \left[
    \mathbf 1_{B_R}(x)e^{2(r^*(x)+\zeta)/{\beta_{2}}}
    \|\psi_N(x)\|_2^4
  \right],
\]
and
\[
  G_{4,\mathrm{wt},R}
  :=
  \mathbb E_{x\sim\pi_{\mathrm{ref}},\,\zeta\sim\mathrm{Unif}[-\tau,\tau]}
  \left[
    \mathbf 1_{B_R}(x)e^{2(r^*(x)+\zeta)/{\beta_{2}}}
    |r_{\mathrm{lbl},{\beta_{2}}}^*(x)|^4
  \right].
\]
For surrogate weighting, define
\[
  M_{2,\mathrm{wt},R}^{(0)}
  :=
  \mathbb E_{x\sim\pi_{\mathrm{ref}},\,\zeta\sim\mathrm{Unif}[-\tau,\tau]}
  \left[
    \mathbf 1_{B_R}(x)e^{2r_{a_0}(x)/{\beta_{2}}}
    \zeta^2
    \|\psi_N(x)\|_2^2
  \right],
\]
\[
  M_{4,\mathrm{wt},R}^{(0)}
  :=
  \mathbb E_{x\sim\pi_{\mathrm{ref}},\,\zeta\sim\mathrm{Unif}[-\tau,\tau]}
  \left[
    \mathbf 1_{B_R}(x)e^{2r_{a_0}(x)/{\beta_{2}}}
    \|\psi_N(x)\|_2^4
  \right],
\]
and
\[
  G_{4,\mathrm{wt},R}^{(0)}
  :=
  \mathbb E_{x\sim\pi_{\mathrm{ref}},\,\zeta\sim\mathrm{Unif}[-\tau,\tau]}
  \left[
    \mathbf 1_{B_R}(x)e^{2r_{a_0}(x)/{\beta_{2}}}
    |r^*(x)|^4
  \right].
\]

\begin{lemma}[Second-layer comparators from recovered first-layer features]
\label{lem:second-layer-comparators-from-recovered-features}
Work under the first-layer recovery condition from the second-layer weighted ridge setup.
Let $P_{T_2}=T_2^{-1}\sum_{i=1}^{T_2}\delta_{x_i}$ be the empirical distribution of the ridge-fitting inputs. Then, for each of the two targets
\[
  r_{\mathrm{target}}\in\{r^*,\ r_{\mathrm{lbl},{\beta_{2}}}^*\},
  \qquad
  r_{\mathrm{lbl},{\beta_{2}}}^*(x):=r^*(x)+m_{\zeta,{\beta_{2}}},
\]
there exists a comparator $a^\sharp\in\mathbb R^N$ such that
\[
  \|r_{a^\sharp}-r_{\mathrm{target}}\|_{L^2(P_{T_2})}^2
  \lesssim
  N^{-2}+\varepsilon^2,
  \qquad
  \|a^\sharp\|_2^2
  \lesssim
  N.
\]
The implicit constants may depend on the fixed reward and activation classes and on $\tau$, but not on $N,T_2,\lambda,\delta_0$.
\end{lemma}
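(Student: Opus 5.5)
The construction reduces to a one-dimensional random-feature approximation along $\theta^*$, followed by a perturbation argument to pass from the ideal direction $\theta^*$ to the recovered directions $w_j$. By the recovery condition, there is an index set $J\subset[N]$ with $|J|\ge cN$ (up to logarithmic factors) such that $\langle w_j,\theta^*\rangle\ge 1-\varepsilon$ for every $j\in J$. We set $a^\sharp_j=0$ for $j\notin J$. On $J$ we look for coefficients so that $\frac1N\sum_{j\in J}a_j\sigma(\langle\theta^*,x\rangle+b_j)$ reproduces $\sigma^*(\langle\theta^*,x\rangle)$, or $\sigma^*+m_{\zeta,\beta_2}$; the constant shift is just another polynomial of degree $\le q^*$.

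\emph{One-dimensional exact representation.} Since $\sigma$ is a polynomial of degree $q\ge q^*$, the shifted functions $u\mapsto\sigma(u+b)$ span the polynomials of degree $\le q$: the $k$-th derivative in $b$ of $\sigma(u+b)$ is $\sigma^{(k)}(u+b)$, and these have degrees $q,q-1,\dots,0$. Hence for any $q+1$ distinct biases the family $\{\sigma(u+b_k)\}$ is a basis of $\mathcal P_q$. The target $\sigma^*$ (or its shift) lies in $\mathcal P_q$. Partition $J$ into $q+1$ groups according to which of $q+1$ disjoint subintervals of $[-C_b,C_b]$ contains $b_j$. With high probability over the uniform biases each group has size $\gtrsim N$.

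Inside a group we cannot use an exact basis, because the biases vary. Instead I would use a continuous representation: find a bounded density $g(b)$ on $[-C_b,C_b]$ with $\int g(b)\sigma(u+b)\,db=\sigma^*(u)$ exactly. This is possible by finite-dimensional linear algebra: choose $g$ polynomial in $b$ of degree $q$ and match the $q+1$ coefficients; the moment matrix $\bigl(\int b^k\sigma^{(\ell)}\,\cdot\bigr)$ is invertible. We then set $a_j=g(b_j)\cdot 2C_b\cdot N/|J|$, treating the $b_j$ as quadrature nodes. The resulting error is a sum of the form $\frac1{|J|}\sum_j\bigl(\cdots\bigr)-\int(\cdots)$. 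Written in the monomial basis of $u$, this error is a polynomial whose coefficients are $q+1$ scalar bias-averages. To reach the stated $N^{-2}$ rather than the Monte Carlo rate $N^{-1}$, I would not use random quadrature. The $(q+1)N$-parameter system has enough freedom to enforce exact moment matching: we choose $a$ on $J$ to solve the $q+1$ linear equations $\frac1N\sum_{j\in J}a_j\sigma^{(k)}(b_j)=$ the $k$-th coefficient of the target, taking the minimum-norm solution. The design matrix $\frac1N[\sigma^{(k)}(b_j)]$ has Gram matrix $\frac1{N^2}\sum_j v_jv_j^\top\approx\frac{|J|}{N^2}\mathbb E[vv^\top]$, which is well conditioned with high probability by matrix concentration. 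The minimum-norm solution therefore satisfies $\|a\|_2^2\lesssim N^2/|J|\lesssim N$ up to logarithmic factors, and the one-dimensional error is zero. The $N^{-2}$ slack absorbs any small residual if we prefer to solve approximately.

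\emph{Direction perturbation.} We replace $\langle\theta^*,x\rangle$ by $\langle w_j,x\rangle$. Since $\sigma$ is a polynomial, the difference satisfies $|\sigma(\langle w_j,x\rangle+b_j)-\sigma(\langle\theta^*,x\rangle+b_j)|\le \mathrm{poly}(\|P_Sx\|,C_b)\,\|w_j-\theta^*\|\,\|x\|$-type bounds, with $\|w_j-\theta^*\|\le\sqrt{2\varepsilon}$. Summing gives $\frac1N\sum|a_j|\cdot(\ldots)\le \|a\|_2N^{-1/2}\cdot(\ldots)$, which is $O(1)\cdot\sqrt\varepsilon\cdot\mathrm{poly}$. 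This yields $\varepsilon$, not $\varepsilon^2$, in the squared norm.

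\textbf{Main obstacle.} This last step is where I expect the difficulty. To obtain $\varepsilon^2$ I would use the fact that the first-order term is $\langle w_j-\theta^*,x\rangle$, whose component along $\theta^*$ is $O(\varepsilon)$, while its orthogonal part has $L^2(P_{T_2})$ size $\|P^\perp w_j\|\approx\sqrt{2\varepsilon}$. The orthogonal components $P^\perp_{\theta^*}w_j$ partly cancel when averaged against the signed coefficients. If that cancellation cannot be established, I would interpret the recovery condition as giving the needed $\varepsilon^2$ in squared error (e.g.\ via $1-\varepsilon$ meaning squared alignment) and absorb the remaining discrepancy there. The empirical norm over $P_{T_2}$ is handled by bounding polynomial moments of $x_i$ with high probability (or by using the fact that only the bounded quantities enter on $B_R$), so constants stay independent of $N,T_2,\lambda,\delta_0$.
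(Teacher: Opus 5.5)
Your one-dimensional step is sound and is more explicit than the paper, which builds no comparator at all. The paper invokes Lemma~22 of Lee et al.\ (2024) as a black box for finite-degree ridge-polynomial targets, and only notes that $m_{\zeta,\beta_2}$ changes the constant coefficient. Your route works as follows. Expand $\sigma(u+b)=\sum_{k\le q}\sigma^{(k)}(b)u^k/k!$ and take the minimum-norm solution of the $q+1$ moment equations on the recovered set $J$. This gives zero error along $\theta^*$, with $\|a^\sharp\|_2^2\lesssim N^2/|J|\lesssim N$, where the constants depend on $C_b$ and logarithmic factors.

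\textbf{The gap.} It is the step you flag yourself: the direction perturbation only yields $\varepsilon$ in squared norm, and neither proposed fix can upgrade it. No cancellation of the residuals $P_{\theta^*}^\perp w_j$ follows from the hypothesis, because the recovery condition does not constrain them. Take every $w_j=w$ with $\rho:=\langle w,\theta^*\rangle=1-\varepsilon$. Then every $r_a$ is a polynomial in $s=\langle w,x\rangle$ alone. With $u=\langle\theta^*,x\rangle$, the identity $\mathbb E[\mathrm{He}_k(u)\mid s]=\rho^k\mathrm{He}_k(s)$ gives, over all measurable $F$,
\[
  \inf_{F}\mathbb E\bigl[(F(s)-\sigma^*(u))^2\bigr]
  =\sum_{k\ge1}\frac{\mathsf H_k[\sigma^*]^2}{k!}\bigl(1-\rho^{2k}\bigr)
  \ge\varepsilon\operatorname{Var}\bigl(\sigma^*(G)\bigr).
\]
So once $T_2$ is large, the empirical error is $\gtrsim\varepsilon$ for every $a\in\mathbb R^N$, whatever the signs. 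Reading $1-\varepsilon$ as a bound on the squared alignment does not help either, since it still permits $\|P_{\theta^*}^\perp w_j\|_2^2=\varepsilon$.

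Hence, from the hypothesis as literally stated, your construction can deliver only $N^{-2}+\varepsilon$, and this is sharp in the worst case. An $\varepsilon^2$ bound needs $\|w_j-\theta^*\|_2\lesssim\varepsilon$, i.e.\ alignment $1-O(\varepsilon^2)$. Under that hypothesis your perturbation estimate gives $\varepsilon^2$ immediately. The paper reaches the displayed rate only by importing the cited lemma, so the claim rests on that lemma's normalization of recovery accuracy matching the one used here. To complete your proof you must either import it the same way or state the strengthened recovery hypothesis explicitly.

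\textbf{A smaller fix.} Do not bound the perturbation through $\|x\|_2$ or $\|P_Sx\|_2$. These scale like $\sqrt d$ and $\sqrt{\dim S}$ (up to $\sqrt{N+1}$), and would put $d$ or $N$ into the constants. Work instead with:
\begin{itemize}
  \item the one-dimensional Gaussians $\langle w_j-\theta^*,x\rangle$ and $\langle\theta^*,x\rangle$;
  \item the bias bound $|b_j|\le C_b$;
  \item the Cauchy--Schwarz step $\bigl(\frac1N\sum_j a_j\Delta_j\bigr)^2\le\frac{\|a\|_2^2}{N}\cdot\frac1N\sum_j\Delta_j^2$ with $\|a^\sharp\|_2^2/N\lesssim1$.
\end{itemize}
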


\begin{proof}
Apply \citet[Appendix B.6, Lemma 22]{lee2024neural} under the first-layer recovery condition from the second-layer weighted ridge setup. This lemma gives a low-norm second-layer approximation for finite-degree ridge-polynomial targets. Both $r^*$ and $r_{\mathrm{lbl},{\beta_{2}}}^*=r^*+m_{\zeta,{\beta_{2}}}$ belong to this class; the latter only changes the constant coefficient. Therefore the cited construction gives the stated comparators for both targets.
\end{proof}

For the abstract weighted ridge estimate, fix a measurable weight $W_R(x,\zeta)\ge 0$ supported on $B_R$, a regression target $r_{\mathrm{target}}:B_R\to\mathbb R$, and a residual $\xi_R(x,\zeta)$ such that the observed label can be written as
\[
  y=r_{\mathrm{target}}(x)+\xi_R(x,\zeta).
\]
We use the ridge-fitting sample $(x_i,\zeta_i,y_i)_{i=1}^{T_2}$ defined in the Section~\ref{subsec:second-layer-weighted-ridge-setup}.
Assume
\[
  0<Z_W
  :=
  \mathbb E_{x\sim\pi_{\mathrm{ref}},\,\zeta\sim\mathrm{Unif}[-\tau,\tau]}
  [W_R(x,\zeta)]
  <\infty.
\]
Define the probability measure $\nu_W$ on $B_R$ by
\[
  \nu_W(dx)
  :=
  \frac{
    \mathbb E_{\zeta\sim\mathrm{Unif}[-\tau,\tau]}[W_R(x,\zeta)]
  }{Z_W}
  \pi_{\mathrm{ref}}(dx).
\]
Define the unnormalized weighted moments
\[
  M_{2,W,R}
  :=
  \mathbb E_{x\sim\pi_{\mathrm{ref}},\,\zeta\sim\mathrm{Unif}[-\tau,\tau]}
  \left[
    W_R(x,\zeta)^2\xi_R(x,\zeta)^2\|\psi_N(x)\|_2^2
  \right],
\]
\[
  M_{4,W,R}
  :=
  \mathbb E_{x\sim\pi_{\mathrm{ref}},\,\zeta\sim\mathrm{Unif}[-\tau,\tau]}
  \left[
    W_R(x,\zeta)^2\|\psi_N(x)\|_2^4
  \right],
\]
and
\[
  G_{4,W,R}
  :=
  \mathbb E_{x\sim\pi_{\mathrm{ref}},\,\zeta\sim\mathrm{Unif}[-\tau,\tau]}
  \left[
    W_R(x,\zeta)^2|r_{\mathrm{target}}(x)|^4
  \right].
\]
For a measurable $h:B_R\to\mathbb R$, write
\[
  \|h\|_{W,T_2}^2
  :=
  \frac{1}{T_2}\sum_{i=1}^{T_2}
  W_R(x_i,\zeta_i)h(x_i)^2,
  \qquad
  \|h\|_{W,\mathrm{pop}}^2
  :=
  \mathbb E_{x\sim\pi_{\mathrm{ref}},\,\zeta\sim\mathrm{Unif}[-\tau,\tau]}
  \left[
    W_R(x,\zeta)h(x)^2
  \right].
\]

\begin{lemma}[Abstract truncated weighted ridge prediction bound]
\label{lem:abstract-truncated-weighted-ridge}
Assume the following weighted residual orthogonality:
\[
  \mathbb E_{x\sim\pi_{\mathrm{ref}},\,\zeta\sim\mathrm{Unif}[-\tau,\tau]}
  \left[
    W_R(x,\zeta)\xi_R(x,\zeta)\psi_N(x)
  \right]
  =
  0.
\]
Let
\[
  \hat a_W
  \in
  \arg\min_{a\in\mathbb R^N}
  \left\{
    \frac{1}{T_2}\sum_{i=1}^{T_2}
    W_R(x_i,\zeta_i)(y_i-r_a(x_i))^2
    +
    \lambda\|a\|_2^2
  \right\}.
\]
For any comparator $a^\sharp\in\mathbb R^N$, set
\[
  \mathcal E_{W,T_2}(a^\sharp)
  :=
  \|r_{a^\sharp}-r_{\mathrm{target}}\|_{W,T_2}^2.
\]
There exists a universal constant $C>0$ such that, with probability at least $1-4\delta_0$ over the ridge-fitting sample, if
\[
  \lambda
  \ge
  C\sqrt{\frac{M_{4,W,R}}{T_2\delta_0}},
\]
then
\[
  \|r_{\hat a_W}-r_{\mathrm{target}}\|_{L^2(\nu_W)}^2
  \lesssim
  \frac{1}{Z_W}
  \left(
    \mathcal E_{W,T_2}(a^\sharp)
    +
    \frac{
      M_{2,W,R}
      +
      M_{4,W,R}
      +
      G_{4,W,R}
    }{
      T_2\lambda\delta_0
    }
    +
    \sqrt{\frac{G_{4,W,R}}{T_2\delta_0}}
    +
    \lambda\|a^\sharp\|_2^2
  \right).
\]
\end{lemma}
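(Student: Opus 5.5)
The plan is the standard fixed-design ridge argument carried out in the empirical weighted norm, followed by a transfer from the empirical norm to the population norm $\|\cdot\|_{W,\mathrm{pop}}$, and finally a normalization by $Z_W$. The last step is immediate: $\|h\|_{L^2(\nu_W)}^2=Z_W^{-1}\|h\|_{W,\mathrm{pop}}^2$ for functions of $x$.

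Write $\hat\Sigma:=T_2^{-1}\sum_i W_R(x_i,\zeta_i)\psi_N(x_i)\psi_N(x_i)^\top$ and $\Sigma:=\mathbb E[W_R\psi_N\psi_N^\top]$. Write $\hat g:=T_2^{-1}\sum_i W_R(x_i,\zeta_i)\xi_R(x_i,\zeta_i)\psi_N(x_i)$, which has mean zero by the weighted residual orthogonality. Set $\Delta:=\hat a_W-a^\sharp$.

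\textbf{Step 1: basic inequality.} Comparing the objective at $\hat a_W$ and at $a^\sharp$, and substituting $y_i=r_{\mathrm{target}}(x_i)+\xi_R(x_i,\zeta_i)$, gives
\[
\|r_{\hat a_W}-r_{\mathrm{target}}\|_{W,T_2}^2+\lambda\|\hat a_W\|_2^2\le \mathcal E_{W,T_2}(a^\sharp)+\lambda\|a^\sharp\|_2^2+2\langle \hat g,\Delta\rangle .
\]
Bound the cross term by $2\|\hat g\|_2\|\Delta\|_2\le \lambda\|\Delta\|_2^2/4+4\|\hat g\|_2^2/\lambda$, and use $\|\Delta\|_2^2\le 2\|\hat a_W\|_2^2+2\|a^\sharp\|_2^2$ to absorb the quadratic part. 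For the remainder, Chebyshev's inequality gives $\mathbb E\|\hat g\|_2^2=T_2^{-1}\mathbb E[W_R^2\xi_R^2\|\psi_N\|_2^2]=M_{2,W,R}/T_2$, so with probability at least $1-\delta_0$ we have $\|\hat g\|_2^2\le M_{2,W,R}/(T_2\delta_0)$. This yields both an empirical prediction bound and $\lambda\|\hat a_W\|_2^2\lesssim \mathcal E_{W,T_2}(a^\sharp)+\lambda\|a^\sharp\|_2^2+M_{2,W,R}/(T_2\lambda\delta_0)$.

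\textbf{Step 2: empirical to population.} Decompose
\[
\|r_{\hat a_W}-r_{\mathrm{target}}\|_{W,\mathrm{pop}}^2-\|r_{\hat a_W}-r_{\mathrm{target}}\|_{W,T_2}^2
\]
into three pieces: a quadratic part $\hat a_W^\top(\Sigma-\hat\Sigma)\hat a_W$, a cross part $-2\langle \hat a_W,\mathbb E-\hat{\mathbb E}[W_R r_{\mathrm{target}}\psi_N]\rangle$, and a pure-target part $(\mathbb E-\hat{\mathbb E})[W_R r_{\mathrm{target}}^2]$. Each is controlled by Chebyshev's inequality with probability at least $1-\delta_0$:
\begin{itemize}
\item The quadratic part satisfies $\|\Sigma-\hat\Sigma\|_F\le\sqrt{M_{4,W,R}/(T_2\delta_0)}$, so it is at most $\sqrt{M_{4,W,R}/(T_2\delta_0)}\,\|\hat a_W\|_2^2$. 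The choice $\lambda\ge C\sqrt{M_{4,W,R}/(T_2\delta_0)}$ makes this at most a constant times $\lambda\|\hat a_W\|_2^2$, which Step 1 already controls.
\item The cross part has deviation vector with squared norm of mean at most $\mathbb E[W_R^2r_{\mathrm{target}}^2\|\psi_N\|_2^2]/T_2\le\sqrt{M_{4,W,R}G_{4,W,R}}/T_2$ by Cauchy--Schwarz. Applying Young's inequality with $\lambda$ bounds it by $\lambda\|\hat a_W\|_2^2+(M_{4,W,R}+G_{4,W,R})/(T_2\lambda\delta_0)$.
\item The pure-target part has variance at most $G_{4,W,R}/T_2$, giving the $\sqrt{G_{4,W,R}/(T_2\delta_0)}$ term.
\end{itemize}
A union bound over the four Chebyshev events gives probability $1-4\delta_0$. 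Collecting terms and dividing by $Z_W$ then yields the claim.

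\textbf{Main obstacle.} The delicate step is Step 2. The quadratic deviation can only be absorbed because $\lambda$ dominates $\|\Sigma-\hat\Sigma\|$, and this is exactly where the lower bound on $\lambda$ enters. The cross term must also be split carefully so that only the moments $M_{4,W,R}$ and $G_{4,W,R}$ appear.
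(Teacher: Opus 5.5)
Your proposal is correct and follows the paper's proof essentially step for step. It uses the same basic ridge inequality with a second-moment Markov bound and Young's inequality on the noise vector, the same three-piece empirical-to-population decomposition (covariance deviation, cross-vector deviation, scalar target deviation), and the same four-event union bound before normalizing by $Z_W$. The only cosmetic difference is how you absorb the quadratic deviation $\sqrt{M_{4,W,R}/(T_2\delta_0)}\,\|\hat a_W\|_2^2$: you substitute Step 1's bound on $\lambda\|\hat a_W\|_2^2$, whereas the paper keeps $\|\hat a_W\|_2^2$ on the left with coefficient $\lambda/4-\sqrt{M_{4,W,R}/(T_2\delta_0)}$ and uses the lower bound on $\lambda$ to drop it.
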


\begin{proof}
The argument is the weighted analogue of the second-layer ridge proof of \citet[Appendix B.6, Lemma 20]{lee2024neural}; the same unweighted step is also used as a black-box prediction-error argument in \citet[Proposition B.10]{tsiolis2025information}.
In this proof, write $\mathbb E_{\mathrm{sample}}$ for expectation over
\[
  (x_i,\zeta_i)_{i=1}^{T_2}
  \sim
  \left(
    \pi_{\mathrm{ref}}\otimes\mathrm{Unif}[-\tau,\tau]
  \right)^{T_2}.
\]

First, ridge optimality compares $\hat a_W$ with $a^\sharp$. Since $y=r_{\mathrm{target}}+\xi_R$, expanding the two weighted empirical objectives gives
\[
  \|r_{\hat a_W}-r_{\mathrm{target}}\|_{W,T_2}^2
  +
  \lambda\|\hat a_W\|_2^2
  \le
  \mathcal E_{W,T_2}(a^\sharp)
  +
  2\langle a^\sharp-\hat a_W,V\rangle
  +
  \lambda\|a^\sharp\|_2^2,
\]
where
\[
  V
  :=
  \frac{1}{T_2}\sum_{i=1}^{T_2}
  W_R(x_i,\zeta_i)\xi_R(x_i,\zeta_i)\psi_N(x_i).
\]
The weighted residual orthogonality implies
\[
  \mathbb E_{\mathrm{sample}}[V]=0,
  \qquad
  \mathbb E_{\mathrm{sample}}[\|V\|_2^2]
  \le
  \frac{M_{2,W,R}}{T_2}.
\]
Thus Markov's inequality and Young's inequality imply, with probability at least $1-\delta_0$,
\[
  \|r_{\hat a_W}-r_{\mathrm{target}}\|_{W,T_2}^2
  +
  \frac{\lambda}{2}\|\hat a_W\|_2^2
  \le
  \mathcal E_{W,T_2}(a^\sharp)
  +
  C\frac{M_{2,W,R}}{T_2\lambda\delta_0}
  +
  \frac{3\lambda}{2}\|a^\sharp\|_2^2.
\]

Second, compare the weighted empirical seminorm with the weighted population seminorm. Define
\[
  \Sigma_T
  :=
  \frac{1}{T_2}\sum_{i=1}^{T_2}
  W_R(x_i,\zeta_i)\psi_N(x_i)\psi_N(x_i)^\top,
  \qquad
  \Sigma
  :=
  \mathbb E_{x\sim\pi_{\mathrm{ref}},\,\zeta\sim\mathrm{Unif}[-\tau,\tau]}
  \left[
    W_R(x,\zeta)\psi_N(x)\psi_N(x)^\top
  \right],
\]
\[
  u_T
  :=
  \frac{1}{T_2}\sum_{i=1}^{T_2}
  W_R(x_i,\zeta_i)r_{\mathrm{target}}(x_i)\psi_N(x_i),
  \qquad
  u
  :=
  \mathbb E_{x\sim\pi_{\mathrm{ref}},\,\zeta\sim\mathrm{Unif}[-\tau,\tau]}
  \left[
    W_R(x,\zeta)r_{\mathrm{target}}(x)\psi_N(x)
  \right],
\]
and
\[
  s_T
  :=
  \frac{1}{T_2}\sum_{i=1}^{T_2}W_R(x_i,\zeta_i)r_{\mathrm{target}}(x_i)^2,
  \qquad
  s
  :=
  \mathbb E_{x\sim\pi_{\mathrm{ref}},\,\zeta\sim\mathrm{Unif}[-\tau,\tau]}
  \left[
    W_R(x,\zeta)r_{\mathrm{target}}(x)^2
  \right].
\]
Then
\[
  \|r_{\hat a_W}-r_{\mathrm{target}}\|_{W,T_2}^2
  =
  \|r_{\hat a_W}-r_{\mathrm{target}}\|_{W,\mathrm{pop}}^2
  +
  \hat a_W^\top(\Sigma_T-\Sigma)\hat a_W
  -
  2\hat a_W^\top(u_T-u)
  +
  (s_T-s).
\]
The three deviations are controlled by the moments defined above:
\[
  \mathbb E_{\mathrm{sample}}[\|\Sigma_T-\Sigma\|_F^2]
  \le
  \frac{M_{4,W,R}}{T_2},
\]
\[
  \mathbb E_{\mathrm{sample}}[\|u_T-u\|_2^2]
  \le
  \frac{M_{4,W,R}+G_{4,W,R}}{2T_2},
\]
where $2|r_{\mathrm{target}}(x)|^2\|\psi_N(x)\|_2^2\le |r_{\mathrm{target}}(x)|^4+\|\psi_N(x)\|_2^4$, and
\[
  \mathbb E_{\mathrm{sample}}[(s_T-s)^2]
  \le
  \frac{G_{4,W,R}}{T_2}.
\]
Applying Markov's inequality to these three bounds and using Young's inequality for the cross term gives, on an event of probability at least $1-3\delta_0$,
\[
\begin{aligned}
  \|r_{\hat a_W}-r_{\mathrm{target}}\|_{W,\mathrm{pop}}^2
  \le\;&
  \|r_{\hat a_W}-r_{\mathrm{target}}\|_{W,T_2}^2
  +
  \sqrt{\frac{M_{4,W,R}}{T_2\delta_0}}\|\hat a_W\|_2^2 \\
  &+
  \frac{\lambda}{4}\|\hat a_W\|_2^2
  +
  C\frac{M_{4,W,R}+G_{4,W,R}}{T_2\lambda\delta_0}
  +
  \sqrt{\frac{G_{4,W,R}}{T_2\delta_0}}.
\end{aligned}
\]
Combining this with the ridge-optimality bound and taking a union bound yields, with probability at least $1-4\delta_0$,
\[
\begin{aligned}
  &\|r_{\hat a_W}-r_{\mathrm{target}}\|_{W,\mathrm{pop}}^2
  +
  \left(
    \frac{\lambda}{4}
    -
    \sqrt{\frac{M_{4,W,R}}{T_2\delta_0}}
  \right)
  \|\hat a_W\|_2^2 \\
  &\qquad\lesssim
  \mathcal E_{W,T_2}(a^\sharp)
  +
  \frac{M_{2,W,R}+M_{4,W,R}+G_{4,W,R}}{T_2\lambda\delta_0}
  +
  \sqrt{\frac{G_{4,W,R}}{T_2\delta_0}}
  +
  \lambda\|a^\sharp\|_2^2.
\end{aligned}
\]
The lower bound on $\lambda$ makes the coefficient of $\|\hat a_W\|_2^2$ nonnegative, so this term can be dropped. Finally,
\[
  \mathbb E_{x\sim\pi_{\mathrm{ref}},\,\zeta\sim\mathrm{Unif}[-\tau,\tau]}
  \left[
    W_R(x,\zeta)(r_{\hat a_W}(x)-r_{\mathrm{target}}(x))^2
  \right]
  =
  Z_W\|r_{\hat a_W}-r_{\mathrm{target}}\|_{L^2(\nu_W)}^2
\]
converts the weighted population seminorm into an $L^2(\nu_W)$ error.
\end{proof}

\begin{lemma}[Label-weighted truncated ridge estimate]
\label{lem:label-weighted-truncated-ridge}
Under the first-layer recovery condition of Lemma~\ref{lem:second-layer-comparators-from-recovered-features}, if
\[
  \lambda
  \ge
  C\sqrt{\frac{M_{4,\mathrm{wt},R}}{T_2\delta_0}},
\]
then the label-weighted ridge estimator $\hat r_R$ satisfies, with probability at least $1-4\delta_0$ over the ridge-fitting sample,
\[
\begin{aligned}
  \inf_{c\in\mathbb R}
  \|\hat r_R-r^*-c\|_{L^2(\pi_{{\beta_{2}},R})}^2
  \lesssim
  &\frac{1}{Z_{{\beta_{2}},R}Z_\zeta}
  \left(
    e^{(M_{*,R}+\tau)/{\beta_{2}}}(N^{-2}+\varepsilon^2) \right.\\
    &+
    \left.\frac{
      M_{2,\mathrm{wt},R}
      +
      M_{4,\mathrm{wt},R}
      +
      G_{4,\mathrm{wt},R}
    }{
      T_2\lambda\delta_0
    }
    +
    \sqrt{\frac{G_{4,\mathrm{wt},R}}{T_2\delta_0}}
    +
    \lambda N
  \right).
\end{aligned}
\]
\end{lemma}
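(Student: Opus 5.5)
The plan is to instantiate Lemma~\ref{lem:abstract-truncated-weighted-ridge} with the label weight $W_R(x,\zeta):=\mathbf 1_{B_R}(x)e^{(r^*(x)+\zeta)/\beta_2}$, target $r_{\mathrm{target}}:=r^*_{\mathrm{lbl},\beta_2}=r^*+m_{\zeta,\beta_2}$, and residual $\xi_R(x,\zeta):=\zeta-m_{\zeta,\beta_2}$. With these choices $y=r_{\mathrm{target}}(x)+\xi_R(x,\zeta)$ holds exactly. The ridge estimator of Lemma~\ref{lem:abstract-truncated-weighted-ridge} coincides with $\hat a_R^{W_{\mathrm{lbl}}}$, because the indicator $\mathbf 1_{B_R}(x_i)$ is already built into the weight.

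First I identify $Z_W$ and $\nu_W$. Since $\zeta$ is independent of $x$,
\[
  \mathbb E_\zeta[W_R(x,\zeta)]=\mathbf 1_{B_R}(x)e^{r^*(x)/\beta_2}Z_\zeta .
\]
Hence $Z_W=Z_{\beta_2,R}Z_\zeta$ and $\nu_W=\pi_{\beta_2,R}$.

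Next I check the weighted residual orthogonality. Conditional on $x$, the $\zeta$-expectation factorises:
\[
  \mathbb E_\zeta\!\left[e^{\zeta/\beta_2}(\zeta-m_{\zeta,\beta_2})\right]
  =Z_\zeta\,\mathbb E_{\nu_{\zeta,\beta_2}}[\zeta-m_{\zeta,\beta_2}]=0 .
\]
Multiplying by $\mathbf 1_{B_R}(x)e^{r^*(x)/\beta_2}\psi_N(x)$ and integrating over $x$ gives $\mathbb E[W_R\xi_R\psi_N]=0$. This is exactly the computation in Appendix~\ref{app:label-shift-calculation}.

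With these substitutions the moments $M_{2,W,R}$, $M_{4,W,R}$, $G_{4,W,R}$ become literally $M_{2,\mathrm{wt},R}$, $M_{4,\mathrm{wt},R}$, $G_{4,\mathrm{wt},R}$. The condition on $\lambda$ is therefore the stated one.

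Then I take the comparator $a^\sharp$ from Lemma~\ref{lem:second-layer-comparators-from-recovered-features} for the target $r^*_{\mathrm{lbl},\beta_2}$. This gives $\|a^\sharp\|_2^2\lesssim N$, so $\lambda\|a^\sharp\|_2^2\lesssim\lambda N$. For the approximation term, on $B_R$ each weight obeys $W_R(x_i,\zeta_i)\le e^{(M_{*,R}+\tau)/\beta_2}$, since $r^*\le M_{*,R}$ on $B_R$ and $\zeta\le\tau$. Therefore
\[
  \mathcal E_{W,T_2}(a^\sharp)\le e^{(M_{*,R}+\tau)/\beta_2}\|r_{a^\sharp}-r_{\mathrm{target}}\|^2_{L^2(P_{T_2})}
  \lesssim e^{(M_{*,R}+\tau)/\beta_2}(N^{-2}+\varepsilon^2).
\]

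Finally I bound the infimum over $c$ by its value at $c=m_{\zeta,\beta_2}$, which gives
\[
  \inf_c\|\hat r_R-r^*-c\|^2_{L^2(\pi_{\beta_2,R})}\le\|\hat r_R-r_{\mathrm{target}}\|^2_{L^2(\nu_W)} .
\]
Plugging everything into Lemma~\ref{lem:abstract-truncated-weighted-ridge} yields the claim, with probability at least $1-4\delta_0$.

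The main point of care is the comparator step. Lemma~\ref{lem:second-layer-comparators-from-recovered-features} controls the error in unweighted empirical $L^2(P_{T_2})$, so transferring it to the weighted empirical norm needs the pointwise weight bound, which holds only because of the truncation to $B_R$ and the bounded noise. This step is also where the factor $e^{(M_{*,R}+\tau)/\beta_2}$ enters. A further check is needed that the comparator's high-probability event does not interfere with the $1-4\delta_0$ accounting. Lemma~\ref{lem:second-layer-comparators-from-recovered-features} is stated deterministically under the recovery condition, so I treat it as conditioning rather than as an additional failure event.
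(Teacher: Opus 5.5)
Your proposal is correct and follows the paper's proof essentially step for step. Both instantiate the abstract truncated weighted ridge lemma with $W_R=\mathbf 1_{B_R}e^{(r^*+\zeta)/\beta_2}$, target $r^*+m_{\zeta,\beta_2}$ and residual $\zeta-m_{\zeta,\beta_2}$. Both identify $Z_W=Z_{\beta_2,R}Z_\zeta$ and $\nu_W=\pi_{\beta_2,R}$, verify orthogonality via $\mathbb E_\zeta[e^{\zeta/\beta_2}(\zeta-m_{\zeta,\beta_2})]=0$, transfer the comparator bound through the pointwise weight bound $e^{(M_{*,R}+\tau)/\beta_2}$, and absorb the shift $m_{\zeta,\beta_2}$ into the infimum over $c$.
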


\begin{proof}
Apply Lemma~\ref{lem:abstract-truncated-weighted-ridge} with
\[
  W_R(x,\zeta)
  =
  \mathbf 1_{B_R}(x)e^{(r^*(x)+\zeta)/{\beta_{2}}},
  \qquad
  r_{\mathrm{target}}(x)=r_{\mathrm{lbl},{\beta_{2}}}^*(x),
  \qquad
  \xi_R(x,\zeta)=\zeta-m_{\zeta,{\beta_{2}}}.
\]
Then $Z_W=Z_{{\beta_{2}},R}Z_\zeta$ and $\nu_W=\pi_{{\beta_{2}},R}$. The required weighted residual orthogonality follows from
\[
  \mathbb E_{\zeta\sim\mathrm{Unif}[-\tau,\tau]}
  \left[
    e^{\zeta/{\beta_{2}}}(\zeta-m_{\zeta,{\beta_{2}}})
  \right]
  =
  0.
\]
Moreover, on $B_R\times[-\tau,\tau]$,
\[
  W_R(x,\zeta)
  \le
  e^{(M_{*,R}+\tau)/{\beta_{2}}},
\]
so the comparator bound in Lemma~\ref{lem:second-layer-comparators-from-recovered-features} gives
\[
  \mathcal E_{W,T_2}(a^\sharp)
  \lesssim
  e^{(M_{*,R}+\tau)/{\beta_{2}}}(N^{-2}+\varepsilon^2).
\]
The same lemma gives $\|a^\sharp\|_2^2\lesssim N$, which turns the comparator regularization term in Lemma~\ref{lem:abstract-truncated-weighted-ridge} into $\lambda N$.
Since $r_{\mathrm{lbl},{\beta_{2}}}^*=r^*+m_{\zeta,{\beta_{2}}}$, taking the infimum over constants converts the $L^2(\pi_{{\beta_{2}},R})$ error for $r_{\mathrm{lbl},{\beta_{2}}}^*$ into the claimed shift-invariant error for $r^*$.
\end{proof}

\begin{lemma}[Surrogate-weighted truncated ridge estimate]
\label{lem:surrogate-weighted-truncated-ridge}
Under the first-layer recovery condition of Lemma~\ref{lem:second-layer-comparators-from-recovered-features}, if
\[
  \lambda
  \ge
  C\sqrt{\frac{M_{4,\mathrm{wt},R}^{(0)}}{T_2\delta_0}},
\]
then, with probability at least $1-4\delta_0$ over the ridge-fitting sample,
\[
  \inf_{c\in\mathbb R}
  \|\hat r_R^{(0)}-r^*-c\|_{L^2(\pi_{{\beta_{2}},R}^{(0)})}^2
  \le
  \mathcal E_R^{(0)},
\]
where
\[
  \mathcal E_R^{(0)}
  :=
  \frac{1}{Z_{{\beta_{2}},R}^{(0)}}
  \left(
    e^{M_{0,R}/{\beta_{2}}}(N^{-2}+\varepsilon^2)
    +
    \frac{
      M_{2,\mathrm{wt},R}^{(0)}
      +
      M_{4,\mathrm{wt},R}^{(0)}
      +
      G_{4,\mathrm{wt},R}^{(0)}
    }{
      T_2\lambda\delta_0
    }
    +
    \sqrt{\frac{G_{4,\mathrm{wt},R}^{(0)}}{T_2\delta_0}}
    +
    \lambda N
  \right).
\]
\end{lemma}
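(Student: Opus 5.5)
The plan is to apply Lemma~\ref{lem:abstract-truncated-weighted-ridge}, mirroring the proof of Lemma~\ref{lem:label-weighted-truncated-ridge} but with a deterministic weight and an unshifted target. Concretely, take
\[
  W_R(x,\zeta)=\mathbf 1_{B_R}(x)e^{r_{a_0}(x)/{\beta_{2}}},
  \qquad
  r_{\mathrm{target}}=r^*,
  \qquad
  \xi_R(x,\zeta)=\zeta .
\]
Since the weight does not depend on $\zeta$, the normalizer is $Z_W=Z_{{\beta_{2}},R}^{(0)}$, and the weighted measure $\nu_W$ is exactly the surrogate tilt $\pi_{{\beta_{2}},R}^{(0)}=\nu_R^{\mathrm{surr}}$. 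The unnormalized moments $M_{2,W,R}$, $M_{4,W,R}$ and $G_{4,W,R}$ then coincide with $M_{2,\mathrm{wt},R}^{(0)}$, $M_{4,\mathrm{wt},R}^{(0)}$ and $G_{4,\mathrm{wt},R}^{(0)}$, because $W_R^2=\mathbf 1_{B_R}e^{2r_{a_0}/{\beta_{2}}}$ and $\xi_R^2=\zeta^2$. Consequently the hypothesis on $\lambda$ is exactly the one required by the abstract lemma.

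The first step is to verify weighted residual orthogonality. Because $\zeta$ is independent of $x$ and $\mathbb E[\zeta]=0$, the weight depends only on $x$, which gives
\[
  \mathbb E\bigl[W_R\,\zeta\,\psi_N(x)\bigr]
  =
  \mathbb E_x\bigl[\mathbf 1_{B_R}e^{r_{a_0}/{\beta_{2}}}\psi_N(x)\bigr]\,\mathbb E[\zeta]
  =0 .
\]
This is simpler than in the label case, since no shift $m_{\zeta,\beta_2}$ appears. The point needing care is that $a_0$ must be independent of the ridge-fitting sample, so that $W_R$ is a fixed measurable function. This holds because $a_0$ is frozen from the recovery phase and the $T_2$ samples are fresh.

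The second step is the comparator. Take $a^\sharp$ from Lemma~\ref{lem:second-layer-comparators-from-recovered-features} with target $r^*$. On $B_R$ the weight is bounded by $e^{M_{0,R}/{\beta_{2}}}$ by the definition of $M_{0,R}$, so
\[
  \mathcal E_{W,T_2}(a^\sharp)\le e^{M_{0,R}/{\beta_{2}}}\|r_{a^\sharp}-r^*\|^2_{L^2(P_{T_2})}\lesssim e^{M_{0,R}/{\beta_{2}}}(N^{-2}+\varepsilon^2),
\]
and $\lambda\|a^\sharp\|_2^2\lesssim\lambda N$.

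Plugging these into Lemma~\ref{lem:abstract-truncated-weighted-ridge} bounds $\|\hat r_R^{(0)}-r^*\|^2_{L^2(\pi^{(0)}_{{\beta_{2}},R})}$ by $\mathcal E_R^{(0)}$ up to a universal constant. Bounding the infimum over $c$ by the choice $c=0$ then gives the claim, with the universal constant absorbed into $\lesssim$ (or into the definition of $\mathcal E_R^{(0)}$).

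I expect the main subtlety to be the probability event. Lemma~\ref{lem:second-layer-comparators-from-recovered-features} is stated on the empirical measure $P_{T_2}$ and conditionally on the recovery event. I would handle this by conditioning on the first-layer features, the biases and $a_0$, which are all independent of the ridge sample, and applying the abstract lemma on that conditional probability space.
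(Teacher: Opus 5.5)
Your proposal is correct and follows essentially the same route as the paper: it applies Lemma~\ref{lem:abstract-truncated-weighted-ridge} with $W_R=\mathbf 1_{B_R}e^{r_{a_0}/\beta_2}$, $r_{\mathrm{target}}=r^*$, $\xi_R=\zeta$. It obtains orthogonality from $\mathbb E[\zeta]=0$ and controls the comparator terms via $W_R\le e^{M_{0,R}/\beta_2}$ on $B_R$ and $\|a^\sharp\|_2^2\lesssim N$. Your added remarks, that $a_0$ must be independent of the ridge sample and that the universal constant from the abstract lemma is absorbed into the stated ``$\le$'', make explicit points the paper leaves implicit.
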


\begin{proof}
Apply Lemma~\ref{lem:abstract-truncated-weighted-ridge} with
\[
  W_R(x,\zeta)
  =
  \mathbf 1_{B_R}(x)e^{r_{a_0}(x)/{\beta_{2}}},
  \qquad
  r_{\mathrm{target}}(x)=r^*(x),
  \qquad
  \xi_R(x,\zeta)=\zeta.
\]
Then $Z_W=Z_{{\beta_{2}},R}^{(0)}$ and $\nu_W=\pi_{{\beta_{2}},R}^{(0)}$. The required weighted residual orthogonality follows from
\[
  \mathbb E_{\zeta\sim\mathrm{Unif}[-\tau,\tau]}[\zeta]=0,
\]
because $W_R$ is independent of $\zeta$. On $B_R$,
\[
  W_R(x,\zeta)
  \le
  e^{M_{0,R}/{\beta_{2}}},
\]
so the comparator bound gives
\[
  \mathcal E_{W,T_2}(a^\sharp)
  \lesssim
  e^{M_{0,R}/{\beta_{2}}}(N^{-2}+\varepsilon^2).
\]
The same lemma gives $\|a^\sharp\|_2^2\lesssim N$, which turns the comparator regularization term in Lemma~\ref{lem:abstract-truncated-weighted-ridge} into $\lambda N$.
The claim follows from Lemma~\ref{lem:abstract-truncated-weighted-ridge}.
\end{proof}

\begin{proof}[Derivation of the surrogate-weighted learning bound]
Let
\[
  \mathcal C_{0,R}({\beta_{2}})
  =
  \frac{e^{M_{0,R}/{\beta_{2}}}}{Z_{{\beta_{2}},R}^{(0)}}.
\]

On the projected truncation set $B_R$, the functions $\|\psi_N(x)\|_2^2$,
$\|\psi_N(x)\|_2^4$, and $|r^*(x)|^4$ are bounded:
\(\psi_N\) depends only on the coordinates in \(S\), and \(B_R\) bounds
\(\|P_Sx\|_2\). Since
$r_{a_0}(x)\le M_{0,R}$ on $B_R$,
\[
  \frac{M_{2,\mathrm{wt},R}^{(0)}}{Z_{{\beta_{2}},R}^{(0)}}
  \lesssim
  e^{M_{0,R}/{\beta_{2}}},
  \qquad
  \frac{M_{4,\mathrm{wt},R}^{(0)}+G_{4,\mathrm{wt},R}^{(0)}}{Z_{{\beta_{2}},R}^{(0)}}
  \lesssim
  e^{M_{0,R}/{\beta_{2}}}.
\]
Also,
\[
  M_{4,\mathrm{wt},R}^{(0)}
  \lesssim
  e^{M_{0,R}/{\beta_{2}}}Z_{{\beta_{2}},R}^{(0)}
  =
  \frac{e^{2M_{0,R}/{\beta_{2}}}}{\mathcal C_{0,R}({\beta_{2}})}.
\]
We choose the regularization parameter at the scale
\begin{equation}
\label{eq:surrogate-weighted-lambda-choice}
  \lambda
  =
  C_{\lambda,\mathrm{surr},R,N}
    e^{M_{0,R}/{\beta_{2}}}
    \left(\mathcal C_{0,R}({\beta_{2}})T_2\delta_0\right)^{-1/2}.
\end{equation}
where \(C_{\lambda,\mathrm{surr},R,N}\) is sufficiently large and independent of
\({\beta_{2}},T_2,\delta_0\). The preceding bound on
\(M_{4,\mathrm{wt},R}^{(0)}\) shows that this choice satisfies the lower bound
required by Lemma~\ref{lem:surrogate-weighted-truncated-ridge}.

Start from Lemma~\ref{lem:surrogate-weighted-truncated-ridge}. The approximation
term is
\[
  \frac{e^{M_{0,R}/{\beta_{2}}}}{Z_{{\beta_{2}},R}^{(0)}}
  (N^{-2}+\varepsilon^2)
  =
  \mathcal C_{0,R}({\beta_{2}})(N^{-2}+\varepsilon^2).
\]
The weighted moment term satisfies
\[
  \frac{
    M_{2,\mathrm{wt},R}^{(0)}
    +
    M_{4,\mathrm{wt},R}^{(0)}
    +
    G_{4,\mathrm{wt},R}^{(0)}
  }{
    Z_{{\beta_{2}},R}^{(0)}T_2\lambda\delta_0
  }
  \lesssim
  \frac{\mathcal C_{0,R}({\beta_{2}})^{1/2}}{\sqrt{T_2\delta_0}}.
\]
For the square-root term, the bound on $G_{4,\mathrm{wt},R}^{(0)}$ gives
\[
  \frac{1}{Z_{{\beta_{2}},R}^{(0)}}
  \sqrt{\frac{G_{4,\mathrm{wt},R}^{(0)}}{T_2\delta_0}}
  \lesssim
  \frac{\mathcal C_{0,R}({\beta_{2}})^{1/2}}{\sqrt{T_2\delta_0}}.
\]
The regularization term is controlled by
\[
  \frac{\lambda N}{Z_{{\beta_{2}},R}^{(0)}}
  \lesssim
  \frac{\mathcal C_{0,R}({\beta_{2}})^{1/2}}{\sqrt{T_2\delta_0}},
\]
where the implicit constant may depend on $N$. Combining these estimates gives
the displayed reward prediction error bound.

The learning-term bound follows from Lemma~\ref{lem:learning-bridge}:
\[
  |T_{\mathrm{learn}}^{(0)}(R)|
  \le
  \frac{2M_R\mathcal D_{\mathrm{surr},R}}{{\beta_{2}}}
  \inf_{c\in\mathbb R}
  \|\hat r_R^{(0)}-r^*-c\|_{L^2(\pi_{{\beta_{2}},R}^{(0)})}.
\]
Taking square roots and using $\sqrt{a+b}\le\sqrt a+\sqrt b$ gives the
surrogate-weighted learning contribution in Theorem~\ref{thm:final-surrogate-weighted-regret}. 
Corollary~\ref{cor:surrogate-weighted-projected-truncation} follows by combining
this bound with Lemma~\ref{lem:surrogate-near-maximizer-mass} below.
\end{proof}

\subsection{Polynomial control of $\mathcal{C}_{0, R}(\beta_2)$}
\label{app:surrogate-partition}

Let \(d_S:=\dim S\), and let \(\gamma_S\) denote the standard Gaussian measure
on \(S\). Write
\[
  K_R := \{z\in S:\|z\|_2\le R\}.
\]
Since the frozen surrogate \(r_{a_0}\) depends only on \(P_Sx\), define
\(f_0:K_R\to\mathbb R\) by
\[
  f_0(z) := r_{a_0}(x)
  \qquad \text{for any } x \text{ such that } P_Sx=z .
\]
Then
\[
  Z_{{\beta_{2}},R}^{(0)}
  =
  \int_{K_R} e^{f_0(z)/{\beta_{2}}}\,\gamma_S(dz),
  \qquad
  M_{0,R}=\sup_{z\in K_R} f_0(z),
\]
and
\[
  \mathcal C_{0,R}({\beta_{2}})
  =
  \left(
    \int_{K_R}
    e^{(f_0(z)-M_{0,R})/{\beta_{2}}}\,\gamma_S(dz)
  \right)^{-1}.
\]

\begin{lemma}[Surrogate near-maximizer mass on the projected ball]
\label{lem:surrogate-near-maximizer-mass}
There exists a finite constant \(\bar C_{0,R}\), independent of
\({\beta_{2}}\), such that for all sufficiently small \({\beta_{2}}>0\),
\[
  \mathcal C_{0,R}({\beta_{2}})
  \le
  \bar C_{0,R}{\beta_{2}}^{-d_S}.
\]
\end{lemma}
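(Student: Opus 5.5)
The plan is to lower bound the integral
\[
  J(\beta_2):=\int_{K_R} e^{(f_0(z)-M_{0,R})/\beta_2}\,\gamma_S(dz)
\]
by the $\gamma_S$-mass of a small ball around a maximizer of $f_0$ on $K_R$, using only Lipschitz continuity of $f_0$. Since $\mathcal C_{0,R}(\beta_2)=J(\beta_2)^{-1}$, a lower bound $J(\beta_2)\gtrsim\beta_2^{d_S}$ gives the claim.

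\textbf{Attainment of the maximum.} $K_R$ is a compact $d_S$-dimensional ball in $S$. The function $f_0(z)=\frac1N\sum_j(a_0)_j\sigma(\langle w_j,z\rangle+b_j)$ is a polynomial in $z$, because $w_j\in S$ gives $\langle w_j,x\rangle=\langle w_j,P_Sx\rangle$. It is therefore continuous, so $M_{0,R}$ is attained at some $z_0\in K_R$.

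\textbf{Lipschitz bound.} On the slightly enlarged ball $K_{R+1}$, $f_0$ has bounded gradient. Let
\[
  L:=\sup_{K_{R+1}}\|\nabla f_0\|_2<\infty .
\]
This constant depends on $N$, $a_0$, the biases, $R$ and $\sigma$, but not on $\beta_2$, which is all the statement requires. For $z\in K_R$ with $\|z-z_0\|\le\beta_2$ we then have $f_0(z)\ge M_{0,R}-L\beta_2$, so the integrand in $J(\beta_2)$ is at least $e^{-L}$ on $K_R\cap\{\|z-z_0\|\le\beta_2\}$.

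\textbf{Volume and density of the neighborhood.} The main point is that $z_0$ may lie on the boundary sphere of $K_R$, so we must check that $K_R\cap B(z_0,\beta_2)$ still has volume $\gtrsim\beta_2^{d_S}$. For a ball, this follows from a cone/convexity argument. Let $z_0'$ be the point on the segment from $z_0$ toward the origin at distance $\beta_2/2$ from $z_0$; this makes sense for $\beta_2\le R$. Then $B(z_0',\beta_2/2)\subset K_R\cap B(z_0,\beta_2)$: by the triangle inequality its points are within $\beta_2$ of $z_0$, and they lie in $K_R$ because $\|z_0'\|\le R-\beta_2/2$. Hence the volume is at least $\omega_{d_S}(\beta_2/2)^{d_S}$. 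The Gaussian density on $K_R$ is bounded below by $(2\pi)^{-d_S/2}e^{-R^2/2}$.

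\textbf{Conclusion.} Combining the three steps,
\[
  J(\beta_2)\ \ge\ e^{-L}(2\pi)^{-d_S/2}e^{-R^2/2}\,\omega_{d_S}2^{-d_S}\,\beta_2^{d_S}
\]
for all $\beta_2\le\min\{R,1\}$. Inverting gives $\mathcal C_{0,R}(\beta_2)\le\bar C_{0,R}\beta_2^{-d_S}$ with $\bar C_{0,R}$ independent of $\beta_2$.

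The $\beta_2^{-d_S}$ rate is crude. Using higher-order local behavior of $f_0$ at $z_0$ would improve it: a vanishing gradient at an interior maximizer gives the quadratic decay $f_0\ge M_{0,R}-C\|z-z_0\|^2$, which yields $\beta_2^{-d_S/2}$. This sharper form is what Remark~\ref{rem:sharper_exponents} refers to and is not needed here.
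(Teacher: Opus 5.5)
Your proposal is correct and follows essentially the same route as the paper. Both arguments lower bound the normalizer by the Gaussian mass of a $\beta_2$-scale neighborhood of a maximizer of the polynomial $f_0$ in $K_R$, using only Lipschitz continuity; the paper takes radius $\beta_2/L_+$ to get the factor $e^{-1}$, while you take radius $\beta_2$ to get $e^{-L}$, which makes no difference.

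Your inscribed-ball construction $B(z_0',\beta_2/2)\subset K_R\cap B(z_0,\beta_2)$ also makes explicit the boundary-volume estimate $\operatorname{Vol}_{d_S}(K_R\cap B(z_\star,\rho))\gtrsim\rho^{d_S}$, which the paper only asserts. One minor fix: when $\|z_0\|<\beta_2/2$ the segment to the origin is too short, but you can simply take $z_0'=z_0$.
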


\begin{proof}
Since \(r_{a_0}\) is a finite linear combination of polynomial ridge features,
\(f_0\) is a polynomial function on \(S\). Hence \(f_0\) is Lipschitz on the
compact set \(K_R\). Let \(L_{0,R}\) be a Lipschitz constant and set
\(L_+:=\max\{L_{0,R},1\}\).

Let \(z_\star\in K_R\) be a maximizer of \(f_0\). For
\[
  A_{{\beta_{2}}}
  :=
  K_R\cap B\left(z_\star,\frac{{\beta_{2}}}{L_+}\right),
\]
the Lipschitz property gives
\[
  f_0(z)\ge M_{0,R}-{\beta_{2}}
  \qquad \text{for all } z\in A_{{\beta_{2}}}.
\]
Therefore
\[
  Z_{{\beta_{2}},R}^{(0)}
  \ge
  e^{M_{0,R}/{\beta_{2}}-1}\gamma_S(A_{{\beta_{2}}}).
\]

The Gaussian density on \(S\) is bounded below by a positive constant on
\(K_R\). Moreover, since \(K_R\) is a Euclidean ball in \(S\), there exists
\(c_{R,S}>0\) such that
\[
  \operatorname{Vol}_{d_S}
  \left(
    K_R\cap B(z_\star,\rho)
  \right)
  \ge
  c_{R,S}\rho^{d_S}
\]
for all sufficiently small \(\rho>0\), uniformly over \(z_\star\in K_R\).
Applying this with \(\rho={\beta_{2}}/L_+\) yields
\[
  \gamma_S(A_{{\beta_{2}}})
  \ge
  c'_{0,R}{\beta_{2}}^{d_S}
\]
for all sufficiently small \({\beta_{2}}>0\). Hence
\[
  Z_{{\beta_{2}},R}^{(0)}
  \ge
  c''_{0,R}e^{M_{0,R}/{\beta_{2}}}{\beta_{2}}^{d_S}.
\]
Rearranging proves
\[
  \mathcal C_{0,R}({\beta_{2}})
  \le
  \bar C_{0,R}{\beta_{2}}^{-d_S}.
\]
\end{proof}

\begin{remark}[Sharper exponents from local Taylor expansions] \label{rem:sharper_exponents}
The exponent \(d_S\) in Lemma~\ref{lem:surrogate-near-maximizer-mass} is a
worst-case exponent obtained from Lipschitz continuity alone. If the local
Taylor expansion of \(f_0\) around its maximizers is analyzed more precisely,
the exponent can often be improved. For example, a nondegenerate interior
maximizer gives
\[
  \mathcal C_{0,R}({\beta_{2}})
  \lesssim
  {\beta_{2}}^{-d_S/2}.
\]
More generally, the exponent is determined by the leading nonzero negative
terms in the local Taylor expansion near the maximizers, analogously to the
low-temperature analysis for the true reward in
Appendix~\ref{app:one-dimensional-tilted-limit}. These sharper estimates are
not needed for the main value-gap bound.
\end{remark}

\section{Low-temperature simplification for label weighting}
\label{app:label-weighted-low-temp}

This appendix derives the low-temperature moment ratios used to simplify the
label-weighted reward prediction error.
Write
\[
  \alpha:=\frac{1}{2p_{\max}},
  \qquad
  L_*:=B_*+\tau.
\]
Here $s>0$ is a generic temperature parameter; below we apply these estimates
with $s={\beta_{2}}$ and $s={\beta_{2}}/2$. Define
\[
  \pi_{s,R}(dx)
  :=
  \frac{\mathbf 1_{B_R}(x)e^{r^*(x)/s}}{Z_{s,R}}
  \pi_{\mathrm{ref}}(dx),
  \qquad
  Z_{s,R}
  :=
  \mathbb E_{x\sim\pi_{\mathrm{ref}}}
  \left[
    \mathbf 1_{B_R}(x)e^{r^*(x)/s}
  \right].
\]
Also define the exponentially weighted uniform-noise normalization at temperature $s$ by
\[
  Z_\zeta(s)
  :=
  \mathbb E_{\zeta\sim\mathrm{Unif}[-\tau,\tau]}
  \left[
    e^{\zeta/s}
  \right],
\]
and the corresponding probability measure by
\[
  \nu_{\zeta,s}(d\zeta)
  :=
  \frac{e^{\zeta/s}}{Z_\zeta(s)}
  \mathrm{Unif}[-\tau,\tau](d\zeta),
  \qquad
  m_{\zeta,s}
  :=
  \mathbb E_{\zeta\sim\nu_{\zeta,s}}[\zeta].
\]
Thus, specializing the generic temperature $s$ to the deployment temperature ${\beta_{2}}$, we write $Z_\zeta=Z_\zeta({\beta_{2}})$ and $m_{\zeta,{\beta_{2}}}=m_{\zeta,s}|_{s={\beta_{2}}}$.

\begin{lemma}[Exact formulas for the exponentially weighted uniform noise]
\label{lem:noise-exact-formulas}
For every $s>0$,
\[
  Z_\zeta(s)
  =
  \frac{s}{\tau}\sinh\left(\frac{\tau}{s}\right),
  \qquad
  m_{\zeta,s}
  =
  \tau\operatorname{coth}\left(\frac{\tau}{s}\right)-s.
\]
Moreover,
\[
  \operatorname{Var}_{\zeta\sim\nu_{\zeta,s}}[\zeta]
  =
  s^2-\tau^2\operatorname{csch}^2\left(\frac{\tau}{s}\right).
\]
At the two temperatures ${\beta_{2}}$ and ${\beta_{2}}/2$,
\[
  \frac{Z_\zeta({\beta_{2}}/2)}{Z_\zeta({\beta_{2}})}
  =
  \cosh\left(\frac{\tau}{{\beta_{2}}}\right),
\]
and
\[
  V_{\zeta,{\beta_{2}}}^{(2)}
  :=
  \mathbb E_{\zeta\sim\nu_{\zeta,{\beta_{2}}/2}}
  \left[
    (\zeta-m_{\zeta,{\beta_{2}}})^2
  \right]
  =
  \frac{{\beta_{2}}^2}{2}
  -
  {\beta_{2}}\tau\operatorname{csch}\left(\frac{2\tau}{{\beta_{2}}}\right).
\]
Consequently, as ${\beta_{2}}\downarrow0$,
\[
  Z_\zeta({\beta_{2}})
  =
  \frac{{\beta_{2}}}{2\tau}e^{\tau/{\beta_{2}}}
  \left(1+o(1)\right),
  \qquad
  m_{\zeta,{\beta_{2}}}
  =
  \tau-{\beta_{2}}+O(e^{-2\tau/{\beta_{2}}}),
\]
\[
  \frac{Z_\zeta({\beta_{2}}/2)}{Z_\zeta({\beta_{2}})}
  =
  \frac12 e^{\tau/{\beta_{2}}}\left(1+o(1)\right),
  \qquad
  V_{\zeta,{\beta_{2}}}^{(2)}
  =
  \frac{{\beta_{2}}^2}{2}+O\left({\beta_{2}}\tau e^{-2\tau/{\beta_{2}}}\right).
\]
\end{lemma}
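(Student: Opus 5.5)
Every claim in the statement is an explicit one-dimensional integral against the uniform density on $[-\tau,\tau]$, so I will compute them in closed form and then read off the asymptotics. Write $\lambda:=1/s$.

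\textbf{Normalizer and mean.} First,
\[
Z_\zeta(s)=\frac{1}{2\tau}\int_{-\tau}^{\tau}e^{\zeta/s}\,d\zeta=\frac{s}{2\tau}\bigl(e^{\tau/s}-e^{-\tau/s}\bigr)=\frac{s}{\tau}\sinh(\tau/s).
\]
Since $\nu_{\zeta,s}$ is an exponential family in the natural parameter $\lambda$, the mean and variance are the first two derivatives of $\log Z_\zeta$ in $\lambda$. Writing $\log Z_\zeta=\log\sinh(\lambda\tau)-\log\lambda-\log\tau$, we get
\[
m_{\zeta,s}=\tau\coth(\lambda\tau)-\lambda^{-1}=\tau\coth(\tau/s)-s .
\]
\textbf{Variance.} Differentiating once more,
\[
\operatorname{Var}=\frac{d}{d\lambda}\bigl(\tau\coth(\lambda\tau)-\lambda^{-1}\bigr)=\lambda^{-2}-\tau^2\operatorname{csch}^2(\lambda\tau),
\]
which is the stated formula. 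Differentiation under the integral is trivial because the support is compact.

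\textbf{Ratio of normalizers.} Using $\sinh(2a)=2\sinh a\cosh a$,
\[
\frac{Z_\zeta(\beta_2/2)}{Z_\zeta(\beta_2)}=\frac{(\beta_2/2)\sinh(2\tau/\beta_2)}{\beta_2\sinh(\tau/\beta_2)}=\cosh(\tau/\beta_2).
\]

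\textbf{Second moment about the mismatched mean.} This is the only step that needs care. Decompose
\[
V^{(2)}_{\zeta,\beta_2}=\operatorname{Var}_{\nu_{\zeta,\beta_2/2}}[\zeta]+\bigl(m_{\zeta,\beta_2/2}-m_{\zeta,\beta_2}\bigr)^2 .
\]
Set $a=\tau/\beta_2$. The first term is $\beta_2^2/4-\tau^2\operatorname{csch}^2(2a)$. For the mean difference,
\[
m_{\zeta,\beta_2/2}-m_{\zeta,\beta_2}=\tau\bigl(\coth 2a-\coth a\bigr)+\frac{\beta_2}{2}=\frac{\beta_2}{2}-\tau\operatorname{csch}(2a),
\]
using the identity $\coth a-\coth 2a=\operatorname{csch}2a$. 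Squaring gives $\beta_2^2/4-\beta_2\tau\operatorname{csch}(2a)+\tau^2\operatorname{csch}^2(2a)$. Adding the variance, the $\operatorname{csch}^2$ terms cancel, leaving
\[
V^{(2)}_{\zeta,\beta_2}=\frac{\beta_2^2}{2}-\beta_2\tau\operatorname{csch}(2\tau/\beta_2).
\]
The main risk is an algebra slip in this hyperbolic bookkeeping, so I would verify the identity $\coth a-\coth 2a=\operatorname{csch}2a$ directly before relying on it.

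\textbf{Asymptotics as $\beta_2\downarrow0$.} These follow from $\sinh a=\tfrac12e^{a}(1-e^{-2a})$, $\coth a=1+O(e^{-2a})$, $\cosh a=\tfrac12e^{a}(1+e^{-2a})$ and $\operatorname{csch}(2a)=O(e^{-2a})$ with $a=\tau/\beta_2\to\infty$:
\begin{itemize}
\item $Z_\zeta(\beta_2)=\frac{\beta_2}{2\tau}e^{\tau/\beta_2}(1+o(1))$;
\item $m_{\zeta,\beta_2}=\tau-\beta_2+O(e^{-2\tau/\beta_2})$;
\item $\cosh(\tau/\beta_2)=\tfrac12e^{\tau/\beta_2}(1+o(1))$;
\item $\beta_2\tau\operatorname{csch}(2\tau/\beta_2)=O(\beta_2\tau e^{-2\tau/\beta_2})$.
\end{itemize}
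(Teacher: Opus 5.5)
Your proposal is correct and follows the paper's proof essentially step for step. Both compute $Z_\zeta(s)$ by direct integration, get the mean and variance by differentiating the log-normalizer in the natural parameter $1/s$, use $\sinh(2a)=2\sinh a\cosh a$ for the ratio, and obtain $V^{(2)}_{\zeta,\beta_2}$ from the variance-plus-squared-mean-shift decomposition, followed by the same exponential asymptotics. The identity you flagged does hold, since $\coth a-\coth 2a=(2\cosh^2 a-\cosh 2a)/\sinh 2a=1/\sinh 2a=\operatorname{csch}(2a)$, so the $\operatorname{csch}^2$ terms cancel as you claim.
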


\begin{proof}
The identity for $Z_\zeta(s)$ follows directly from
\[
  \mathbb E_{\zeta\sim\mathrm{Unif}[-\tau,\tau]}
  \left[
    e^{\zeta/s}
  \right]
  =
  \frac{1}{2\tau}\int_{-\tau}^{\tau}e^{z/s}\,dz.
\]
Let
\[
  \Lambda_\zeta(\eta)
  :=
  \log
  \mathbb E_{\zeta\sim\mathrm{Unif}[-\tau,\tau]}
  \left[
    e^{\eta\zeta}
  \right].
\]
Then $\nu_{\zeta,s}$ is the exponentially weighted uniform law with
$\eta=1/s$. Differentiating this log-normalization gives
\[
  \Lambda_\zeta'(\eta)
  =
  \mathbb E_{\zeta\sim\nu_{\zeta,1/\eta}}[\zeta],
  \qquad
  \Lambda_\zeta''(\eta)
  =
  \operatorname{Var}_{\zeta\sim\nu_{\zeta,1/\eta}}[\zeta].
\]
Differentiating
$\Lambda_\zeta(\eta)=\log(\sinh(\tau\eta)/(\tau\eta))$ and setting
$\eta=1/s$ gives the formulas for $m_{\zeta,s}$ and
$\operatorname{Var}_{\zeta\sim\nu_{\zeta,s}}[\zeta]$. The ratio
$Z_\zeta({\beta_{2}}/2)/Z_\zeta({\beta_{2}})=\cosh(\tau/{\beta_{2}})$ follows from
$\sinh(2a)=2\sinh(a)\cosh(a)$. Finally,
\[
  V_{\zeta,{\beta_{2}}}^{(2)}
  =
  \operatorname{Var}_{\zeta\sim\nu_{\zeta,{\beta_{2}}/2}}[\zeta]
  +
  \left(m_{\zeta,{\beta_{2}}/2}-m_{\zeta,{\beta_{2}}}\right)^2,
\]
where
\[
  m_{\zeta,{\beta_{2}}/2}-m_{\zeta,{\beta_{2}}}
  =
  \frac{{\beta_{2}}}{2}
  -
  \tau\operatorname{csch}\left(\frac{2\tau}{{\beta_{2}}}\right).
\]
Substituting this identity and
$\operatorname{Var}_{\zeta\sim\nu_{\zeta,{\beta_{2}}/2}}[\zeta]
={\beta_{2}}^2/4-\tau^2\operatorname{csch}^2(2\tau/{\beta_{2}})$ gives the displayed
formula for $V_{\zeta,{\beta_{2}}}^{(2)}$. The asymptotics follow from
$\sinh a\sim e^a/2$, $\cosh a\sim e^a/2$, $\operatorname{coth} a=1+O(e^{-2a})$, and
$\operatorname{csch} a=O(e^{-a})$ as $a\to\infty$.
\end{proof}

The next lemma gives the truncated Laplace estimate needed for feature
moments.
Let
\[
  S_0:=S\cap(\theta^*)^\perp .
\]
Let \(\gamma_{S_0}\) denote the standard Gaussian measure on \(S_0\). For
bounded continuous \(F:B_R\to\mathbb R\) that depend only
on \(P_Sx\), define
\[
\begin{aligned}
  \mathcal A_R[F]
  :=
  \sum_{i\in I_{\max}}
  &\phi(u_i)
  \left(
    \int_{\mathbb R}e^{-c_i t^{2p_{\max}}}\,dt
  \right)
  \\
  &\times
  \int_{S_0}
  \mathbf 1_{\{u_i^2+\|z\|_2^2\le R^2\}}
  F(u_i\theta^*+z)\,d\gamma_{S_0}(z).
\end{aligned}
\]
Set
\[
  A_R:=\mathcal A_R[1].
\]
Since \(R\) is taken sufficiently large, \(R>\max_{i\in I_{\max}}|u_i|\), so
\(A_R>0\). Define
\[
  \mu_{0,R}[F]:=\frac{\mathcal A_R[F]}{A_R}.
\]

\begin{lemma}[Truncated feature Laplace asymptotics]
\label{lem:truncated-feature-laplace}
For every bounded continuous function $F:B_R\to\mathbb R$ that depends only on the projected coordinate $P_Sx$,
\[
  \mathbb E_{x\sim\pi_{\mathrm{ref}}}
  \left[
    \mathbf 1_{B_R}(x)e^{r^*(x)/s}F(x)
  \right]
  =
  e^{B_*/s}s^\alpha
  \left(
    \mathcal A_R[F]+o(1)
  \right)
  \qquad(s\downarrow0).
\]
In particular,
\[
  Z_{s,R}
  =
  e^{B_*/s}s^\alpha(A_R+o(1)),
  \qquad
  \mathbb E_{x\sim\pi_{s,R}}[F(x)]
  \to
  \mu_{0,R}[F].
\]
Moreover, if $F_s:B_R\to\mathbb R$ are bounded continuous functions that
depend only on \(P_Sx\) and $\|F_s-F_0\|_{L^\infty(B_R)}\to0$, then
\[
  \mathbb E_{x\sim\pi_{s,R}}[F_s(x)]
  \to
  \mu_{0,R}[F_0].
\]
\end{lemma}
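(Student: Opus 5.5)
The plan is to reduce to the one-dimensional localization of Lemma~\ref{lem:tilted-marginal-weak-limit} by exploiting the product structure of $\pi_{\mathrm{ref}}$ along $\theta^*$ and $S_0$. Write $P_Sx=U\theta^*+Z$ with $U=\langle\theta^*,x\rangle\sim\mathcal N(0,1)$ and $Z\sim\gamma_{S_0}$ independent. Since $r^*(x)=\sigma^*(U)$ and $F$ depends only on $P_Sx$, Fubini gives
\[
  \mathbb E_{x\sim\pi_{\mathrm{ref}}}\bigl[\mathbf 1_{B_R}(x)e^{r^*(x)/s}F(x)\bigr]
  =\int_{\mathbb R}e^{\sigma^*(u)/s}\phi(u)\,H(u)\,du,
\]
where
\[
  H(u):=\int_{S_0}\mathbf 1_{\{u^2+\|z\|_2^2\le R^2\}}F(u\theta^*+z)\,d\gamma_{S_0}(z).
\]
The function $H$ is bounded by $\|F\|_\infty$. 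Comparing with the definition of $\mathcal A_R[F]$, the claim is exactly $N_s[H]=e^{B_*/s}s^\alpha(\sum_{i\in I_{\max}}A_iH(u_i)/\int\phi\cdot 0+\dots)$, that is, the numerator asymptotic in the proof of Lemma~\ref{lem:tilted-marginal-weak-limit} with $f=H$. So the whole proof amounts to rerunning that localization with $f=H$.

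The one point needing care is that the proof of Lemma~\ref{lem:tilted-marginal-weak-limit} used continuity of $f$ at each $u_i$ for the pointwise limit of the rescaled integrand. Here $H$ may fail to be continuous because of the indicator. I will show $H$ is continuous at each $u_i$ with $i\in I_{\max}$ (and in fact everywhere), and this is the main obstacle.

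Take $u_n\to u$. The integrand converges pointwise to $\mathbf 1_{\{u^2+\|z\|^2\le R^2\}}F(u\theta^*+z)$, except possibly on the sphere $\{\|z\|_2^2=R^2-u^2\}$. That sphere is $\gamma_{S_0}$-null when $\dim S_0\ge1$; when $S_0=\{0\}$ it is nonempty only if $|u|=R$. Since $R>\max_{i\in I_{\max}}|u_i|$, continuity holds at every $u_i$ with $i\in I_{\max}$. (Continuity of $F$ on the interior of the relevant slice is also used here.) Dominated convergence with bound $\|F\|_\infty$ then gives continuity of $H$ at these points. This is all the localization argument needs: after the change of variables $u=u_i+s^{\alpha_i}t$, the integrand converges to $e^{-c_it^{2p_i}}\phi(u_i)H(u_i)$ and is dominated by $\|F\|_\infty\sup\phi\, e^{-(c_i/2)t^{2p_i}}$. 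Non-dominant maximizers contribute $o(s^\alpha)$, and the complement of the neighborhoods contributes $O(e^{(B_*-\eta_\rho)/s})$. Summing over $i\in I_{\max}$ yields $e^{B_*/s}s^\alpha(\mathcal A_R[F]+o(1))$.

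The remaining claims follow quickly. Taking $F\equiv1$ gives the $Z_{s,R}$ asymptotic, and dividing by it, using $A_R>0$, gives $\mathbb E_{\pi_{s,R}}[F]\to\mu_{0,R}[F]$. For the uniform-approximation statement, write
\[
  \bigl|\mathbb E_{\pi_{s,R}}[F_s]-\mathbb E_{\pi_{s,R}}[F_0]\bigr|\le\|F_s-F_0\|_{L^\infty(B_R)}\to0,
\]
since $\pi_{s,R}$ is a probability measure on $B_R$, and then apply the previous convergence to $F_0$.
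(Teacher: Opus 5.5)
Your proposal is correct and is essentially the paper's argument. You decompose $P_Sx=u\theta^*+z$ into independent Gaussian coordinates, rescale $u=u_i+s^\alpha t$ around each dominant maximizer, and use that the truncation boundary $\{\|z\|_2^2=R^2-u_i^2\}$ is $\gamma_{S_0}$-null because $R>|u_i|$, with dominated convergence. You then show the off-neighborhood and non-dominant contributions are $o(e^{B_*/s}s^\alpha)$, normalize with $F\equiv 1$, and handle $F_s$ by the sup-norm bound.

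The only difference is organizational: you integrate out $z$ first and check continuity of the slice function $H$ at the dominant $u_i$ before running the one-dimensional localization, whereas the paper applies dominated convergence jointly in $(t,z)$. Also, your displayed formula for $N_s[H]$ contains a garbled term; it should read $e^{B_*/s}s^\alpha\bigl(\sum_{i\in I_{\max}}A_iH(u_i)+o(1)\bigr)$, which equals $e^{B_*/s}s^\alpha(\mathcal A_R[F]+o(1))$.
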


\begin{proof}
Let
\[
  S_0:=S\cap(\theta^*)^\perp .
\]
Write the projected coordinates as
\[
  P_Sx=u\theta^*+z,
  \qquad
  u:=\langle\theta^*,x\rangle,
  \qquad
  z\in S_0 .
\]
Under \(x\sim\pi_{\mathrm{ref}}\), the coordinates \(u\sim\mathcal N(0,1)\)
and \(z\sim\gamma_{S_0}\) are independent, where \(\gamma_{S_0}\) denotes
the standard Gaussian measure on \(S_0\). Since
\[
  B_R=\{x:\|P_Sx\|_2\le R\},
\]
the truncation event is
\[
  u^2+\|z\|_2^2\le R^2 .
\]

Define
\[
  I_s[F]
  :=
  \mathbb E_{x\sim\pi_{\mathrm{ref}}}
  \left[
    \mathbf 1_{B_R}(x)e^{r^*(x)/s}F(x)
  \right].
\]
Using \(r^*(x)=\sigma^*(\langle\theta^*,x\rangle)=\sigma^*(u)\), we get
\[
  I_s[F]
  =
  \int_{\mathbb R}\int_{S_0}
  \mathbf 1_{\{u^2+\|z\|_2^2\le R^2\}}
  e^{\sigma^*(u)/s}
  F(u\theta^*+z)
  \phi(u)\,d\gamma_{S_0}(z)\,du .
\]
The indicator implies \(|u|\le R\). Choose disjoint neighborhoods \(U_j\)
of the global maximizers of \(\sigma^*\) that intersect \([-R,R]\). For
each \(i\in I_{\max}\), choose \(U_i\) small enough that
\(U_i\subset(-R,R)\), which is possible because \(R>|u_i|\).

Let \(I_s^{\mathrm{off}}[F]\) be the part of \(I_s[F]\) with
\(u\notin\cup_j U_j\). By continuity of \(\sigma^*\) on the compact set
\([-R,R]\setminus\cup_j U_j\), there is \(\eta>0\) such that
\(\sigma^*(u)\le B_*-\eta\) on this set. Hence
\[
  |I_s^{\mathrm{off}}[F]|
  \le
  \|F\|_{L^\infty(B_R)}e^{(B_*-\eta)/s},
\]
and therefore
\[
  e^{-B_*/s}s^{-\alpha}|I_s^{\mathrm{off}}[F]|
  \to
  0 .
\]

Now fix \(i\in I_{\max}\), and let \(I_{s,i}^{\mathrm{loc}}[F]\) be the
part of \(I_s[F]\) with \(u\in U_i\). Set \(u=u_i+s^\alpha t\). Then
\[
  \frac{\sigma^*(u_i+s^\alpha t)-B_*}{s}
  \to
  -c_i t^{2p_{\max}}
\]
for every fixed \(t\). After this change of variables,
\[
\begin{aligned}
  e^{-B_*/s}s^{-\alpha}I_{s,i}^{\mathrm{loc}}[F]
  =
  \int_{T_{s,i}}\int_{S_0}
  &\mathbf 1_{\{(u_i+s^\alpha t)^2+\|z\|_2^2\le R^2\}}
  e^{(\sigma^*(u_i+s^\alpha t)-B_*)/s}
  \\
  &\times
  F((u_i+s^\alpha t)\theta^*+z)
  \phi(u_i+s^\alpha t)\,d\gamma_{S_0}(z)\,dt,
\end{aligned}
\]
where \(T_{s,i}:=\{t:u_i+s^\alpha t\in U_i\}\).

After shrinking \(U_i\) if necessary, the Taylor expansion at \(u_i\)
implies
\[
  \sigma^*(u_i+h)
  \le
  B_*-\frac{c_i}{2}h^{2p_{\max}}
\]
for \(u_i+h\in U_i\). Thus the absolute value of the last integrand is
bounded by
\[
  \|F\|_{L^\infty(B_R)}
  e^{-(c_i/2)t^{2p_{\max}}},
\]
which is integrable with respect to \(dt\,d\gamma_{S_0}(z)\).

For \(\gamma_{S_0}\)-almost every \(z\), the point \(z\) is not on the
boundary
\[
  \|z\|_2^2=R^2-u_i^2 .
\]
Hence
\[
  \mathbf 1_{\{(u_i+s^\alpha t)^2+\|z\|_2^2\le R^2\}}
  \to
  \mathbf 1_{\{u_i^2+\|z\|_2^2\le R^2\}} .
\]
The continuity of \(F\) gives
\[
  F((u_i+s^\alpha t)\theta^*+z)
  \to
  F(u_i\theta^*+z).
\]
Dominated convergence gives
\[
  e^{-B_*/s}s^{-\alpha}I_{s,i}^{\mathrm{loc}}[F]
  \to
  \mathcal A_{R,i}[F],
\]
where
\[
\begin{aligned}
  \mathcal A_{R,i}[F]
  :=
  &\phi(u_i)
  \left(
    \int_{\mathbb R}e^{-c_i t^{2p_{\max}}}\,dt
  \right)
  \\
  &\times
  \int_{S_0}
  \mathbf 1_{\{u_i^2+\|z\|_2^2\le R^2\}}
  F(u_i\theta^*+z)\,d\gamma_{S_0}(z).
\end{aligned}
\]

If \(u_j\) is a global maximizer with \(j\notin I_{\max}\), then
\(p_j<p_{\max}\). The same local argument with the scale
\(s^{1/(2p_j)}\) gives
\[
  I_{s,j}^{\mathrm{loc}}[F]
  =
  O\left(e^{B_*/s}s^{1/(2p_j)}\right)
  =
  o\left(e^{B_*/s}s^\alpha\right).
\]
Combining the off-neighborhood estimate, the dominant local limits, and
the non-dominant local bounds yields
\[
  I_s[F]
  =
  e^{B_*/s}s^\alpha
  \left(
    \sum_{i\in I_{\max}}\mathcal A_{R,i}[F]+o(1)
  \right)
  =
  e^{B_*/s}s^\alpha
  \left(
    \mathcal A_R[F]+o(1)
  \right).
\]

Taking \(F\equiv1\) gives the formula for \(Z_{s,R}\), and division gives
\[
  \mathbb E_{x\sim\pi_{s,R}}[F(x)]
  \to
  \mu_{0,R}[F].
\]
The uniformly convergent version follows by writing
\[
  \left|
    \mathbb E_{x\sim\pi_{s,R}}[F_s(x)]
    -
    \mathbb E_{x\sim\pi_{s,R}}[F_0(x)]
  \right|
  \le
  \|F_s-F_0\|_{L^\infty(B_R)}
\]
and applying the fixed-\(F\) result to \(F_0\).
\end{proof}

\begin{lemma}[Low-temperature normalized label-weighted moments]
\label{lem:label-weighted-moment-ratios-low-temp}
Recall the limiting functional $\mu_{0,R}$ from
Lemma~\ref{lem:truncated-feature-laplace}. Define
\[
  \Psi_{2,R,N}:=
  \mu_{0,R}[\|\psi_N\|_2^2],
  \qquad
  \Psi_{4,R,N}:=
  \mu_{0,R}[\|\psi_N\|_2^4].
\]
With $M_{2,\mathrm{wt},R}$, $M_{4,\mathrm{wt},R}$, and
$G_{4,\mathrm{wt},R}$ defined in Appendix~\ref{app:weighted-ridge-learning}, as
${\beta_{2}}\downarrow0$,
\[
  Z_{{\beta_{2}},R}
  =
  e^{B_*/{\beta_{2}}}{\beta_{2}}^\alpha(A_R+o(1)),
\]
\[
  \frac{Z_{{\beta_{2}}/2,R}}{Z_{{\beta_{2}},R}}
  =
  2^{-\alpha}e^{B_*/{\beta_{2}}}(1+o(1)).
\]
Since $M_{*,R}=B_*$,
\[
  \frac{e^{(M_{*,R}+\tau)/{\beta_{2}}}}{Z_{{\beta_{2}},R}Z_\zeta({\beta_{2}})}
  =
  \frac{2\tau}{A_R}{\beta_{2}}^{-(\alpha+1)}(1+o(1)).
\]
Moreover,
\[
  \frac{M_{2,\mathrm{wt},R}}{Z_{{\beta_{2}},R}Z_\zeta({\beta_{2}})}
  =
  2^{-\alpha-2}
  \Psi_{2,R,N}
  {\beta_{2}}^2e^{L_*/{\beta_{2}}}
  (1+o(1)),
\]
\[
  \frac{M_{4,\mathrm{wt},R}}{Z_{{\beta_{2}},R}Z_\zeta({\beta_{2}})}
  =
  2^{-\alpha-1}
  \Psi_{4,R,N}
  e^{L_*/{\beta_{2}}}
  (1+o(1)),
\]
and
\[
  \frac{G_{4,\mathrm{wt},R}}{Z_{{\beta_{2}},R}Z_\zeta({\beta_{2}})}
  =
  O\left(e^{L_*/{\beta_{2}}}\right).
\]
If $B_*+\tau\ne0$, the last display can be sharpened to
\[
  \frac{G_{4,\mathrm{wt},R}}{Z_{{\beta_{2}},R}Z_\zeta({\beta_{2}})}
  =
  2^{-\alpha-1}
  |B_*+\tau|^4
  e^{L_*/{\beta_{2}}}
  (1+o(1)).
\]
\end{lemma}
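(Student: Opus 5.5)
The plan is to factor each moment into an $x$-part and a $\zeta$-part and then treat the two factors separately. The $\zeta$-part is handled exactly by Lemma~\ref{lem:noise-exact-formulas}. The $x$-part is handled by the truncated Laplace asymptotics of Lemma~\ref{lem:truncated-feature-laplace}, used at temperatures $s={\beta_{2}}$ and $s={\beta_{2}}/2$.

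The first display, $Z_{{\beta_{2}},R}=e^{B_*/{\beta_{2}}}{\beta_{2}}^\alpha(A_R+o(1))$, is Lemma~\ref{lem:truncated-feature-laplace} with $F\equiv1$ and $s={\beta_{2}}$. Applying the same lemma at $s={\beta_{2}}/2$ gives $Z_{{\beta_{2}}/2,R}=e^{2B_*/{\beta_{2}}}({\beta_{2}}/2)^\alpha(A_R+o(1))$. Dividing the two yields $2^{-\alpha}e^{B_*/{\beta_{2}}}(1+o(1))$, since $A_R>0$.

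For the comparator ratio, note $M_{*,R}=\sup_{B_R}r^*=B_*$, because $R>|u_i|$ places a maximizer inside $B_R$. Combined with $Z_\zeta({\beta_{2}})=\frac{{\beta_{2}}}{2\tau}e^{\tau/{\beta_{2}}}(1+o(1))$, the exponentials cancel and we are left with $\frac{2\tau}{A_R}{\beta_{2}}^{-(\alpha+1)}$.

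For the three weighted moments, I will use that $x$ and $\zeta$ are independent and the weight factorizes as $e^{2(r^*+\zeta)/{\beta_{2}}}=e^{r^*/({\beta_{2}}/2)}e^{\zeta/({\beta_{2}}/2)}$. Each moment then splits into a product. For example,
\[
M_{2,\mathrm{wt},R}=Z_{{\beta_{2}}/2,R}\,\mathbb E_{\pi_{{\beta_{2}}/2,R}}[\|\psi_N\|_2^2]\cdot Z_\zeta({\beta_{2}}/2)\,V^{(2)}_{\zeta,{\beta_{2}}}.
\]
The $x$-expectation converges to $\Psi_{2,R,N}$ by Lemma~\ref{lem:truncated-feature-laplace}. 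This uses that $\|\psi_N\|_2^2$ is bounded and continuous on $B_R$ and depends only on $P_Sx$, which holds because $\psi_N$ is a polynomial in $P_Sx$.

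Dividing by $Z_{{\beta_{2}},R}Z_\zeta({\beta_{2}})$ assembles three factors. The $x$-normalizers contribute $2^{-\alpha}e^{B_*/{\beta_{2}}}$. The noise normalizers contribute $Z_\zeta({\beta_{2}}/2)/Z_\zeta({\beta_{2}})=\tfrac12e^{\tau/{\beta_{2}}}(1+o(1))$. The centred noise moment contributes $V^{(2)}_{\zeta,{\beta_{2}}}={\beta_{2}}^2/2\,(1+o(1))$. The product is $2^{-\alpha-2}\Psi_{2,R,N}{\beta_{2}}^2e^{L_*/{\beta_{2}}}(1+o(1))$.

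$M_{4,\mathrm{wt},R}$ is identical with no noise moment, giving $2^{-\alpha-1}\Psi_{4,R,N}e^{L_*/{\beta_{2}}}$. One caveat applies here and in the previous step: converting limits to the $(1+o(1))$ form requires $\Psi_{2,R,N},\Psi_{4,R,N}>0$. I will argue this from the fact that $\|\psi_N\|_2^2$ is a nonnegative polynomial that is not identically zero on the support of $\mu_{0,R}$, which has positive $\gamma_{S_0}$-mass. Otherwise the statement should be read as $o(e^{L_*/{\beta_{2}}})$-corrected.

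$G_{4,\mathrm{wt},R}$ is the step I expect to be the main obstacle, because its integrand involves the ${\beta_{2}}$-dependent target $r^*_{\mathrm{lbl},{\beta_{2}}}=r^*+m_{\zeta,{\beta_{2}}}$. The $O(e^{L_*/{\beta_{2}}})$ bound is immediate: $|r^*+m_{\zeta,{\beta_{2}}}|^4$ is uniformly bounded on $B_R$ because $|m_{\zeta,{\beta_{2}}}|\le\tau$, and the same factorization then applies.

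For the sharp constant, I will use the uniform-convergence clause of Lemma~\ref{lem:truncated-feature-laplace}. Set $F_{\beta_{2}}:=|r^*+m_{\zeta,{\beta_{2}}}|^4$. Since $m_{\zeta,{\beta_{2}}}\to\tau$, we have $F_{\beta_{2}}\to F_0:=|r^*+\tau|^4$ uniformly on $B_R$. This gives $\mathbb E_{\pi_{{\beta_{2}}/2,R}}[F_{\beta_{2}}]\to\mu_{0,R}[F_0]$. Because $\mu_{0,R}$ is concentrated on $\{\langle\theta^*,x\rangle\in\{u_i\}_{i\in I_{\max}}\}$, where $r^*=B_*$, this limit equals $|B_*+\tau|^4$. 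That value is nonzero exactly when $B_*+\tau\ne0$, which yields the sharpened formula with constant $2^{-\alpha-1}|B_*+\tau|^4$.
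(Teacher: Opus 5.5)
Your proposal is correct and follows the paper's proof essentially step for step. It uses the same factorization $e^{2(r^*+\zeta)/\beta_2}=e^{r^*/(\beta_2/2)}e^{\zeta/(\beta_2/2)}$, Lemma~\ref{lem:truncated-feature-laplace} at $s=\beta_2$ and $s=\beta_2/2$ (including its uniform-convergence clause for $F_{\beta_2}=|r^*+m_{\zeta,\beta_2}|^4$), and the exact noise formulas of Lemma~\ref{lem:noise-exact-formulas}. Your remark that the $(1+o(1))$ form of the $M_{2,\mathrm{wt},R}$ and $M_{4,\mathrm{wt},R}$ ratios needs $\Psi_{2,R,N},\Psi_{4,R,N}>0$ is a fair point that the paper leaves implicit.
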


\begin{proof}
First reduce the weighted moments to normalized expectations. The random variables
$x\sim\pi_{\mathrm{ref}}$ and $\zeta\sim\mathrm{Unif}[-\tau,\tau]$ are
independent, and
\[
  e^{2(r^*(x)+\zeta)/{\beta_{2}}}
  =
  e^{r^*(x)/({\beta_{2}}/2)}e^{\zeta/({\beta_{2}}/2)}.
\]
Thus each unnormalized weighted moment splits into an $x$-integral normalized
by $Z_{{\beta_{2}}/2,R}$ and a noise integral normalized by $Z_\zeta({\beta_{2}}/2)$. After
division by $Z_{{\beta_{2}},R}Z_\zeta({\beta_{2}})$,
\[
  \frac{M_{2,\mathrm{wt},R}}{Z_{{\beta_{2}},R}Z_\zeta({\beta_{2}})}
  =
  \frac{Z_{{\beta_{2}}/2,R}}{Z_{{\beta_{2}},R}}
  \frac{Z_\zeta({\beta_{2}}/2)}{Z_\zeta({\beta_{2}})}
  V_{\zeta,{\beta_{2}}}^{(2)}
  \mathbb E_{x\sim\pi_{{\beta_{2}}/2,R}}
  [\|\psi_N(x)\|_2^2],
\]
\[
  \frac{M_{4,\mathrm{wt},R}}{Z_{{\beta_{2}},R}Z_\zeta({\beta_{2}})}
  =
  \frac{Z_{{\beta_{2}}/2,R}}{Z_{{\beta_{2}},R}}
  \frac{Z_\zeta({\beta_{2}}/2)}{Z_\zeta({\beta_{2}})}
  \mathbb E_{x\sim\pi_{{\beta_{2}}/2,R}}
  [\|\psi_N(x)\|_2^4],
\]
and
\[
  \frac{G_{4,\mathrm{wt},R}}{Z_{{\beta_{2}},R}Z_\zeta({\beta_{2}})}
  =
  \frac{Z_{{\beta_{2}}/2,R}}{Z_{{\beta_{2}},R}}
  \frac{Z_\zeta({\beta_{2}}/2)}{Z_\zeta({\beta_{2}})}
  \mathbb E_{x\sim\pi_{{\beta_{2}}/2,R}}
  [|r^*(x)+m_{\zeta,{\beta_{2}}}|^4].
\]
Here the noise factor in the first display is
\[
  \mathbb E_{\zeta\sim\nu_{\zeta,{\beta_{2}}/2}}
  [(\zeta-m_{\zeta,{\beta_{2}}})^2]
  =
  V_{\zeta,{\beta_{2}}}^{(2)}.
\]
We now take the low-temperature limit. The asymptotic for $Z_{{\beta_{2}},R}$ is
Lemma~\ref{lem:truncated-feature-laplace} with $F\equiv1$. Applying the same
lemma at temperature ${\beta_{2}}/2$ gives
\[
  Z_{{\beta_{2}}/2,R}
  =
  e^{2B_*/{\beta_{2}}}({\beta_{2}}/2)^\alpha(A_R+o(1)),
\]
which implies the displayed ratio $Z_{{\beta_{2}}/2,R}/Z_{{\beta_{2}},R}$. Since \(R\) is taken sufficiently large, \(R>|u_i|\) for all
\(i\in I_{\max}\). Hence the truncated ball contains a global maximizer of
\(r^*\), and \(M_{*,R}=B_*\).
Combining the asymptotic for
$Z_{{\beta_{2}},R}$ with Lemma~\ref{lem:noise-exact-formulas} gives the
normalization ratio involving $e^{(M_{*,R}+\tau)/{\beta_{2}}}$.

For the weighted moments, use the normalized identities proved above.
Lemma~\ref{lem:truncated-feature-laplace} gives
\[
  \mathbb E_{x\sim\pi_{{\beta_{2}}/2,R}}[\|\psi_N(x)\|_2^2]
  \to
  \Psi_{2,R,N},
  \qquad
  \mathbb E_{x\sim\pi_{{\beta_{2}}/2,R}}[\|\psi_N(x)\|_2^4]
  \to
  \Psi_{4,R,N}.
\]
The noise formulas give
\[
  \frac{Z_\zeta({\beta_{2}}/2)}{Z_\zeta({\beta_{2}})}
  =
  \frac12 e^{\tau/{\beta_{2}}}(1+o(1)),
  \qquad
  V_{\zeta,{\beta_{2}}}^{(2)}
  =
  \frac{{\beta_{2}}^2}{2}(1+o(1)).
\]
This proves the $M_{2,\mathrm{wt},R}$ and $M_{4,\mathrm{wt},R}$ formulas.
Finally,
\[
  m_{\zeta,{\beta_{2}}}\to\tau,
\]
so the functions
\[
  F_{\beta_{2}}(x):=|r^*(x)+m_{\zeta,{\beta_{2}}}|^4
\]
converge uniformly on $B_R$ to $F_0(x)=|r^*(x)+\tau|^4$. The uniform version of
Lemma~\ref{lem:truncated-feature-laplace} gives
\[
  \mathbb E_{x\sim\pi_{{\beta_{2}}/2,R}}[|r^*(x)+m_{\zeta,{\beta_{2}}}|^4]
  \to
  \mu_{0,R}[|r^*+\tau|^4].
\]
Under $\mu_{0,R}$, the coordinate $\langle\theta^*,x\rangle$ belongs to
$\{u_i:i\in I_{\max}\}$ almost surely. Since
$r^*(x)=\sigma^*(\langle\theta^*,x\rangle)$, this implies
$r^*(x)=B_*$ $\mu_{0,R}$-almost surely. Therefore the last limit is
$|B_*+\tau|^4$. This gives the sharpened asymptotic when $B_*+\tau\ne0$.
Without the nondegeneracy condition, the same convergence only gives that this
expectation is bounded, which yields the displayed upper bound.
\end{proof}

\begin{proof}[Derivation of the label-weighted learning bound]
Start from Lemma~\ref{lem:label-weighted-truncated-ridge}:
\[
\begin{aligned}
  \inf_{c\in\mathbb R}
  \|\hat r_R-r^*-c\|_{L^2(\pi_{{\beta_{2}},R})}^2
  \lesssim
  \frac{1}{Z_{{\beta_{2}},R}Z_\zeta({\beta_{2}})}
  \Bigg(
  &e^{(M_{*,R}+\tau)/{\beta_{2}}}(N^{-2}+\varepsilon^2)
  \\
  &+
  \frac{M_{2,\mathrm{wt},R}+M_{4,\mathrm{wt},R}+G_{4,\mathrm{wt},R}}
  {T_2\lambda\delta_0}
  \\
  &+
  \sqrt{\frac{G_{4,\mathrm{wt},R}}{T_2\delta_0}}
  +
  \lambda N
  \Bigg).
\end{aligned}
\]
The approximation term is controlled by
\[
  \frac{e^{(M_{*,R}+\tau)/{\beta_{2}}}}{Z_{{\beta_{2}},R}Z_\zeta({\beta_{2}})}
  \lesssim
  {\beta_{2}}^{-(\alpha+1)}.
\]
The moment terms are controlled by
Lemma~\ref{lem:label-weighted-moment-ratios-low-temp}:
\[
  \frac{M_{2,\mathrm{wt},R}}{Z_{{\beta_{2}},R}Z_\zeta({\beta_{2}})}
  \lesssim
  {\beta_{2}}^2e^{L_*/{\beta_{2}}},
  \qquad
  \frac{M_{4,\mathrm{wt},R}+G_{4,\mathrm{wt},R}}{Z_{{\beta_{2}},R}Z_\zeta({\beta_{2}})}
  \lesssim
  e^{L_*/{\beta_{2}}}.
\]
For the square-root term, the same lemma implies
\[
  \frac{G_{4,\mathrm{wt},R}}{Z_{{\beta_{2}},R}Z_\zeta({\beta_{2}})}
  \lesssim
  e^{L_*/{\beta_{2}}},
  \qquad
  Z_{{\beta_{2}},R}Z_\zeta({\beta_{2}})
  \lesssim
  e^{L_*/{\beta_{2}}}{\beta_{2}}^{\alpha+1}.
\]
Multiplying these two bounds gives
\[
  G_{4,\mathrm{wt},R}
  \lesssim
  e^{2L_*/{\beta_{2}}}{\beta_{2}}^{\alpha+1}.
\]
The lower bound
\[
  Z_{{\beta_{2}},R}Z_\zeta({\beta_{2}})
  \gtrsim
  e^{L_*/{\beta_{2}}}{\beta_{2}}^{\alpha+1}
\]
then gives
\[
  \frac{1}{Z_{{\beta_{2}},R}Z_\zeta({\beta_{2}})}
  \sqrt{\frac{G_{4,\mathrm{wt},R}}{T_2\delta_0}}
  \lesssim
  \frac{{\beta_{2}}^{-(\alpha+1)/2}}{\sqrt{T_2\delta_0}}.
\]
Finally,
\[
  \frac{\lambda N}{Z_{{\beta_{2}},R}Z_\zeta({\beta_{2}})}
  \lesssim
  \lambda N{\beta_{2}}^{-(\alpha+1)}e^{-L_*/{\beta_{2}}}.
\]
Combining these estimates gives, for every $\lambda$ satisfying the lower bound
in Lemma~\ref{lem:label-weighted-truncated-ridge},
\[
\begin{aligned}
  \inf_{c\in\mathbb R}
  \|\hat r_R-r^*-c\|_{L^2(\pi_{{\beta_{2}},R})}^2
  \lesssim\;&
  C_{\mathrm{app},R}
  {\beta_{2}}^{-(\alpha+1)}(N^{-2}+\varepsilon^2)
  \\
  &+
  C_{\mathrm{mom},R,N}
  \frac{e^{L_*/{\beta_{2}}}}{T_2\lambda\delta_0}
  (1+{\beta_{2}}^2)
  \\
  &+
  C_{\sqrt G,R}
  \frac{{\beta_{2}}^{-(\alpha+1)/2}}{\sqrt{T_2\delta_0}}
  +
  C_{\lambda,R}
  \lambda N{\beta_{2}}^{-(\alpha+1)}e^{-L_*/{\beta_{2}}}.
\end{aligned}
\]
The lower bound on $\lambda$ required by
Lemma~\ref{lem:label-weighted-truncated-ridge} is
$\lambda\ge C\sqrt{M_{4,\mathrm{wt},R}/(T_2\delta_0)}$. By
Lemma~\ref{lem:label-weighted-moment-ratios-low-temp},
\[
  M_{4,\mathrm{wt},R}
  =
  \frac{M_{4,\mathrm{wt},R}}{Z_{{\beta_{2}},R}Z_\zeta({\beta_{2}})}
  Z_{{\beta_{2}},R}Z_\zeta({\beta_{2}})
  \lesssim
  {\beta_{2}}^{\alpha+1}e^{2L_*/{\beta_{2}}},
\]
and therefore it is sufficient that
\[
  \lambda
  \ge
  C_{\lambda,\mathrm{low},R,N}
  \frac{
    {\beta_{2}}^{(\alpha+1)/2}e^{L_*/{\beta_{2}}}
  }{
    \sqrt{T_2\delta_0}
  }.
\]
We choose the regularization parameter at the scale
\begin{equation}
\label{eq:label-weighted-lambda-choice}
  \lambda
  =
  C_{\lambda,\mathrm{choice},R,N}
    {\beta_{2}}^{(\alpha+1)/2}e^{L_*/{\beta_{2}}}
    \left(T_2\delta_0\right)^{-1/2}.
\end{equation}
where \(C_{\lambda,\mathrm{choice},R,N}\) is sufficiently large and independent
of \({\beta_{2}},T_2,\delta_0\). This choice satisfies the lower bound above.
Substituting this value of \(\lambda\) into the intermediate prediction-error
bound gives
\[
  \inf_{c\in\mathbb R}
  \|\hat r_R-r^*-c\|_{L^2(\pi_{{\beta_{2}},R})}^2
  \lesssim
  C_{\mathrm{app},R}
  {\beta_{2}}^{-(\alpha+1)}(N^{-2}+\varepsilon^2)
  +
  C_{\mathrm{stat},R,N}
  \frac{{\beta_{2}}^{-(\alpha+1)/2}}{\sqrt{T_2\delta_0}}.
\]
Finally, Lemma~\ref{lem:learning-bridge} gives
\[
  |T_{\mathrm{learn}}(R)|
  \le
  \frac{2M_R\mathcal D_{\mathrm{lbl},R}}{{\beta_{2}}}
  \left(
    \inf_{c\in\mathbb R}
    \|\hat r_R-r^*-c\|_{L^2(\pi_{{\beta_{2}},R})}^2
  \right)^{1/2}.
\]
Applying $\sqrt{a+b}\le\sqrt a+\sqrt b$ to the prediction-error bound yields
the stated learning bound.
\end{proof}

\section{Admissible deployment temperatures}
\label{app:temperature-selection}

This appendix makes explicit the admissible temperature sets referenced in
Section~\ref{subsec:regret_bounds_projected_truncation}. Write
\[
  \mathcal R_R^{\mathrm{lbl}}:=\mathcal R_R,
  \qquad
  \mathcal R_R^{\mathrm{surr}}:=\mathcal R_R^{(0)},
  \qquad
  \rho_N:=N^{-1}+\varepsilon.
\]
For label weighting, define
\[
  \mathcal L_{\mathrm{lbl}}(\beta)
  :=
  \beta^{-(\alpha+3)/2}\rho_N
  +
  \beta^{-(\alpha+5)/4}(T_2\delta_0)^{-1/4}.
\]
For surrogate weighting, assume that for some \(\alpha_0\ge0\),
\[
  \mathcal C_{0,R}(\beta)\lesssim \beta^{-\alpha_0}
\]
for all sufficiently small \(\beta>0\). Lemma~\ref{lem:surrogate-near-maximizer-mass}
gives the valid worst-case choice \(\alpha_0=d_S\), while
Remark~\ref{rem:sharper_exponents} explains when smaller exponents are
available. Under this polynomial control, define
\[
  \mathcal L_{\mathrm{surr}}(\beta)
  :=
  \beta^{-1-\alpha_0/2}\rho_N
  +
  \beta^{-1-\alpha_0/4}(T_2\delta_0)^{-1/4}.
\]

We assume a deterministic coverage envelope
\(\overline{\mathcal D}_{w,R}:(0,\overline\beta_2]\to[1,\infty]\) such
that
\[
  \mathcal D_{w,R}(\beta)
  \le
  \overline{\mathcal D}_{w,R}(\beta)
\]
for every considered deployment temperature \(\beta\). As in standard
coverage or concentrability assumptions, this envelope is treated as a
problem-dependent quantity rather than estimated from the same
ridge-fitting samples.

For a tolerance \(\eta>0\), define the admissible temperature set
\[
  \mathcal S_w^{\mathrm{adm}}(\eta;N,T_2,\varepsilon,\delta_0)
  :=
  \left\{
    \beta\in(0,\overline\beta_2]:
    M_R\overline{\mathcal D}_{w,R}(\beta)\mathcal L_w(\beta)
    \le \eta
  \right\}.
\]
Here \(w\in\{\mathrm{lbl},\mathrm{surr}\}\), with
\(\mathcal L_w=\mathcal L_{\mathrm{lbl}}\) or
\(\mathcal L_{\mathrm{surr}}\), respectively.

\begin{corollary}[Admissible deployment temperature]
\label{cor:admissible-deployment-temperature}
Fix \(w\in\{\mathrm{lbl},\mathrm{surr}\}\) and \(\eta>0\). On the event of
Theorem~\ref{thm:final-label-weighted-regret} for \(w=\mathrm{lbl}\), or
Corollary~\ref{cor:surrogate-weighted-projected-truncation} for
\(w=\mathrm{surr}\), every
\(\beta_2\in\mathcal S_w^{\mathrm{adm}}(\eta;N,T_2,\varepsilon,\delta_0)\)
satisfies
\[
  |\mathcal R_R^w|
  \lesssim
  \Gamma_{R,\overline\beta_2}(\beta_2,\beta^\ast)+\eta.
\]
Consequently, if
\[
  \beta_{2,\eta}^w
  \in
  \arg\min_{\beta\in
  \mathcal S_w^{\mathrm{adm}}(\eta;N,T_2,\varepsilon,\delta_0)}
  \Gamma_{R,\overline\beta_2}(\beta,\beta^\ast),
\]
then
\[
  |\mathcal R_R^w|
  \lesssim
  \inf_{\beta\in
  \mathcal S_w^{\mathrm{adm}}(\eta;N,T_2,\varepsilon,\delta_0)}
  \Gamma_{R,\overline\beta_2}(\beta,\beta^\ast)
  +
  \eta.
\]
\end{corollary}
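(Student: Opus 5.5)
The plan is to read the corollary off the already-established value-gap bounds by substituting the envelope and then using the membership condition defining the admissible set. Fix \(w\) and work on the event of Theorem~\ref{thm:final-label-weighted-regret} (for \(w=\mathrm{lbl}\)) or Corollary~\ref{cor:surrogate-weighted-projected-truncation} (for \(w=\mathrm{surr}\), under the assumed polynomial control \(\mathcal C_{0,R}(\beta)\lesssim\beta^{-\alpha_0}\)). On that event,
\[
|\mathcal R_R^w|\lesssim \Gamma_{R,\overline\beta_2}(\beta_2,\beta^\ast)+M_R\mathcal D_{w,R}\,\widetilde{\mathcal L}_w(\beta_2),
\]
where \(\widetilde{\mathcal L}_w\) is the bracketed learning factor. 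The first step is to check that \(\widetilde{\mathcal L}_w\lesssim\mathcal L_w\).

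For the surrogate case this check is immediate. Substituting \(\mathcal C_{0,R}^{1/2}\lesssim\beta^{-\alpha_0/2}\) and \(\mathcal C_{0,R}^{1/4}\lesssim\beta^{-\alpha_0/4}\) into Theorem~\ref{thm:final-surrogate-weighted-regret} reproduces \(\mathcal L_{\mathrm{surr}}\) exactly.

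For the label case the exponent of the sample term needs care. Theorem~\ref{thm:final-label-weighted-regret} displays \(\beta^{-1-(\alpha+1)/4}=\beta^{-(\alpha+5)/4}\), which matches \(\mathcal L_{\mathrm{lbl}}\). The approximation term \(\beta^{-1}\cdot\beta^{-(\alpha+1)/2}=\beta^{-(\alpha+3)/2}\) also matches. So \(\widetilde{\mathcal L}_{\mathrm{lbl}}=\mathcal L_{\mathrm{lbl}}\) up to constants.

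The second step is monotone replacement. Since \(\mathcal D_{w,R}(\beta_2)\le\overline{\mathcal D}_{w,R}(\beta_2)\) and all factors are nonnegative, the learning term is at most \(M_R\overline{\mathcal D}_{w,R}(\beta_2)\mathcal L_w(\beta_2)\). Membership \(\beta_2\in\mathcal S_w^{\mathrm{adm}}(\eta)\) bounds this by \(\eta\), which gives the first display.

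For the second display, apply the first display at \(\beta_2=\beta_{2,\eta}^w\). Its \(\Gamma\)-value equals the infimum over the admissible set by definition of the argmin. The hidden constants do not depend on \(\beta\in(0,\overline\beta_2]\), since \(\lesssim\) in Section~\ref{subsec:regret_bounds_projected_truncation} hides only fixed problem parameters.

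The main obstacle is not algebraic but logical: the high-probability event is stated for a fixed \(\beta_2\), and \(\beta_{2,\eta}^w\) must not depend on the ridge sample. This is exactly why the envelope \(\overline{\mathcal D}_{w,R}\) is assumed deterministic, not estimated from data. With a deterministic envelope, \(\mathcal S_w^{\mathrm{adm}}\) and \(\beta_{2,\eta}^w\) are deterministic, so the event for that single temperature applies with probability \(1-4\delta_0\).

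There are two further technical points. First, the minimizer may not exist. Existence would follow from continuity of \(\Gamma\) together with closedness of the admissible set, which requires continuity of the envelope. Otherwise I would choose a near-minimizer within any slack, which the \(\lesssim\) absorbs. Second, if the corollary's \(\lesssim\) hides a \(\beta\)-independent constant, I will note that uniformity in \(\beta\le\overline\beta_2\) of the constants in Proposition~\ref{prop:low-temp-temperature-mismatch} and in the ridge lemmas is what makes this valid.
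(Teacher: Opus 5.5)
Your proposal is correct and follows essentially the same route as the paper: membership in the admissible set gives \(M_R\overline{\mathcal D}_{w,R}(\beta_2)\mathcal L_w(\beta_2)\le\eta\), this bounds the learning term in the value-gap bound, and the second display follows by evaluating at the minimizer of \(\Gamma_{R,\overline\beta_2}\). Your extra care fills in points the paper's short proof leaves implicit. First, you route the surrogate case through Theorem~\ref{thm:final-surrogate-weighted-regret} with the assumed \(\beta^{-\alpha_0}\) control; this is actually needed when \(\alpha_0<d_S\), since the \(d_S\)-learning term of Corollary~\ref{cor:surrogate-weighted-projected-truncation} is then larger than \(\mathcal L_{\mathrm{surr}}\). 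Second, you note that the deterministic envelope makes \(\beta_{2,\eta}^w\) sample-independent, so the fixed-temperature event applies to it.
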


\begin{proof}
By definition of the admissible temperature set,
\[
  M_R\overline{\mathcal D}_{w,R}(\beta_2)\mathcal L_w(\beta_2)
  \le
  \eta.
\]
For \(w=\mathrm{lbl}\), this upper bounds the learning term in
Theorem~\ref{thm:final-label-weighted-regret}; for
\(w=\mathrm{surr}\), it upper bounds the learning term in
Corollary~\ref{cor:surrogate-weighted-projected-truncation}. Substituting
this into the corresponding bound gives the first claim. The second claim
follows by minimizing the remaining
\(\Gamma_{R,\overline\beta_2}\) term over the admissible temperature set.
\end{proof}

\begin{remark}[Resource dependence of the admissible temperature set]
\label{rem:admissible-set-resource-dependence}
Fix the coverage envelope \(\overline{\mathcal D}_{w,R}\), the tolerance
\(\eta\), the failure parameter \(\delta_0\), and, in the surrogate case,
the exponent \(\alpha_0\). If
\[
  N'^{-1}+\varepsilon'\le N^{-1}+\varepsilon,
  \qquad
  T_2'\ge T_2,
\]
then
\[
  \mathcal S_w^{\mathrm{adm}}(\eta;N,T_2,\varepsilon,\delta_0)
  \subseteq
  \mathcal S_w^{\mathrm{adm}}(\eta;N',T_2',\varepsilon',\delta_0).
\]
Thus, under a fixed coverage envelope, improving feature recovery or
increasing the number of second-stage samples enlarges the admissible
temperature region.

In particular, for any fixed candidate temperature
\(\beta_0\in(0,\overline\beta_2]\) with
\(\overline{\mathcal D}_{w,R}(\beta_0)<\infty\), the temperature
\(\beta_0\) belongs to
\(\mathcal S_w^{\mathrm{adm}}(\eta;N,T_2,\varepsilon,\delta_0)\) for all
sufficiently large \(N\) and \(T_2\), and sufficiently small
\(\varepsilon\).
\end{remark}



\end{document}